\documentclass{article}

\PassOptionsToPackage{table,dvipsnames}{xcolor}
\usepackage[preprint]{neurips_2026}

\usepackage[utf8]{inputenc}
\usepackage[T1]{fontenc}
\usepackage{hyperref}
\usepackage{url}
\usepackage{booktabs}
\usepackage{amsfonts}
\usepackage{amsmath}
\usepackage{amssymb}
\usepackage{nicefrac}
\usepackage{microtype}
\usepackage{xcolor}
\usepackage{graphicx}
\usepackage{algorithm}
\usepackage{wrapfig}
\usepackage{subcaption}
\usepackage{amsthm}

\newtheorem{proposition}{Proposition}
\newtheorem{theorem}[proposition]{Theorem}

\theoremstyle{remark}
\newtheorem{remark}{Remark}

\newcommand{\E}{\mathbb{E}}

\usepackage{multirow}

\usepackage{makecell}
\usepackage[most]{tcolorbox}
\usepackage{algpseudocode}  % algorithmicx 的扩展，替代 algorithmic，更灵活

\usepackage{bm}
\usepackage{enumitem}
\usepackage{caption}

\usepackage{comment}
\usepackage{array}
\usepackage{lipsum}
\usepackage{pifont}% http://ctan.org/pkg/pifont
\usepackage{tabularx}
\usepackage{color,colortbl}
\definecolor{Gray}{gray}{0.9}

\usepackage{longtable}
\usepackage{fontawesome5}  % FontAwesome 5
\usepackage{etoc}
\usepackage{listings}
\lstdefinestyle{mypython}{
  language=Python,
  basicstyle=\ttfamily\tiny,
  keywordstyle=\color{blue!70!black}\bfseries,
  commentstyle=\color{teal!60!black}\itshape,
  stringstyle=\color{orange!70!black},
  showstringspaces=false,
  breaklines=true,
  breakatwhitespace=true,
  tabsize=4,
  numbers=left,
  numberstyle=\tiny,
  frame=single,
  framerule=0.3pt,
  rulecolor=\color{black!20},
  columns=fullflexible
}

\definecolor{promptbg}{HTML}{F4F4F5} % 匹配 SDPO 的浅灰色背景

\lstdefinestyle{promptstyle}{
    backgroundcolor=\color{promptbg},
    basicstyle=\ttfamily\small, % 等宽字体，字号较小以适应长文本
    breaklines=true,            % 自动换行
    breakatwhitespace=true,
    columns=fullflexible,
    keepspaces=true,            % 严格保留空格
    frame=none,                 % 无边框
    xleftmargin=4pt,
    xrightmargin=4pt,
    aboveskip=1em,
    belowskip=1em,
    showstringspaces=false
}
\definecolor{codegray}{rgb}{0.5,0.5,0.5}
\definecolor{codepurple}{rgb}{0.58,0,0.82}
\definecolor{backcolour}{rgb}{0.95,0.95,0.92}

\usepackage[capitalise]{cleveref}
\definecolor{color5}{HTML}{006795}
\hypersetup{
  colorlinks   = true, %Colours links instead of ugly boxes
  urlcolor     = blue, %Colour for external hyperlinks
  linkcolor    = color5, %Colour of internal links
  citecolor   = color5 %Colour of citations, could be ``red''
}
\usepackage[framemethod=TikZ]{mdframed}
\definecolor{UserExampleBg}{HTML}{ffffff}
\definecolor{UserExampleTitle}{HTML}{618197}
\newmdenv[
    roundcorner=5pt,
    backgroundcolor=UserExampleBg,
    linecolor=UserExampleTitle,
    outerlinewidth=0.5pt,
    frametitlebackgroundcolor=UserExampleTitle,
    frametitlefont={\bfseries\color{white}},
]{user_example}

\definecolor{darkblue}{rgb}{0, 0, 0.5}
\hypersetup{colorlinks=true, citecolor=darkblue, linkcolor=darkblue, urlcolor=darkblue}

\definecolor{TakeawayBlue}{HTML}{0000B3} % 深蓝色边框和标题背景
\definecolor{TakeawayBg}{HTML}{F4F5F8}   % 极淡的蓝灰色正文背景

\newtcolorbox{takeawaybox}[1]{
    enhanced,
    breakable,
    colback=TakeawayBg,
    colframe=TakeawayBlue,
    coltitle=white,
    fonttitle=\bfseries,
    title={#1},
    halign title=left,
    arc=4pt,           % 圆角大小
    boxrule=1.5pt,     % 边框粗细
    left=8pt, right=8pt, top=6pt, bottom=6pt,
    toptitle=4pt, bottomtitle=4pt,
    boxsep=2pt
}

\title{Prospective Hindsight: Self-Calibrating Reinforcement Learning via Prediction–Reality Gaps}

\author{%
  Jiaxin Zhang\thanks{ Correspondence to \texttt{\{jiaxin.zhang, wu.jason\}@salesforce.com}}, \ Xiangyu Peng, \ Qinglin Chen, \
  {Yu Li, \ Hiroaki Hayashi, \ Chien-Sheng Wu}\\[0.5em]
  Salesforce AI Research\\[0.3em]
}

\begin{document}

\maketitle

\begin{abstract}

Reinforcement learning for long-horizon agents relies on \emph{purely retrospective} training signals: credit is assigned only after observing environmental consequences, leaving the agent's belief at action time invisible to the gradient. We introduce \textbf{Prospective Hindsight (PH)}, a self-calibrating training principle that augments any retrospective base method with a signal derived from the gap between the agent's \emph{prospective prediction} (before feedback) and the \emph{retrospective evaluation} (after feedback). This per-rollout \emph{surprise} identifies samples where the agent's self-model is most inaccurate and amplifies their gradient contribution through a stop-gradient surprise-weighted advantage. Since the prospective predictor shares parameters with the policy, the two co-evolve, progressively shifting focus to the agent's remaining blind spots. We connect this principle to a privileged-information gap and show that minimizing the surprise residual provides a descent pathway on the agent's miscalibration rate; calibration thus emerges as a byproduct of optimization rather than from an added objective. On single-turn verifiable tasks and a multi-turn personal-agent task (under GRPO, on-policy distillation, and their combination), PH improves both task performance and calibration, with consistent gains across model scales. Notably, the dominant miscalibration mode shifts structurally between regimes, overconfident failures in single-turn, underconfident successes in multi-turn, yet the same training principle addresses both successfully.
\end{abstract}

\begin{figure*}[!h]
  \centering
\includegraphics[width=0.99\linewidth, clip]{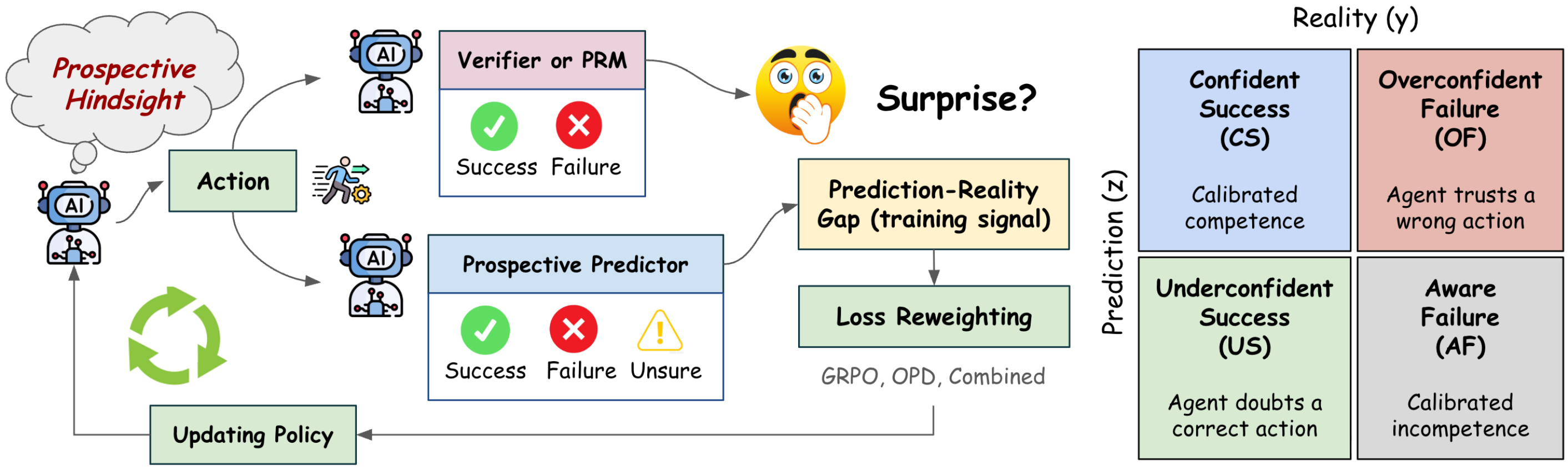}
% \vspace{-1em}
  \caption{Prospective Hindsight (PH) converts hindsight feedback into a self-calibrating training signal. PH compares an agent’s action-time prediction with the realized verifier/PRM outcome, partitions rollouts into calibrated and miscalibrated cases, and upweights surprising prediction–reality mismatches during policy updating within GRPO, on-policy distillation and their combination.}
  \label{fig:overview}
\end{figure*}

\section{Introduction}
\label{sec:intro}

LLMs increasingly act as \emph{agents}: decomposing tasks, calling tools,
writing code, conducting extended tutoring conversations, and making
intermediate commitments whose consequences unfold over many
steps~\cite{wang2026openclaw, buening2026aligning}. In these settings,
the most damaging failures are often not simple mistakes, but mistakes
made with confidence \cite{zhang2025agent-acc, zhang2025agentracer-acc,zhu2025llm-auq, zhang2026acc}. A confidently wrong agent continues down a flawed
trajectory, ignores opportunities for recovery, and triggers incorrect
downstream actions; an agent that fails while expressing uncertainty \cite{zhang2026agentic,oh2026uncertainty},
in contrast, invites a fallback, a retry, or a request for human input.
Reliable agentic learning therefore requires more than improving final
accuracy --- it requires aligning the agent's prospective belief about
its own success with the reality of its outcomes.

\textbf{The hindsight trap.}
Modern agentic RL has converged on increasingly rich \emph{retrospective}
feedback \cite{zhang2025landscape,dong2025agentic}. Process reward models score each step by inspecting the next
environmental state~\cite{wang2026openclaw, buening2026aligning}; on-policy
distillation (OPD) methods \cite{lu2025onpolicydistillation,song2026survey} construct a hindsight teacher conditioned on
post-action context unavailable to the agent at decision
time~\cite{agarwal2024policy, ye2026policy, hubotter2026reinforcement,
shenfeld2026self}. These methods differ in mechanism but share a
structural property: every learning signal is computed from quantities
observed \emph{after} the agent has acted, and the agent's prospective
belief at action time, i.e., what it expected would happen, never enters
the gradient. Two rollouts that produced the same outcome therefore
receive the same update, regardless of whether one was a confident success
and the other a lucky guess, or whether one was a failure the agent could
have anticipated and the other a failure that took the agent by surprise.
We call this structural obstruction the \emph{hindsight trap}, and it is
the reason why retrospective training reliably improves task accuracy
without resolving overconfidence~\cite{zhang2026illusion, leng2024taming,
damani2025beyond,kalai2025language,groot2024overconfidence-acc}.

\textbf{Calibration as a training signal, not a diagnostic.}
A substantial literature on LLM calibration has measured this overconfidence
and proposed inference-time fixes: verbalized confidence prompts, semantic
uncertainty estimators, post-hoc temperature
scaling~\cite{guo2017calibration, kadavath2022language-auq,
tian2023just-auq, kirchhofposition-auq}. Recent work extends these ideas
into RL by adding explicit confidence-target losses or rewriting confidence
tokens in the teacher distribution~\cite{zhang2026illusion, leng2024taming,
damani2025beyond}, but in doing so introduces a separate calibration
objective alongside the task objective; the two are reported to trade
against each other. We approach the problem from a different angle. The
gap between the agent's prospective belief and the eventual outcome is not
just a diagnostic of how well-calibrated the agent is; instead, it is a
\emph{training signal} that points to exactly the rollouts where
action-time information was insufficient.

\textbf{Why prediction--reality gaps are informative.}
The same property that makes the prospective belief invisible to
retrospective losses also makes the gap between belief and outcome
informative. By a law-of-total-variance argument, the residual variance
in the eventual outcome given action-time information decomposes into
aleatoric noise (irreducible even with privileged context) plus a
non-negative \emph{privileged-information gap}: the share of outcome
variance that no action-time predictor can resolve. This places agent RL
inside Learning Under Privileged Information (LUPI)
framework~\cite{vapnik2009new, pechyony2010theory, lopez2015unifying}.
The rollouts that materialize this gap, overconfident failures and
underconfident successes, are exactly where action-time information
falls short of post-action information. Standard hindsight-only methods
see these rollouts only through their outcome label and treat them
indistinguishably from calibrated ones; we ask what changes if training
explicitly uses the disagreement.

% We introduce \textbf{Prospective Hindsight (PH)}, a simple and general
% principle that augments any retrospective base method with a
% self-calibrating training signal derived from this gap. Before each
% rollout is scored by the verifier or PRM, we re-query the same policy with
% a different prompt to elicit a \emph{prospective prediction} of the
% eventual outcome; comparing this prediction to the retrospective
% evaluation defines a per-rollout \emph{surprise} that identifies the
% training samples where the agent's self-model is most inaccurate. PH
% amplifies the gradient contribution of these samples through a
% single-hyperparameter loss reweighting, leaving every other component of
% the base method untouched and adding zero deployment-time parameters.
% Because the predictor shares parameters with the policy, the prospective
% prediction is not an external head bolted onto the model; it co-evolves
% with the policy as training progresses, so the agent's self-model
% implicitly improves alongside the agent's behavior. This induces a
% self-calibrating loop in which the surprise signal progressively shifts
% focus to the agent's remaining blind spots and naturally vanishes once
% the prediction--reality gap closes --- without an explicit calibration
% loss being added to the optimization, and without the
% capability--calibration trade-offs reported by prior calibration-augmented
% RL recipes. Our key contributions are: 
We introduce \textbf{Prospective Hindsight (PH)}, a self-calibrating
training principle that augments any retrospective base method with a
signal derived from the prediction--reality gap. Before each rollout is
scored by the verifier or PRM, we re-query the same policy with a different
prompt to elicit a \emph{prospective prediction} of the eventual outcome;
comparing this prediction to the retrospective evaluation yields a
per-rollout \emph{surprise} that identifies the training samples where the
agent's self-model is most inaccurate. PH amplifies the gradient
contribution of these samples through a stop-gradient reweighting of the
base loss, preserving the rollout sampling distribution and the internal
structure of GRPO, OPD, or their combination. Because
the prospective predictor shares parameters with the policy rather than
being an external head bolted onto the model, the predictor co-evolves with
the policy as training progresses: the agent's self-model improves
implicitly alongside the agent's behavior, the surprise signal
progressively shifts focus to the agent's remaining blind spots, and the
reweighting diminishes as the prediction--reality gap closes. PH therefore
reduces miscalibration as a byproduct of optimization rather than through
an added calibration loss, sidestepping the capability--calibration
trade-off \cite{leng2024taming,damani2025beyond}. Our main contributions are summarized as follows:

% \vspace{-1mm}
\begin{itemize}[leftmargin=1.6em]
    \item We identify the \emph{hindsight trap} as a structural blind spot
    of retrospective RL and connect the resulting prediction--reality
    gap to a privileged-information quantity within the LUPI framework.
    \item We propose \emph{Prospective Hindsight}, a principle method that
    converts this gap into a self-calibrating training signal compatible
    with binary RL (GRPO), On-policy distillation, and their combination.
    \item We define a calibration taxonomy by pairing the prospective belief with the verifier outcome yields four
    rollout cells that decompose calibration along two
    interpretable axes during training.
    % \item Across single-turn verifiable tasks and a multi-turn
    % personal-agent setting, PH improves both task performance and
    % calibration under a single shared hyperparameter, while revealing that
    % the dominant miscalibration mode shifts structurally between regimes
    % --- overconfident failures in single-turn, underconfident successes
    % in multi-turn --- yet a single one-line rule covers both.
    \item PH improves both task performance and calibration on
    single-turn verifiable tasks  and
    on a multi-turn personal-agent task, with consistent gains across
    model scales. The dominant miscalibration
    mode shifts structurally between regimes and PH
    addresses both successfully.
\end{itemize}

\section{The Hindsight Trap in Agent RL}
\label{sec:problem}

% Current RL methods for training language model agents share a defining
% feature: every learning signal arrives \emph{after} the agent has acted. We
% call this design \emph{hindsight-only}. \S\ref{sec:agent_rl}--%
% \S\ref{sec:retro_methods} make the property precise across the base methods
% we use; \S\ref{sec:blind_spot} characterizes the resulting
% \emph{self-evaluation blind spot}; and \S\ref{sec:gap_informative} shows
% that the prediction--reality gap left by this blind spot is structurally
% informative.

\subsection{Agentic Reinforcement Learning}
\label{sec:agent_rl}

We consider an agent represented by a policy $\pi_\theta$ acting in
environments with verifiable outcomes. At each step $t \in \{1, \ldots, T\}$,
the agent observes a state $s_t$ (a query, a conversation history, or a
partial task description), produces an action $a_t \sim \pi_\theta(\cdot
\mid s_t)$, and receives a step-level outcome
\begin{equation}
    r_t \;=\; R(s_t, a_t, c_t) \;\in\; \{-1, 0, +1\}, \qquad
    y_t \;=\; \mathbf{1}[r_t > 0] \;\in\; \{0, 1\},
    \label{eq:reward}
\end{equation}
where $c_t$ denotes \emph{privileged context} that the evaluator $R$ observes
but the agent does not at action time. This covers single-step tasks
($T{=}1$, with $c_t$ a ground-truth reference) and multi-turn tasks
($T{>}1$, with $c_t{=}s_{t+1}$ the next environmental state). A trajectory
is $\tau = \{(s_t, a_t, r_t)\}_{t=1}^T$; the base methods of
\S\ref{sec:retro_methods} differ in how $r_t$ and $c_t$ shape the policy
update.

\subsection{Retrospective Training Methods}
\label{sec:retro_methods}

Current methods construct the policy update from $r_t$, optionally combined
with $c_t$. Scalar-advantage methods such as GRPO~\cite{shao2024deepseekmath}
normalize the reward within a group of rollouts sharing the same query;
token-level distillation methods such as OPD~\cite{agarwal2024policy,
ye2026policy, hubotter2026reinforcement} use $c_t$ to construct a hindsight
teacher whose action distribution is distilled into the student; combined
methods sum the two signals~\cite{wang2026openclaw}. Full advantage
formulations are given in App.~\ref{app:base_methods}.
All three families share the same hindsight-only property: the learning
signal is computed entirely from post-action quantities ($r_t$, $c_t$, or
both), and the agent's prospective belief at action time is invisible to
the update. We abstract any such instantiation as a base loss
$\ell^{\mathrm{base}}$ whose precise form does not matter for the analysis
below.

\subsection{The Self-Evaluation Blind Spot}
\label{sec:blind_spot}

Hindsight-only feedback collapses two distinct questions about each step
into a single label. The first is observable from the retrospective
evaluator: \emph{what happened}, captured by $r_t$. The second is invisible:
\emph{whether the agent could have anticipated what would happen} when it
chose $a_t$. Formally, let $z_t \in \{-1, 0, +1\}$ denote the agent's prospective belief
about $y_t$, formed from $(s_t, a_t)$ alone before $c_t$ is observed:
$z_t = +1$ encodes an expectation of success, $z_t = -1$ an expectation of
failure, and $z_t = 0$ an explicit acknowledgement of uncertainty
(\emph{honest uncertainty}). The base loss is a
function of post-action quantities alone,
$\ell^{\mathrm{base}} = \mathcal{F}(\tau, \{c_t\}_{t=1}^T)$, and therefore
independent of any such $z_t$:
\begin{equation}
    \mathbb{E}\!\left[\nabla_\theta \ell^{\mathrm{base}} \,\big|\, z_t, y_t\right]
    \;=\; \mathbb{E}\!\left[\nabla_\theta \ell^{\mathrm{base}} \,\big|\, y_t\right]
    \qquad \forall\, z_t \in \{-1, 0, +1\},
    \label{eq:blindspot}
\end{equation}
i.e., the gradient distribution depends on the realized outcome $y_t$ alone
and is invariant to the agent's prospective belief. 

We call the absence of supervision on $z_t$ the \emph{self-evaluation blind
spot}: standard updates assign the same gradient to two rollouts that
produced the same outcome, regardless of whether the agent acted with
calibrated belief, miscalibrated belief, or honest uncertainty. Two
consequences follow on the calibrated subset $z_t \in \{-1, +1\}$
(the $z_t{=}0$ case inherits the same invariance and is treated as a neutral
class). A confident success and a lucky success
receive identical reinforcement, even though the former reflects calibrated
competence while the latter is an unrecognized capability the agent cannot
deploy with confidence; symmetrically, an overconfident failure is
penalized exactly like an aware failure, even though the former is the more
dangerous mode --- the agent's own model endorses an action the verifier
rejects, so the same mistake is likely to recur. Closing this blind spot
requires a learning signal tied to the gap between $z_t$ and $y_t$, but
only if that gap is structurally meaningful rather than noise that should
be averaged out. 

\subsection{Why Prediction--Reality Gaps Are Informative}
\label{sec:gap_informative}

To characterize the structural quantity that the discrete prospective belief
$z_t$ should approximate, consider its Bayes-optimal continuous idealization,
$f^{\star}(s_t, a_t) \,{:=}\, \mathbb{E}[y_t \mid s_t, a_t]
= \Pr(y_t {=} 1 \mid s_t, a_t)$, the minimum-MSE estimator of $y_t$
available without observing $c_t$.

\begin{proposition}[Privileged-information gap]
\label{prop:gap}
For any $(s_t, a_t)$ in the support of the rollout distribution,
\begin{equation}
    \mathrm{Var}(y_t \mid s_t, a_t)
    \;=\; \underbrace{\mathbb{E}[\mathrm{Var}(y_t \mid s_t, a_t, c_t) \mid s_t, a_t]}_{\text{aleatoric noise}}
    \;+\; \underbrace{I^{\star}(s_t, a_t)}_{\text{privileged-information gap}},
    \label{eq:gap_decomp}
\end{equation}
where $I^{\star}(s_t, a_t) := \mathrm{Var}(\mathbb{E}[y_t \mid s_t, a_t, c_t] \mid s_t, a_t) \geq 0$.
\end{proposition}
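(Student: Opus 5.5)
This is the law of total variance applied conditionally on $(s_t,a_t)$, with $c_t$ as the additional conditioning variable. We fix $(s_t,a_t)$ in the support of the rollout distribution and work under the conditional law $P(\cdot\mid s_t,a_t)$. Under this law $y_t\in\{0,1\}$ is bounded, so every conditional moment below exists. We also assume $c_t$ is jointly distributed with $(s_t,a_t,y_t)$ and that regular conditional distributions exist; this holds for the discrete or Polish-valued contexts considered here.

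Write $g(s_t,a_t,c_t):=\mathbb{E}[y_t\mid s_t,a_t,c_t]$. By the tower property, $\mathbb{E}[g\mid s_t,a_t]=\mathbb{E}[y_t\mid s_t,a_t]=f^\star(s_t,a_t)$. Decompose $y_t-f^\star=(y_t-g)+(g-f^\star)$, square, and take $\mathbb{E}[\cdot\mid s_t,a_t]$. This produces three terms.

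\textbf{First term.} Condition first on $c_t$ and then apply the tower property:
\[
\mathbb{E}[(y_t-g)^2\mid s_t,a_t]=\mathbb{E}\bigl[\mathrm{Var}(y_t\mid s_t,a_t,c_t)\mid s_t,a_t\bigr].
\]
This is the aleatoric term.

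\textbf{Second term.} Since $\mathbb{E}[g\mid s_t,a_t]=f^\star$, we have
\[
\mathbb{E}[(g-f^\star)^2\mid s_t,a_t]=\mathrm{Var}(g\mid s_t,a_t)=I^\star(s_t,a_t).
\]

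\textbf{Cross term.} This is the only step needing care. Because $g-f^\star$ is measurable with respect to $(s_t,a_t,c_t)$, we can pull it out of the inner expectation:
\[
\mathbb{E}\bigl[(y_t-g)(g-f^\star)\mid s_t,a_t\bigr]
=\mathbb{E}\bigl[(g-f^\star)\,\mathbb{E}[y_t-g\mid s_t,a_t,c_t]\mid s_t,a_t\bigr].
\]
The inner expectation is $0$ by the definition of $g$, so the cross term vanishes.

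Summing the three terms gives \eqref{eq:gap_decomp}. Finally, $I^\star\ge 0$ because it is a conditional variance, i.e.\ the conditional expectation of a square.

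For this argument to apply "for any $(s_t,a_t)$" rather than only almost surely, it must be read pointwise via regular conditional distributions. In the finite or discrete setting this is immediate. I would note this in a remark rather than belabor it.
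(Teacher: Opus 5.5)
Your proof is correct and matches the paper's argument: the paper applies the law of total variance conditionally on $(s_t,a_t)$, with $c_t$ as the additional conditioning variable, and reads off the second term as $I^{\star}$. You re-derive that law through $y_t - f^{\star} = (y_t - g) + (g - f^{\star})$ with a vanishing cross term, and your remark on regular conditional distributions makes explicit the pointwise reading that the paper leaves implicit.
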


The first term is the aleatoric noise irreducible even with $c_t$; the
second, $I^{\star}(s_t, a_t)$, is precisely the contribution that no
action-time predictor $f(s_t, a_t)$ can resolve. States with large
$I^{\star}$ identify exactly where the agent's epistemic horizon is most
limited and where the prediction--reality gap is informative rather than
residual noise. This places our setting in the Learning Under Privileged
Information framework~\cite{vapnik2009new, pechyony2010theory}: signals
available at training time but not at decision time fundamentally constrain
the achievable predictor (proof and full LUPI connection in
App.~\ref{app:gap_proof}). The decomposition motivates a learning rule: maintain a self-evaluator that
produces a prospective belief $z_t$ from action-time information alone, and
reweight $\ell^{\mathrm{base}}$ as a function of the gap between $z_t$ and
$y_t$.

\section{The Prospective Hindsight Method}
\label{sec:method}

% \S\ref{sec:blind_spot} identified a self-evaluation blind spot in
% hindsight-only training, and \S\ref{sec:gap_informative} showed that the
% prediction--reality gap left by this blind spot is structurally informative.

Prospective Hindsight (PH) converts this gap into a usable training signal
through three components: a self-evaluator that produces a per-rollout belief
from action-time information; a four-cell
taxonomy that partitions rollouts by belief and outcome;
and a one-line reweighting of the base loss driven by a binary surprise
indicator. We show that
the same rule has a complementary interpretation as implicit calibration. The end-to-end procedure is summarized in Algorithm~\ref{alg:ph}(App.~\ref{app:algorithm}).

\subsection{Prospective Self-Evaluator}
\label{sec:self_evaluator}

The agent $\pi_\theta$ produces both the action $a_t$ and a prospective
self-evaluation of the same action. The self-evaluator distribution $p^{\mathrm{eval}}_\theta(\cdot
\mid s_t, a_t)$ is realized by prompting $\pi_\theta$ with a template that
asks whether the eventual outcome of $a_t$ will be a success; the answer is
parsed from a \texttt{\textbackslash boxed\{$\cdot$\}} field as a ternary
score in $\{-1, 0, +1\}$ (success / failure / honest uncertainty;
App.~\ref{app:hu}). Crucially, the self-evaluator sees only $(s_t, a_t)$ and
does not observe the privileged context $c_t$ (full templates in
App.~\ref{app:prompts}). To reduce variance, we draw $M$ independent
samples per step and aggregate by majority vote:
$
    z_t \;=\; \mathrm{maj}\!\left(z_t^{(1)}, \ldots, z_t^{(M)}\right),
     z_t^{(m)} \stackrel{\text{i.i.d.}}{\sim} p^{\mathrm{eval}}_\theta(\cdot \mid s_t, a_t).
    \label{eq:eval}
$
The only training-time overhead is $M$ extra forward passes per rollout step.
% parameter-sharing means PH adds no deployment-time parameters. 
% Tie-breaking and parsing-failure handling follow App.~\ref{app:hu}.

\subsection{A Calibration Taxonomy of Rollouts}
\label{sec:taxonomy}

As shown in Table \ref{tab:cells}, the pair $(z_t, y_t)$ partitions every rollout with $z_t \in \{-1, +1\}$
into one of four \emph{calibration cells} along two epistemic axes,
the outcome axis ($y_t$, observable from the verifier) and the anticipation
axis ($z_t$, the agent's prospective self-assessment). Diagonal cells
(CS, AF) are calibrated; off-diagonal cells (OF, US) are miscalibrated and
capture the asymmetry analyzed in \S\ref{sec:blind_spot}: OF rollouts are
the most consequential failure mode (the agent's own model endorses an
action the verifier rejects), while US rollouts identify capability the
agent has but cannot yet recognize. 
\begin{table}[ht]
\centering
\small
\caption{Calibration cells of rollouts, indexed by the agent's prospective belief $z_t \in \{-1, +1\}$ and the realized outcome $y_t \in \{0, 1\}$. Rollouts with $z_t{=}0$ are deferred to
App.~\ref{app:hu}.}
\label{tab:cells}
\begin{tabular}{l c c l}
\toprule
\textbf{Cell} & $z_t$ & $y_t$ & \textbf{Epistemic interpretation} \\
\midrule
\textbf{CS} (confident success) & $+1$ & $1$ & calibrated competence \\
\textbf{OF} (overconfident failure) & $+1$ & $0$ & miscalibrated; agent trusts a wrong action \\
\textbf{US} (underconfident success) & $-1$ & $1$ & miscalibrated; agent doubts a correct action \\
\textbf{AF} (aware failure) & $-1$ & $0$ & calibrated incompetence \\
\bottomrule
\end{tabular}
\end{table}

\subsection{The Prospective Hindsight Update Rule}
\label{sec:update_rule}

PH attaches to each rollout a binary \emph{surprise} scalar and amplifies the
base loss in proportion. 
% We deliberately keep the rule as simple as possible:a binary indicator, a single hyperparameter, and no other change to the base method.

\textbf{Surprise.}
For $z_t \in \{-1, +1\}$ and $y_t \in \{0, 1\}$, the per-rollout surprise is
the binary indicator of a prediction--reality gap,
\begin{equation}
    \xi_t \;=\; \mathbb{1}\!\bigl[(z_t, y_t) \in \{\mathrm{OF}, \mathrm{US}\}\bigr] \;\in\; \{0, 1\}.
    \label{eq:surprise}
\end{equation}
$\xi_t{=}1$ exactly when the agent's prospective belief disagrees with the
realized outcome (overconfident or underconfident); $\xi_t{=}0$ when they
agree (CS or AF). This is the per-rollout realization of the
privileged-information gap $I^\star$ from
Proposition~\ref{prop:gap}: rollouts with $\xi_t{=}1$ are exactly those
whose existence makes $I^\star$ visible from inside the agent's own model.

\textbf{Surprise-weighted update.}
Let $\ell^{\mathrm{base}}_t(\theta)$ be the per-rollout base loss
(\S\ref{sec:retro_methods}); PH is agnostic to whether the base method is
GRPO, OPD, or their combination. The PH-modified loss is
\begin{equation}
    \ell^{\mathrm{PH}}_t(\theta) \;=\; \bigl( 1 + \alpha \cdot \xi_t \bigr) \cdot \ell^{\mathrm{base}}_t(\theta),
    \qquad
    \nabla_\theta \ell^{\mathrm{PH}}_t(\theta) \;=\; \bigl( 1 + \alpha \cdot \xi_t \bigr) \cdot \nabla_\theta \ell^{\mathrm{base}}_t(\theta),
    \label{eq:ph_loss}
\end{equation}
with $\alpha \geq 0$ the only hyperparameter and $\xi_t$ a stop-gradient
constant. Calibrated rollouts ($\xi_t{=}0$) keep the base loss unchanged;
surprising rollouts ($\xi_t{=}1$) are upweighted by $1{+}\alpha$. Setting
$\alpha{=}0$ recovers the base method exactly; we use $\alpha{=}1$ as the
default. 

% Equation~\ref{eq:ph_loss} preserves every other component of the
% base method: the rollout sampling distribution, the group reward
% normalization in GRPO, the per-token aggregation in OPD, and any KL or
% entropy regularizer are all left untouched. In multi-turn tasks,
% $(z_t, y_t)$ and $\xi_t$ are recomputed per turn and the factor amplifies
% the per-turn base loss; the surprise scalar is broadcast to all action
% tokens within a rollout (single-turn) or turn (multi-turn) so that the
% per-token gradient is scaled identically.

\subsection{PH as Calibration through Reweighting}
\label{sec:calibration_view}

% The update rule above was derived diagnostically: PH amplifies the base
% gradient on rollouts whose outcomes the agent could not anticipate. 

We also develop a complementary interpretation of the same rule as an
implicit calibration mechanism. As the prospective evaluator
$p^{\mathrm{eval}}_\theta$ shares parameters with the policy $\pi_\theta$,
every PH update changes both the action distribution and the agent's
prospective belief; PH biases this joint update toward states in which
the two agree. We frame the analysis below as \emph{motivation} for why
PH would be expected to reduce miscalibration as a side effect of
optimization, rather than as a formal guarantee.
% that it does so;
% empirical evidence for the predicted alignment is in \S\ref{sec:experiments}.

\textbf{Surprise-weighted decomposition.}
The PH expected loss decomposes additively into the base objective plus
a non-negative surprise residual:
\begin{equation}
    \mathbb{E}_{\rho_\theta}\!\bigl[\ell^{\mathrm{PH}}_t\bigr]
    \;=\; \mathbb{E}_{\rho_\theta}\!\bigl[\ell^{\mathrm{base}}_t\bigr]
    \;+\; \alpha \cdot \mathcal{R}(\theta),
    \quad
    \mathcal{R}(\theta) \;:=\; \mathbb{E}_{\rho_\theta}\!\bigl[\xi_t \cdot \ell^{\mathrm{base}}_t\bigr] \;\geq\; 0,
    \label{eq:ph_decomp}
\end{equation}
where $\rho_\theta$ is the joint rollout distribution induced by
$\pi_\theta$, $p^{\mathrm{eval}}_\theta$, and the verifier
(App.~\ref{app:cal:setup}). The surprise residual $\mathcal{R}(\theta)$
is the base loss restricted to the miscalibrated cells OF and US: PH
minimizes a regularized objective in which surprising rollouts contribute
through both their own loss and the surprise penalty.

\textbf{Miscalibration rate.}
Define the agent's miscalibration rate against the verifier:
\begin{equation}
    \mathcal{M}(\theta) \;:=\; \mathbb{E}_{\rho_\theta}\!\bigl[\xi_t\bigr]
    \;=\; \Pr\nolimits_{\rho_\theta}\!\bigl[(z_t, y_t) \in \{\mathrm{OF}, \mathrm{US}\}\bigr],
    \label{eq:miscal_main}
\end{equation}
which equals the binary Brier score of the deterministic predictor
$z'_t {=} (z_t{+}1)/2$ against $y_t$ (App.~\ref{app:cal:bound}). Let
$
    c(\theta) \;:=\; \mathbb{E}_{\rho_\theta}\!\bigl[\ell^{\mathrm{base}}_t \,\big|\, \xi_t = 1\bigr]
$
denote the average base loss on surprising rollouts. By the tower rule,
\begin{equation}
       \mathcal{R}(\theta) \;=\; c(\theta) \cdot \mathcal{M}(\theta),
    \label{eq:exact_residual} 
\end{equation}
which is an \emph{exact} identity rather than an inequality
(Prop.~\ref{prop:residual_miscal} of App.~\ref{app:cal:bound}). Eq.~\ref{eq:exact_residual}
admits two descent pathways for the residual: (i) reducing
$\mathcal{M}(\theta)$ by aligning the prospective belief with the verifier
outcome, or (ii) reducing $c(\theta)$ by lowering the base loss on the
miscalibrated set. Each PH gradient step is one or the other (or both),
without an explicit calibration loss ever entering the optimization.

\textbf{When does the residual identity bind?}
Eq.~\ref{eq:exact_residual} is informative as a calibration lever in
the regime where $c(\theta)$ remains bounded away from zero. Three
training phases follow naturally. Early on, both $c(\theta)$ and
$\mathcal{M}(\theta)$ are large, and minimizing $\mathcal{R}(\theta)$
admits both pathways simultaneously; this is the regime where PH
delivers its largest empirical gains (\S\ref{sec:experiments}). In
mid-training, both factors decrease as the agent's self-model improves.
At convergence, $c(\theta) \to 0$ and $\mathcal{M}(\theta) \to 0$
together, so $\mathcal{R}(\theta)$ vanishes and PH and the base method
share the same fixed points (App.~\ref{app:cal:discussion}). The
calibration interpretation is therefore not a uniform statement about
the optimization landscape but a description of how PH's residual
\emph{decomposes} during the operative training window. 

\textbf{Reported metric.}
Among the two cells contributing to $\mathcal{M}(\theta)$, OF carries
the dominant practical weight (\S\ref{sec:blind_spot}). We therefore
report the \emph{overconfident-failure rate}
\begin{equation}
    \mathrm{OFR}(\theta)
    \;:=\; \Pr\nolimits_{\rho_\theta}\!\bigl[\,y_t {=} 0 \,\big|\, z_t {=} +1\,\bigr]
    \;=\; \tfrac{\Pr[\mathrm{OF}]}{\Pr[z_t = +1]},
    \label{eq:ofr_main}
\end{equation}
the OF-conditional projection of $\mathcal{M}(\theta)$, alongside task
success. This connects PH to implicit calibration of language models~\cite{kadavath2022language-auq,
tian2023just-auq, lin2022teaching-acc}, in which post-training feedback
induces well-calibrated confidence without an explicit calibration loss;
PH makes such implicit calibration the explicit driver of optimization
rather than an emergent side effect. We emphasize that this connection
is \emph{diagnostic}: PH does not minimize $\mathcal{M}(\theta)$ as an
explicit objective, and the alignment between the PH update direction
and $-\nabla_\theta \mathcal{M}(\theta)$ is a quantitative motivation
rather than a pointwise guarantee (App.~\ref{app:cal:gradient}). 

\begin{table*}[t]
\centering
% \small
\footnotesize
\caption{%
  {Single-turn results.}
  Validation accuracy is the average / maximum logged \texttt{mean@16}
  (\%); OFR ${=}\mathrm{OF}/(\mathrm{CS}{+}\mathrm{OF})$
  (failure rate among predicted successes) and Surprise
  ${=}\mathrm{OF}{+}\mathrm{US}$ (total prediction--reality mismatch percentage);\emph{Early}/\emph{Late} are the first / last 5 training steps and
  $\Delta{=}\text{Late}-\text{Early}$.
}
\label{tab:single_results}
\begin{tabular}{ll|cccccccc}
\toprule
Dataset & Method & \multicolumn{2}{c}{Val. Acc.} & \multicolumn{3}{c}{OFR} & \multicolumn{3}{c}{Surprise} \\
\cmidrule(lr){3-4}\cmidrule(lr){5-7}\cmidrule(lr){8-10}
 &  & Avg. $\uparrow$ & Max $\uparrow$ & Early & Late $\downarrow$ & $\Delta$ $\downarrow$ & Early & Late $\downarrow$ & $\Delta$ $\downarrow$ \\
\midrule
\multirow{6}{*}{Science Q\&A} & SDPO & 63.0 & 73.4 & 75.3 & 34.2 & -41.1 & 70.9 & 35.2 & -35.7 \\  
 & +Random   & 65.4 & 74.3 & 74.1 & 33.3 & -41.7 & 69.0 & 35.3 & -33.7 \\
 & +Failure-only & 65.1 & 74.7 & 73.9 & 25.0 & -48.9 & 69.0 & 24.4 & -44.6 \\ \cmidrule(l){2-10}
  \rowcolor{blue!10} & +PH ($\alpha=0.5$) & 65.7 & 75.5 & 76.8 & \textbf{20.9} & \textbf{-56.0} & 71.7 & {21.8} & \textbf{-49.9} \\
  \rowcolor{blue!10} & +PH ($\alpha=1.0$) & 64.0 & 74.2 & 73.2 & 23.9 & -49.3 & 68.6 & \textbf{21.1} & -47.5 \\
 & +PH ($\alpha=2.0$) & \textbf{65.8} & \textbf{76.3} & 74.4 & 24.9 & -49.5 & 69.9 & 24.8 & -45.1 \\
\midrule
\multirow{6}{*}{Tool Use} & SDPO & {57.7} & {60.2} & 77.7 & 40.7 & -37.0 & 72.8 & 41.6 & -31.2 \\
 & +Random  & 58.2 & 61.6 & 77.4 & 36.3 & -41.0 & 72.3 & 36.2 & -36.1 \\
 & +Failure-only & 58.1 & 60.6 & 76.9 & 39.3 & -37.6 & 72.7 & 39.2 & -33.5 \\ \cmidrule(l){2-10}
 \rowcolor{blue!10} & +PH ($\alpha=0.5$) & 57.3 & 61.0 & 77.9 & 36.2 & -41.7 & 73.0 & 36.2 & -36.8 \\
 \rowcolor{blue!10} & +PH ($\alpha=1.0$) & \textbf{58.2} & \textbf{61.7} & 76.6 & \textbf{33.6} & \textbf{-43.0} & 71.9 & \textbf{33.5} & \textbf{-38.4} \\
 & +PH ($\alpha=2.0$) & \textbf{58.2} & 60.9 & 77.6 & {38.2} & {-39.4} & 73.0 & {38.3} & {-34.7} \\
\bottomrule
\end{tabular}
\end{table*}

\vspace{-1em}
\begin{figure*}[!h]
  \centering
\includegraphics[width=0.55\linewidth, clip]{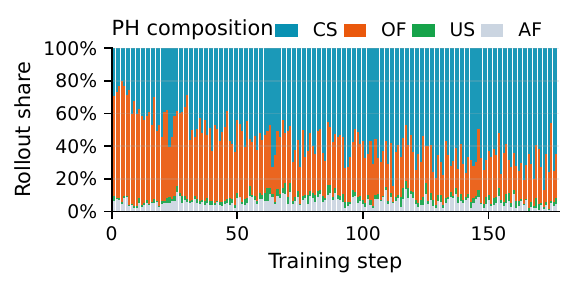}
\includegraphics[width=0.42\linewidth, clip]{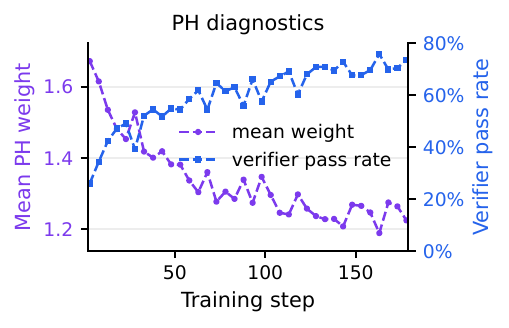}
\vspace{-1em}
  \caption{%
    \textbf{Self-evaluation dynamics under PH on Science Q\&A ($\alpha{=}1$).}
    Left: per-step composition of the four PH cells (CS / OF / US / AF);
    overconfident failures shrink while confident successes grow.
    Right: the mean PH weight decays as the verifier pass rate rises,
    indicating that PH acts as an adaptive training signal that anneals
    as the prediction--reality gap closes.
  }
  \label{fig:single_dynamics}
\end{figure*}

\vspace{-1em}
%==============================================================================
\section{Experiments}
\label{sec:experiments}
%==============================================================================

\subsection{Experimental Setup}
\label{sec:exp_setup}

We evaluate PH in two complementary regimes that stress retrospective signals of different fidelity. \textbf{Single-turn} uses verifiers on
\textsc{Science Q\&A}~\cite{feng2024sciknoweval} and \textsc{Tool Use}%
~\cite{tang2023toolalpaca} with SDPO~\cite{hubotter2026reinforcement} as the
base loss. \textbf{Multi-turn}
uses an LLM-based PRM judge on the OpenClaw-RL~\cite{wang2026openclaw}
\textsc{Personal Agent} task (GPT-4.1 user simulator), where PH
plugs into RL (GRPO~\cite{shao2024deepseekmath}), OPD~\cite{agarwal2024policy,ye2026policy,hubotter2026reinforcement}, and their Combined variant. All main
runs use \textsc{OLMo-3-7B-Instruct}~\cite{olmo2025olmo} as both policy and self-evaluator (distinct prompts, shared parameters). The prospective predictor
uses $M{=}1$ sample in single-turn (deterministic verifier) and $M{=}3$ in
multi-turn (stochastic PRM); cross-scale evidence on \textsc{gpt-oss-20B}%
~\cite{agarwal2025gptoss} and full hyperparameters are deferred to
App.~\ref{app:experimental_setup}.

\textbf{Baselines and ablations.}
The boundary case $\alpha{=}0$ recovers the base method exactly
(\emph{Baseline}). To separate PH's surprise-based \emph{selection} of which
rollouts to upweight from the contribution of additional gradient mass, we
compare against two strong reweighting controls (single-turn,
App.~\ref{app:baseline_protocols}): \textbf{Random} Reweight, which assigns the same
per-step total weight as PH but to a uniformly sampled subset of rollouts, and
\textbf{Failure-only} Reweight, which upweights all verifier failures ($y_t{=}0$). Both
use the same multiplier $(1{+}\alpha)$ as PH, so only the selection rule
differs.

\textbf{Metrics and Evaluations.}
Capability: \texttt{mean@16} (single-turn) or external-judge score (multi-turn).
Calibration: $\mathrm{OFR}{=}\Pr[y_t{=}0\mid z_t{=}{+}1]$, surprise rate
$\mathcal{M}(\theta){=}\Pr[\xi_t{=}1]$, and prospective accuracy
$\mathrm{PAcc}{=}1{-}\mathcal{M}(\theta)$. Single-turn tables report
\emph{Early}/\emph{Late} as the average over the first/last 5 training steps;
multi-turn tables report \emph{Avg} over post-initial checkpoints, \emph{Best},
and $\Delta_{\mathrm{best}}{=}\text{Best}{-}\text{initial}$. All headline numbers
are aggregated over 3 seeds.

\begin{table}[t]
\centering
% \small
\footnotesize
\caption{{Multi-turn personal-agent results (GPT-4.1 simulator).}
  PH plugged into RL, OPD, and the Combined recipe; performance is the
  teacher-side personalization-evaluator score, calibration is computed
  over teacher-side rollouts within mixed training, $\Delta_{\mathrm{best}}{=}\text{Best}-\text{initial}$.
}
\label{tab:multi_results}
\begin{tabular}{ll|ccc ccc}
\toprule
& & \multicolumn{3}{c}{Capability} & \multicolumn{3}{c}{Calibration} \\
\cmidrule(lr){3-5} \cmidrule(lr){6-8}
Method & Setting & Avg. $\uparrow$ & Best $\uparrow$ & $\Delta_{\mathrm{best}}$ $\uparrow$ & OFR $\downarrow$ & Surprise $\downarrow$ & Prosp. Acc. $\uparrow$ \\
\midrule
\multirow{4}{*}{RL} & GRPO  & 68.6 & 77.7 & +8.5 & 5.3 & 16.5 & 83.5 \\
\cmidrule(l){2-8}
\rowcolor{blue!10} &  +PH ($\alpha=0.5$) & 72.6 & 77.7 & +8.7 & 4.9 & 15.0 & 85.0 \\
\rowcolor{blue!10}  &  +PH ($\alpha=1.0$) & \textbf{75.8} & \textbf{82.9} & \textbf{+14.8} & 3.1 & \textbf{12.2} & \textbf{87.8} \\
 & +PH ($\alpha=2.0$) & 68.6 & 77.3 & +9.4 & \textbf{2.5} & 16.0 & 84.0 \\
\midrule
\multirow{4}{*}{OPD} & SDPO & 54.0 & 65.6 & +2.5 & 15.9 & 37.2 & 62.8 \\
\cmidrule(l){2-8}
\rowcolor{blue!10} & +PH ($\alpha=0.5$) & 61.8 & \textbf{77.7} & +5.6 & \textbf{1.5} & \textbf{13.7} & \textbf{86.3} \\
\rowcolor{blue!10} &  +PH ($\alpha=1.0$) & \textbf{69.3} & 75.2 & +2.7 & 2.6 & 18.1 & 81.9 \\
 & +PH ($\alpha=2.0$) & 59.2 & 77.3 & \textbf{+17.1} & 5.0 & 17.2 & 82.8 \\
\midrule
\multirow{4}{*}{Combined} & RL+OPD & {61.2} & 73.5 & +0.8 & 10.0 & 40.1 & 59.9 \\
\cmidrule(l){2-8}
\rowcolor{blue!10} &  +PH ($\alpha=0.5$) & \textbf{66.5} & 75.6 & \textbf{+8.1} & 4.3 & \textbf{28.4} & \textbf{71.6} \\
\rowcolor{blue!10}  &  +PH ($\alpha=1.0$) & {65.1} & \textbf{78.5} & \textbf{+8.1} & 6.8 & 36.0 & 64.0 \\
 & +PH ($\alpha=2.0$) & 60.3 & 64.4 & +6.0 & \textbf{2.5} & 29.5 & 70.5 \\
\bottomrule
\end{tabular}
\end{table}

\vspace{-1em}
\begin{figure*}[!h]
  \centering
\includegraphics[width=0.99\linewidth, clip]{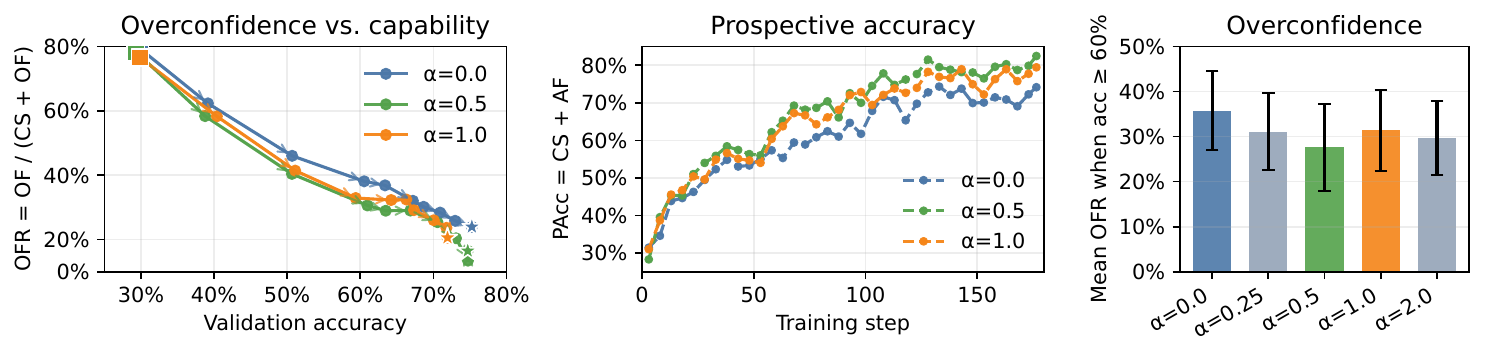}
\vspace{-1em}
\caption{%
  {Capability and calibration dynamics on Science Q\&A across $\alpha$.}
  \textbf{Left:} OFR vs.\ Validation Accuracy; \textbf{Middle:} prospective accuracy over training. \textbf{Right:} Effect of $\alpha$ on overconfidence rate.
}
\label{fig:capability_calibration}
\end{figure*}

\vspace{-1.2em}
\subsection{Single-Turn Verifiable Tasks}
\label{sec:exp_single}

% \paragraph{Headline: capability preserved, overconfidence cut by 30--40\%.}
Across both domains in Table~\ref{tab:single_results}, PH preserves or slightly
improves SDPO while substantially shrinking calibration
error. On Science Q\&A, late-training OFR drops from $34.2\%$ (SDPO) to
$\mathbf{20.9\%}$ at $\alpha{=}0.5$, and the
surprise rate tracks OFR closely ($35.2\%{\to}\mathbf{21.8}\%$). On Tool Use, $\alpha{=}1.0$ jointly attains the
best capability and calibration ($\texttt{mean@16}{=}\mathbf{58.2/61.7}$, OFR
$\mathbf{33.6\%}$, surprise $\mathbf{33.5\%}$). Figure~\ref{fig:model_scale} replicates the SDPO vs.\ SDPO+PH comparison on
\textsc{gpt-oss-20B}. Using the same $\alpha{=}1.0$, PH delivers comparable or larger improvements on all three metrics, showing that it tracks a structural property of hindsight-only training that persists across model scale.

\textbf{Selection, not mass, drives the gain.}
A natural concern is that PH simply adds gradient mass to a subset of rollouts;
any reweighting with the same per-step total weight might do equally well.
\textbf{Random} reweighting leaves
calibration essentially unchanged on Science Q\&A (OFR $33.3\%$ vs.\ SDPO's
$34.2\%$) and improves it only modestly on Tool Use; the gain over SDPO is far
smaller than PH's. \textbf{Failure-only} reweighting recovers most of PH's OFR reduction on Science Q\&A
($25.0\%$ vs.\ $20.9\%$) but trails PH on the surprise rate
($24.4\%$ vs.\ $\mathbf{21.8}\%$).  The OFR gap reflects that single-turn
miscalibration is OF-dominated (Figure~\ref{fig:cell_composition}), so a 1D outcome-axis baseline already captures most of it; the surprise-rate gap reflects the cells PH amplifies but failure-only misses underconfident successes. 

% PH's advantage is thus
% \emph{selection} along the $(z_t, y_t)$ plane, not mass alone, and its margin
% over failure-only foreshadows the much larger gap we observe in multi-turn, where US is the dominant miscalibration mode.

\textbf{PH is a self-extinguishing adaptive curriculum.}
Figure~\ref{fig:single_dynamics} shows the per-step PH-cell composition and
the corresponding PH-weight trajectory on Science Q\&A. The
overconfident-failure (OF) cell shrinks monotonically across training while
the confident-success (CS) cell grows; the underconfident-success (US) and
aware-failure (AF) cells stay small throughout. Crucially, the mean PH weight
(right) decays as the verifier pass rate rises:
PH amplifies gradient signal early when miscalibration is rampant and naturally
anneals once the policy has learned to predict its own outcomes, matching the
asymptotic argument in App.~\ref{app:cal:discussion} ($w_t{\to}1$ as
$\xi_t{\to}0$). This means PH does not require a learning-rate schedule for the
surprise weight: the data itself controls the curriculum.

\textbf{Calibration emerges as a byproduct of surprise reweighting.}
Figure~\ref{fig:capability_calibration} (middle) tracks prospective accuracy
$\mathrm{PAcc}(\theta){=}1{-}\mathcal{M}(\theta)$ across training. The pre-training self-evaluator is essentially
uncalibrated but PH lifts $\mathrm{PAcc}$ from $\sim30\%$ to $\sim80\%$ while the SDPO baseline plateaus near $73\%$. The predictor receives no direct gradient (the reweighting is stop-gradient on $\xi_t$); the gap is therefore a
\emph{byproduct} of policy optimization, evidence that PH co-trains a calibrated
self-model alongside the policy at no additional parameter cost. This is the
mechanism behind the simultaneous capability and calibration improvement:
the same gradient steps that reduce $\ell^{\mathrm{base}}$ on miscalibrated
rollouts also drive $\mathcal{M}(\theta){\to}0$ via the parameter-sharing
channel. The full $\alpha$-sweep  confirms the entire window improves both axes simultaneously, with $\alpha{=}0.5$ attaining the lowest mean OFR at matched capability.

\vspace{-1em}
\begin{figure*}[!h]
  \centering
\includegraphics[width=0.99\linewidth, clip]{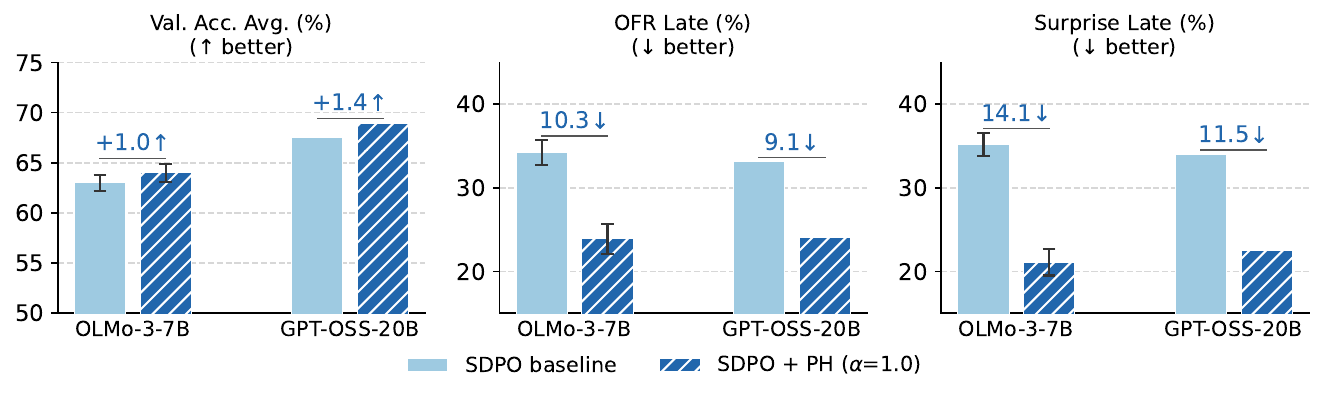}
\vspace{-1em}
  \caption{%
Capability and calibration gains from PH transfer across model scale using SDPO baseline and SDPO+PH ($\alpha{=}1.0$) on OLMo-3-7B-Instruct and gpt-oss-20B.}
  \label{fig:model_scale}
\end{figure*}

\vspace{-1.5em}
\subsection{Multi-Turn Personal Agent}
\label{sec:exp_multi}

% We move from the deterministic single-turn verifier to the multi-turn
% personal-agent setting (App.~\ref{app:multi_turn_setup}), where the
% retrospective signal is no longer a rule-based verifier but an LLM-based
% PRM judge. Here PH must (i) plug into three different base methods --- RL
% (GRPO), OPD, and the Combined recipe of \cite{wang2026openclaw} --- without
% method-specific tuning, and (ii) remain effective despite per-turn judge
% noise. Table~\ref{tab:multi_results} sweeps $\alpha\in\{0.5, 1.0, 2.0\}$
% under the GPT-4.1 user simulator.

\textbf{Multi-turn shifts the dominant miscalibration mode from OF to US.}
Before discussing capability and calibration numbers, we characterize the
\emph{kind} of miscalibration the multi-turn regime produces.
Figure~\ref{fig:cell_composition} aggregates $(z_t, y_t)$ pairs across
training under $\alpha{=}1.0$ and contrasts single-turn tasks with the
three multi-turn base methods. The picture is structurally different: while
single-turn rollouts are dominated by overconfident failures (OF$=34.1\%$,
US$=2.7\%$), multi-turn rollouts shift the dominant miscalibration to
\emph{underconfident successes} (US), which grow monotonically across base
methods, RL ($13.8\%$) $<$ OPD ($27.9\%$) $<$ Combined ($34.5\%$),
while OF stays uniformly small ($4$--$5\%$). The two regimes therefore
expose complementary failure modes: single-turn agents over-trust wrong
actions, while multi-turn agents under-trust correct ones.

\begin{wrapfigure}{r}{0.5\textwidth}
    \vspace{-2em}
    \centering
\includegraphics[width=\linewidth, clip]{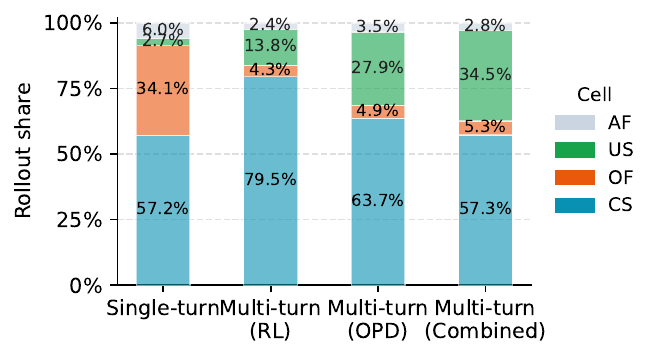}

\vspace{-1em}
\caption{PH cell composition differs structurally between single-turn and multi-turn settings.}     
\label{fig:cell_composition}
    \vspace{-1em}
\end{wrapfigure}

% Two implications
% follow. First, failure-only reweighting targets
% $y_t{=}0$ rollouts and is therefore structurally unable to address the
% dominant multi-turn miscalibration; the small Failure-only-vs-PH gap on
% single-turn is not predictive of multi-turn behavior. Second, the
% across-method monotonicity of US gives a quantitative axis along which to
% predict where PH's gain should be largest --- a prediction we now check
% against Table~\ref{tab:multi_results}.

\textbf{PH improves all three base methods, with the largest lift on OPD.}
At $\alpha{=}1.0$, PH improves the trajectory-averaged teacher-side score
over the baseline in every block of Table~\ref{tab:multi_results}:
 RL ($+7.2$), OPD
($+15.3$), and Combined
($+3.9$). The OPD block is the most striking: its baseline has the
weakest absolute performance, and
PH simultaneously stabilizes the trajectory ($+15.3$ Avg) and raises the
peak. The cell-composition picture above is consistent
with the OPD-vs-RL ordering: OPD's privileged-context teacher leaves the
student with substantially more underconfident successes than RL
(US: $27.9\%$ vs.\ $13.8\%$), and PH amplifies precisely those rollouts.
The Combined block is a partial exception to this ordering since its US
share ($34.5\%$) is the highest of the three, yet its PH gain is the
smallest. 

% We read this as evidence that PH amplifies a
% \emph{prediction--reality discrepancy}, not US in isolation: the Combined
% recipe already mixes scalar reward with the OPD signal, so a portion of
% the self-evaluation gap that PH would otherwise close has already been
% absorbed by the retrospective signal itself, leaving a smaller residual
% for the prospective signal to act on.

\textbf{Calibration improves uniformly at $\alpha{=}1.0$.}
At the headline $\alpha{=}1.0$, OFR drops in every block and prospective
accuracy rises in every block, peaking at $\mathbf{87.8\%}$ (RL+PH,
$\alpha{=}1.0$) and $\mathbf{86.3\%}$ (OPD+PH, $\alpha{=}0.5$). The
surprise gate concentrates gradient mass on exactly the cells (OF, US)
that contribute to $\mathcal{M}(\theta)$, and those cells empirically
shrink under PH training. A per-turn breakdown
(Figure~\ref{fig:multi_alpha_sweep}) shows that PH compresses
overconfidence most in late-turn positions, where the agent's self-model
is least developed. Larger $\alpha$ trades stability for further OFR
reduction: at $\alpha{=}2.0$, the RL block reaches the lowest OFR in the
table ($2.5\%$) but the predictor collapses toward a near-constant
$z_t{=}{-}1$ output (Surprise rate $16.0\%$, prospective accuracy
$84.0\%$), so the calibration improvement comes at the cost of a
degraded self-model rather than from genuinely better belief--outcome
agreement. Replacing the GPT-4.1 user simulator with
GPT-5.4 preserves the
qualitative pattern of Table~\ref{tab:multi_results}
(see Figure~\ref{fig:multi_simulator}). 

% The Combined-method gain is
% smaller under GPT-5.4, suggesting that the marginal value of a
% prospective signal shrinks once the retrospective signal is itself
% stronger, consistent with PH's design as an amplifier of the
% prediction--reality gap rather than a stand-alone calibration objective.

\textbf{What PH is doing, across regimes.}
The base method's gradient is invariant to the agent's prospective
belief; the surprise indicator $\xi_t$ is the per-rollout realization of
the privileged-information gap $I^\star$ that this invariance hides.
Three observations across both regimes are consistent with this view:
(i) calibration improves as a byproduct of optimization, so the same
gradient steps that drive the base loss also reduce $\mathcal{M}(\theta)$;
(ii) the dominant miscalibration cell flips between regimes yet the same
update rule covers both; and (iii) the surprise weight self-extinguishes
as the prediction--reality gap closes, so PH asymptotically reduces to
the base method. Together these suggest that what PH amplifies is not a
particular failure mode but the existence of a prediction--reality
discrepancy itself, which is why the same scalar $\alpha$ transfers
across tasks, base methods, and model scales without re-tuning.

\section{Related work}
\label{sec:related}
%==============================================================================

\textbf{Retrospective training signals in Agentic RL.}
Recent RL/agentic RL frameworks have steadily enriched the post-action signal
delivered to the policy \cite{zhang2025landscape, cui2026rethinking}. OpenClaw-RL~\cite{wang2026openclaw} unifies binary
RL (GRPO~\cite{shao2024deepseekmath}, PPO~\cite{schulman2017proximal}), on-policy distillation
(OPD~\cite{agarwal2024policy,ye2026policy,hubotter2026reinforcement,shenfeld2026self, sang2026policy, kim2026does}),
and their combination under a single asynchronous training stack \cite{zhao2026self, ding2026hdpo};
Buening et al.~\cite{buening2026aligning} further show that raw multi-turn user interactions are themselves sufficient supervision for self-distillation
without an explicit reward model. 
% Within this lineage, OPD is particularly
% relevant to our analysis: it conditions a hindsight teacher on
% post-action context that the student does not observe at decision
% time, and distills the teacher's action distribution into the student.
These methods differ in mechanism but share a structural property: every
learning signal is computed entirely from quantities observed
\emph{after} the agent has acted, so the agent's prospective belief at
action time never enters the gradient \cite{agarwal2024policy, ye2026policy, wang2026tcod}. Prospective
Hindsight is orthogonal to this axis of progress: it leaves the base
method's rollout, advantage, and regularization structure intact, and
instead converts the gap between an action-time prospective prediction and the eventual retrospective evaluation into a per-rollout reweighting that plugs into any of the above base methods.

\textbf{Calibration of language-model agents.}
Most prior work treats calibration as a property of the trained model
to be measured or controlled at inference time, through diagnostic
studies of confidence--accuracy
alignment~\cite{guo2017calibration,kadavath2022language-auq,geng2024survey-acc},
elicitation of verbalized confidence or sampling-based uncertainty
estimates~\cite{tian2023just-auq,lin2022teaching-acc,kuhnsemantic-auq},
trajectory-level uncertainty quantification in agentic
settings~\cite{kirchhofposition-auq,duan2025uprop-auq,zhao2025uncertainty-auq,liu2024uncertainty-acc,han2024towards-auq, zhang2026agentic, zhang2026acc},
and recent analyses linking hallucination to training procedures that
reward confident guesses over abstention~\cite{kalai2025language}.
A separate line moves calibration into RL by adding an explicit
confidence-target objective: CaOPD~\cite{zhang2026illusion} attributes
overconfidence in hindsight distillation to teacher--student information
asymmetry and proposes confidence-token replacement; Leng et
al.~\cite{leng2024taming} and Damani et al.~\cite{damani2025beyond}
report similar calibration-augmented RL recipes. These approaches
consistently report a \emph{capability--calibration trade-off} \cite{yang2026calibration}:
improved calibration comes at the cost of a regression in task accuracy.
PH proceeds differently: rather than adding a calibration term to
compete with the task objective, we use the per-rollout disagreement
between the agent's prospective belief and the verifier outcome to
\emph{reweight} which base-method gradients deserve more mass.
Calibration improves as a byproduct of the same gradient steps that
drive task success, and the trade-off observed in calibration-augmented
baselines does not arise because calibration and capability share a
single descent direction.

\section{Conclusion}
\label{sec:conclusion}

Prospective Hindsight treats the gap between an agent's prospective
belief and the verifier's retrospective outcome as a learning resource
rather than a diagnostic, amplifying the gradient on exactly the
rollouts where retrospective signals are uninformative. The same
principle should extend beyond the binary verifiers and per-turn PRMs studied here, to longer horizons such as software engineering, terminal-use, GUI, and research agents, and to base methods beyond GRPO and on-policy
distillation. We see PH as a first but critical step toward reliable agent-RL pipelines that
optimize task performance and self-calibration jointly, by construction
rather than through a separate calibration objective.

\bibliographystyle{plain}
\bibliography{colm2026_conference,auq,acc,ph}

@inproceedings{liu2024uncertainty-acc,
  title={Uncertainty Calibration for Tool-Using Language Agents},
  author={Liu, Hao and Dou, Zi-Yi and Wang, Yixin and Peng, Nanyun and Yue, Yisong},
  booktitle={Findings of the Association for Computational Linguistics: EMNLP 2024},
  pages={16781--16805},
  year={2024}
}

@inproceedings{geng2024survey-acc,
  title={A survey of confidence estimation and calibration in large language models},
  author={Geng, Jiahui and Cai, Fengyu and Wang, Yuxia and Koeppl, Heinz and Nakov, Preslav and Gurevych, Iryna},
  booktitle={Proceedings of the 2024 Conference of the North American Chapter of the Association for Computational Linguistics: Human Language Technologies (Volume 1: Long Papers)},
  pages={6577--6595},
  year={2024}
}

@article{lin2022teaching-acc,
  title={Teaching models to express their uncertainty in words},
  author={Lin, Stephanie and Hilton, Jacob and Evans, Owain},
  journal={arXiv preprint arXiv:2205.14334},
  year={2022}
}

@article{groot2024overconfidence-acc,
  title={Overconfidence is key: Verbalized uncertainty evaluation in large language and vision-language models},
  author={Groot, Tobias and Valdenegro-Toro, Matias},
  journal={arXiv preprint arXiv:2405.02917},
  year={2024}
}

@article{zhang2025agent-acc,
  title={Which agent causes task failures and when? on automated failure attribution of llm multi-agent systems},
  author={Zhang, Shaokun and Yin, Ming and Zhang, Jieyu and Liu, Jiale and Han, Zhiguang and Zhang, Jingyang and Li, Beibin and Wang, Chi and Wang, Huazheng and Chen, Yiran and others},
  journal={arXiv preprint arXiv:2505.00212},
  year={2025}
}

@article{zhang2025agentracer-acc,
  title={AgenTracer: Who Is Inducing Failure in the LLM Agentic Systems?},
  author={Zhang, Guibin and Wang, Junhao and Chen, Junjie and Zhou, Wangchunshu and Wang, Kun and Yan, Shuicheng},
  journal={arXiv preprint arXiv:2509.03312},
  year={2025}
}

@article{kadavath2022language-auq,
  title={Language models (mostly) know what they know},
  author={Kadavath, Saurav and Conerly, Tom and Askell, Amanda and Henighan, Tom and Drain, Dawn and Perez, Ethan and Schiefer, Nicholas and Hatfield-Dodds, Zac and DasSarma, Nova and Tran-Johnson, Eli and others},
  journal={arXiv preprint arXiv:2207.05221},
  year={2022}
}

@article{duan2025uprop-auq,
  title={UProp: Investigating the Uncertainty Propagation of LLMs in Multi-Step Agentic Decision-Making},
  author={Duan, Jinhao and Diffenderfer, James and Madireddy, Sandeep and Chen, Tianlong and Kailkhura, Bhavya and Xu, Kaidi},
  journal={arXiv preprint arXiv:2506.17419},
  year={2025}
}

@inproceedings{zhao2025uncertainty-auq,
  title={Uncertainty propagation on llm agent},
  author={Zhao, Qiwei and Li, Dong and Liu, Yanchi and Cheng, Wei and Sun, Yiyou and Oishi, Mika and Osaki, Takao and Matsuda, Katsushi and Yao, Huaxiu and Zhao, Chen and others},
  booktitle={Proceedings of the 63rd Annual Meeting of the Association for Computational Linguistics (Volume 1: Long Papers)},
  pages={6064--6073},
  year={2025}
}

@article{shinn2023reflexion-auq,
  title={Reflexion: Language agents with verbal reinforcement learning},
  author={Shinn, Noah and Cassano, Federico and Gopinath, Ashwin and Narasimhan, Karthik and Yao, Shunyu},
  journal={Advances in Neural Information Processing Systems},
  volume={36},
  pages={8634--8652},
  year={2023}
}

@article{madaan2023self-auq,
  title={Self-refine: Iterative refinement with self-feedback},
  author={Madaan, Aman and Tandon, Niket and Gupta, Prakhar and Hallinan, Skyler and Gao, Luyu and Wiegreffe, Sarah and Alon, Uri and Dziri, Nouha and Prabhumoye, Shrimai and Yang, Yiming and others},
  journal={Advances in Neural Information Processing Systems},
  volume={36},
  pages={46534--46594},
  year={2023}
}

@inproceedings{tian2023just-auq,
  title={Just Ask for Calibration: Strategies for Eliciting Calibrated Confidence Scores from Language Models Fine-Tuned with Human Feedback},
  author={Tian, Katherine and Mitchell, Eric and Zhou, Allan and Sharma, Archit and Rafailov, Rafael and Yao, Huaxiu and Finn, Chelsea and Manning, Christopher D},
  booktitle={Proceedings of the 2023 Conference on Empirical Methods in Natural Language Processing},
  pages={5433--5442},
  year={2023}
}

@inproceedings{kuhnsemantic-auq,
  title={Semantic Uncertainty: Linguistic Invariances for Uncertainty Estimation in Natural Language Generation},
  author={Kuhn, Lorenz and Gal, Yarin and Farquhar, Sebastian},
  booktitle={The Eleventh International Conference on Learning Representations},
  year={2023}
}

@inproceedings{kirchhofposition-auq,
  title={Position: Uncertainty Quantification Needs Reassessment for Large Language Model Agents},
  author={Kirchhof, Michael and Kasneci, Gjergji and Kasneci, Enkelejda},
  booktitle={Forty-second International Conference on Machine Learning Position Paper Track},
  year={2025}
}

@article{zhu2025llm-auq,
  title={Where llm agents fail and how they can learn from failures},
  author={Zhu, Kunlun and Liu, Zijia and Li, Bingxuan and Tian, Muxin and Yang, Yingxuan and Zhang, Jiaxun and Han, Pengrui and Xie, Qipeng and Cui, Fuyang and Zhang, Weijia and others},
  journal={arXiv preprint arXiv:2509.25370},
  year={2025}
}

@inproceedings{han2024towards-auq,
  title={Towards Uncertainty-Aware Language Agent},
  author={Han, Jiuzhou and Buntine, Wray and Shareghi, Ehsan},
  booktitle={Findings of the Association for Computational Linguistics ACL 2024},
  pages={6662--6685},
  year={2024}
}

@article{wang2026openclaw,
  title={OpenClaw-RL: Train Any Agent Simply by Talking},
  author={Wang, Yinjie and Chen, Xuyang and Jin, Xiaolong and Wang, Mengdi and Yang, Ling},
  journal={arXiv preprint arXiv:2603.10165},
  year={2026}
}

@article{buening2026aligning,
  title={Aligning language models from user interactions},
  author={Buening, Thomas Kleine and H{\"u}botter, Jonas and P{\'a}sztor, Barna and Shenfeld, Idan and Ramponi, Giorgia and Krause, Andreas},
  journal={arXiv preprint arXiv:2603.12273},
  year={2026}
}

@article{pechyony2010theory,
  title={On the theory of learnining with privileged information},
  author={Pechyony, Dmitry and Vapnik, Vladimir},
  journal={Advances in neural information processing systems},
  volume={23},
  year={2010}
}

@article{kim2026does,
  title={Why Does Self-Distillation (Sometimes) Degrade the Reasoning Capability of LLMs?},
  author={Kim, Jeonghye and Luo, Xufang and Kim, Minbeom and Lee, Sangmook and Kim, Dohyung and Jeon, Jiwon and Li, Dongsheng and Yang, Yuqing},
  journal={arXiv preprint arXiv:2603.24472},
  year={2026}
}

@article{sang2026policy,
  title={On-Policy Self-Distillation for Reasoning Compression},
  author={Sang, Hejian and Xu, Yuanda and Zhou, Zhengze and He, Ran and Wang, Zhipeng and Sun, Jiachen},
  journal={arXiv preprint arXiv:2603.05433},
  year={2026}
}

@article{ye2026policy,
  title={On-policy context distillation for language models},
  author={Ye, Tianzhu and Dong, Li and Wu, Xun and Huang, Shaohan and Wei, Furu},
  journal={arXiv preprint arXiv:2602.12275},
  year={2026}
}

@inproceedings{agarwal2024policy,
  title={On-policy distillation of language models: Learning from self-generated mistakes},
  author={Agarwal, Rishabh and Vieillard, Nino and Zhou, Yongchao and Stanczyk, Piotr and Garea, Sabela Ramos and Geist, Matthieu and Bachem, Olivier},
  booktitle={The twelfth international conference on learning representations},
  year={2024}
}

@inproceedings{kwon2023efficient,
  title={Efficient memory management for large language model serving with pagedattention},
  author={Kwon, Woosuk and Li, Zhuohan and Zhuang, Siyuan and Sheng, Ying and Zheng, Lianmin and Yu, Cody Hao and Gonzalez, Joseph and Zhang, Hao and Stoica, Ion},
  booktitle={Proceedings of the 29th symposium on operating systems principles},
  pages={611--626},
  year={2023}
}

@inproceedings{sheng2025hybridflow,
  title={Hybridflow: A flexible and efficient rlhf framework},
  author={Sheng, Guangming and Zhang, Chi and Ye, Zilingfeng and Wu, Xibin and Zhang, Wang and Zhang, Ru and Peng, Yanghua and Lin, Haibin and Wu, Chuan},
  booktitle={Proceedings of the Twentieth European Conference on Computer Systems},
  pages={1279--1297},
  year={2025}
}

@article{shenfeld2026self,
  title={Self-Distillation Enables Continual Learning},
  author={Shenfeld, Idan and Damani, Mehul and H{\"u}botter, Jonas and Agrawal, Pulkit},
  journal={arXiv preprint arXiv:2601.19897},
  year={2026}
}

@article{hubotter2026reinforcement,
  title={Reinforcement Learning via Self-Distillation},
  author={H{\"u}botter, Jonas and L{\"u}beck, Frederike and Behric, Lejs and Baumann, Anton and Bagatella, Marco and Marta, Daniel and Hakimi, Ido and Shenfeld, Idan and Buening, Thomas Kleine and Guestrin, Carlos and others},
  journal={arXiv preprint arXiv:2601.20802},
  year={2026}
}

@article{lu2025onpolicydistillation,
  author = {Kevin Lu and Thinking Machines Lab},
  title = {On-Policy Distillation},
  journal = {Thinking Machines Lab: Connectionism},
  year = {2025},
  note = {https://thinkingmachines.ai/blog/on-policy-distillation},
  doi = {10.64434/tml.20251026},
}

@article{damani2025beyond,
  title={Beyond binary rewards: Training lms to reason about their uncertainty},
  author={Damani, Mehul and Puri, Isha and Slocum, Stewart and Shenfeld, Idan and Choshen, Leshem and Kim, Yoon and Andreas, Jacob},
  journal={arXiv preprint arXiv:2507.16806},
  year={2025}
}

@article{leng2024taming,
  title={Taming Overconfidence in {LLM}s: Reward Calibration in {RLHF}},
  author={Leng, Jixuan and Huang, Chengsong and Zhu, Banghua and Huang, Jiaxin},
  journal={arXiv preprint arXiv:2410.09724},
  year={2024}
}

@inproceedings{guo2017calibration,
  title={On calibration of modern neural networks},
  author={Guo, Chuan and Pleiss, Geoff and Sun, Yu and Weinberger, Kilian Q},
  booktitle={International conference on machine learning},
  pages={1321--1330},
  year={2017},
  organization={PMLR}
}

@article{feng2024sciknoweval,
  title={Sciknoweval: Evaluating multi-level scientific knowledge of large language models},
  author={Feng, Kehua and Shen, Xinyi and Wang, Weijie and Zhuang, Xiang and Tang, Yuqi and Zhang, Qiang and Ding, Keyan},
  journal={arXiv preprint arXiv:2406.09098},
  year={2024}
}

@article{tang2023toolalpaca,
  title={Toolalpaca: Generalized tool learning for language models with 3000 simulated cases},
  author={Tang, Qiaoyu and Deng, Ziliang and Lin, Hongyu and Han, Xianpei and Liang, Qiao and Cao, Boxi and Sun, Le},
  journal={arXiv preprint arXiv:2306.05301},
  year={2023}
}

@article{olmo2025olmo,
  title={Olmo 3},
  author={Olmo, Team and Ettinger, Allyson and Bertsch, Amanda and Kuehl, Bailey and Graham, David and Heineman, David and Groeneveld, Dirk and Brahman, Faeze and Timbers, Finbarr and Ivison, Hamish and others},
  journal={arXiv preprint arXiv:2512.13961},
  year={2025}
}

@article{shao2024deepseekmath,
  title={Deepseekmath: Pushing the limits of mathematical reasoning in open language models},
  author={Shao, Zhihong and Wang, Peiyi and Zhu, Qihao and Xu, Runxin and Song, Junxiao and Bi, Xiao and Zhang, Haowei and Zhang, Mingchuan and Li, YK and Wu, Yang and others},
  journal={arXiv preprint arXiv:2402.03300},
  year={2024}
}

@article{kalai2025language,
  title={Why language models hallucinate},
  author={Kalai, Adam Tauman and Nachum, Ofir and Vempala, Santosh S and Zhang, Edwin},
  journal={arXiv preprint arXiv:2509.04664},
  year={2025}
}

@inproceedings{manakul2023selfcheckgpt,
  title={Selfcheckgpt: Zero-resource black-box hallucination detection for generative large language models},
  author={Manakul, Potsawee and Liusie, Adian and Gales, Mark},
  booktitle={Proceedings of the 2023 conference on empirical methods in natural language processing},
  pages={9004--9017},
  year={2023}
}

@inproceedings{zhang2023sac3,
  title={SAC3: reliable hallucination detection in black-box language models via semantic-aware cross-check consistency},
  author={Zhang, Jiaxin and Li, Zhuohang and Das, Kamalika and Malin, Bradley and Kumar, Sricharan},
  booktitle={Findings of the Association for Computational Linguistics: EMNLP 2023},
  pages={15445--15458},
  year={2023}
}

@article{zhao2026self,
  title={Self-Distilled Reasoner: On-Policy Self-Distillation for Large Language Models},
  author={Zhao, Siyan and Xie, Zhihui and Liu, Mengchen and Huang, Jing and Pang, Guan and Chen, Feiyu and Grover, Aditya},
  journal={arXiv preprint arXiv:2601.18734},
  year={2026}
}

@inproceedings{zhang2026passive,
  title={From Passive Metric to Active Signal: The Evolving Role of Uncertainty Quantification in Large Language Models},
  author={Zhang, Jiaxin and Cui, Wendi and Li, Zhuohang and Huang, Lifu and Malin, Bradley A and Xiong, Caiming and Wu, Chien-Sheng},
  booktitle={Findings of the Association for Computational Linguistics: ACL 2026},
  pages={41525--41544},
  year={2026}
}

@article{yang2026calibration,
  title={On calibration of large language models: From response to capability},
  author={Yang, Sin-Han and Wu, Cheng-Kuang and Lin, Chieh-Yen and Chen, Yun-Nung and Lee, Hung-yi and Sun, Shao-Hua},
  journal={arXiv preprint arXiv:2602.13540},
  year={2026}
}

@article{schulman2017proximal,
  title={Proximal policy optimization algorithms},
  author={Schulman, John and Wolski, Filip and Dhariwal, Prafulla and Radford, Alec and Klimov, Oleg},
  journal={arXiv preprint arXiv:1707.06347},
  year={2017}
}

@article{cui2026rethinking,
  title={Rethinking Agentic Reinforcement Learning In Large Language Models},
  author={Cui, Fangming and Zhu, Ruixiao and Fang, Cheng and Li, Sunan and Li, Jiahong},
  journal={arXiv preprint arXiv:2604.27859},
  year={2026}
}

@article{ding2026hdpo,
  title={HDPO: Hybrid Distillation Policy Optimization via Privileged Self-Distillation},
  author={Ding, Ken},
  journal={arXiv preprint arXiv:2603.23871},
  year={2026}
}

@article{wang2026tcod,
  title={TCOD: Exploring Temporal Curriculum in On-Policy Distillation for Multi-turn Autonomous Agents},
  author={Wang, Jiaqi and Zhang, Wenhao and Shi, Weijie and Li, Yaliang and Cheng, James},
  journal={arXiv preprint arXiv:2604.24005},
  year={2026}
}

@article{song2026survey,
  title={A Survey of On-Policy Distillation for Large Language Models},
  author={Song, Mingyang and Zheng, Mao},
  journal={arXiv preprint arXiv:2604.00626},
  year={2026}
}

@article{dong2025agentic,
  title={Agentic reinforced policy optimization},
  author={Dong, Guanting and Mao, Hangyu and Ma, Kai and Bao, Licheng and Chen, Yifei and Wang, Zhongyuan and Chen, Zhongxia and Du, Jiazhen and Wang, Huiyang and Zhang, Fuzheng and others},
  journal={arXiv preprint arXiv:2507.19849},
  year={2025}
}

@article{zhang2025landscape,
  title={The landscape of agentic reinforcement learning for llms: A survey},
  author={Zhang, Guibin and Geng, Hejia and Yu, Xiaohang and Yin, Zhenfei and Zhang, Zaibin and Tan, Zelin and Zhou, Heng and Li, Zhongzhi and Xue, Xiangyuan and Li, Yijiang and others},
  journal={arXiv preprint arXiv:2509.02547},
  year={2025}
}

@article{oh2026uncertainty,
  title={Uncertainty Quantification in LLM Agents: Foundations, Emerging Challenges, and Opportunities},
  author={Oh, Changdae and Park, Seongheon and Kim, To Eun and Li, Jiatong and Li, Wendi and Yeh, Samuel and Du, Xuefeng and Hassani, Hamed and Bogdan, Paul and Song, Dawn and others},
  journal={arXiv preprint arXiv:2602.05073},
  year={2026}
}

@article{zhang2026agentic,
  title={Agentic Uncertainty Quantification},
  author={Zhang, Jiaxin and Choubey, Prafulla Kumar and Huang, Kung-Hsiang and Xiong, Caiming and Wu, Chien-Sheng},
  journal={arXiv preprint arXiv:2601.15703},
  year={2026}
}

@article{zhang2026acc,
  title={Agentic confidence calibration},
  author={Zhang, Jiaxin and Xiong, Caiming and Wu, Chien-Sheng},
  journal={arXiv preprint arXiv:2601.15778},
  year={2026}
}

@article{zhang2026illusion,
  title={The Illusion of Certainty: Decoupling Capability and Calibration in On-Policy Distillation},
  author={Zhang, Jiaxin and Peng, Xiangyu and Chen, Qinglin and Ye, Qinyuan and Xiong, Caiming and Wu, Chien-Sheng},
  journal={arXiv preprint arXiv:2604.16830},
  year={2026}
}

@article{vapnik2009new,
  title={A new learning paradigm: Learning using privileged information},
  author={Vapnik, Vladimir and Vashist, Akshay},
  journal={Neural networks},
  volume={22},
  number={5-6},
  pages={544--557},
  year={2009},
  publisher={Elsevier}
}

@article{lopez2015unifying,
  title={Unifying distillation and privileged information},
  author={Lopez-Paz, David and Bottou, L{\'e}on and Sch{\"o}lkopf, Bernhard and Vapnik, Vladimir},
  journal={arXiv preprint arXiv:1511.03643},
  year={2015}
}

@article{agarwal2025gptoss,
  title={gpt-oss-120b \& gpt-oss-20b model card},
  author={Agarwal, Sandhini and Ahmad, Lama and Ai, Jason and Altman, Sam and Applebaum, Andy and Arbus, Edwin and Arora, Rahul K and Bai, Yu and Baker, Bowen and Bao, Haiming and others},
  journal={arXiv preprint arXiv:2508.10925},
  year={2025}
}

%%%%%%%%%%%%%%%%%%%%%%%%%%%%%%%%%%%%%%%%%%%%%%%%%%%%%%%%%%%%

% ==========================================
% 生成全局目录 (Table of Contents)
% ==========================================
\clearpage
\begin{center}
    \Large \textbf{Contents}
\end{center}
% 如果你希望目录里的链接是黑色的而不是蓝色的，可以取消下面这行的注释
% \hypersetup{linkcolor=black} 
\tableofcontents
\clearpage
% ==========================================

\appendix

% ---------------------------------------------------------
% \clearpage
% \appendix
% \section{Appendix}
% You may include other additional sections here.

%==============================================================================
\section{Extended Related Work and Discussion}
\label{app:related_extended}
%==============================================================================

\paragraph{Agent RL: from outcome rewards to process supervision.}
\label{app:related_agent_rl}
Reinforcement learning for language-model agents has progressed from
sparse outcome-only rewards (e.g., GRPO~\cite{shao2024deepseekmath}
applied at the trajectory level) to dense, step-level supervision
delivered by process reward models and structured frameworks.
OpenClaw-RL~\cite{wang2026openclaw} provides a unified asynchronous
training stack that combines binary RL, on-policy distillation, and their
composition in one rollout--score--update loop, together with a per-turn
PRM that grades each turn from the next environmental state. Buening et
al.~\cite{buening2026aligning} push this further by showing that raw
multi-turn user interactions are already sufficient supervision for
self-distillation, eliminating the need for a separately trained reward
model. Across these systems, the per-turn signal varies considerably in
fidelity and modality: rule-based binary verifiers for verifiable
single-turn tasks, learned scalar PRMs for soft multi-turn signals, and
hindsight-teacher token distributions for OPD \cite{lu2025onpolicydistillation,zhao2026self}. What does \emph{not} vary
is the structural property that motivates this paper: every learning
signal is computed from quantities observable only after the action is
committed, and the agent's prospective belief
at action time is invisible to the gradient.
Prospective Hindsight is orthogonal to the choice of base method, leaves
the post-action signal untouched, and adds a single per-rollout
reweighting on top.

\paragraph{Hindsight distillation and the OPD lineage.}
\label{app:related_opd}
On-policy distillation~\cite{agarwal2024policy} introduced the idea of
distilling a teacher's improved actions into the student under on-policy
rollouts; subsequent work has extended this in several directions. Ye et
al.~\cite{ye2026policy} formalize on-policy context distillation, in
which the teacher conditions on a privileged context $c_t$ that the
student does not see at deployment;
Hübotter et al.~\cite{hubotter2026reinforcement} and Shenfeld et
al.~\cite{shenfeld2026self} develop self-distillation variants in which
the teacher and student share parameters and the privileged context is
constructed on-the-fly from verified-correct rollouts. We position PH as
a complementary mechanism in this lineage. Standard OPD \cite{lu2025onpolicydistillation} uses privileged
information to construct \emph{better actions} for the teacher, and
the resulting per-token signal is the dominant training signal. PH instead
uses the gap between privileged and non-privileged \emph{evaluations} to
identify which rollouts deserve more weight, and absorbs OPD's per-token
signal into a single per-rollout multiplier.
The two mechanisms can be applied jointly without interaction: in our
multi-turn experiments PH is plugged on top of GRPO, OPD, and the
OpenClaw-RL Combined recipe with no method-specific tuning.

\paragraph{Calibration and self-awareness in LLMs.}
\label{app:related_calibration}
Modern confidence calibration in LLMs draws on several lines of work. At
the diagnostic level, Guo et al.~\cite{guo2017calibration} pioneered the
study of miscalibrated softmax outputs in deep networks, and Kadavath et
al.~\cite{kadavath2022language-auq} systematically measured whether
language models ``know what they know'' across knowledge tasks. Methods
to elicit calibrated confidence include verbalized confidence
prompts~\cite{tian2023just-auq,lin2022teaching-acc} and sampling-based
estimators of semantic uncertainty~\cite{kuhnsemantic-auq}; Geng et
al.~\cite{geng2024survey-acc} provide a survey. At the agentic level,
Kirchhof et al.~\cite{kirchhofposition-auq} argue for trajectory-level
calibration diagnostics, and a series of recent
papers~\cite{duan2025uprop-auq,zhao2025uncertainty-auq,liu2024uncertainty-acc, zhang2026passive}
quantify uncertainty propagation through multi-step trajectories;
Han et al.~\cite{han2024towards-auq} use uncertainty as an inference-time
control signal. Kalai et al.~\cite{kalai2025language} take a different
angle and argue that hallucination itself is incentivized by training
procedures that reward confident guesses over calibrated abstention,
linking training-procedure design to overconfidence at the model-output
level. This literature largely treats calibration as a diagnostic
property to measure or to control at inference, rather than as a training
signal to optimize. PH inverts that relationship: it converts the binary
self-evaluation mismatch into a per-rollout training-time weight, and the
binary Brier score of the agent's self-evaluator coincides with the
miscalibration rate that the PH residual
lower-bounds. Calibration emerges as a
side effect of the same gradient steps that drive task success, rather
than as a post-hoc check or a separate auxiliary loss.

\paragraph{Calibration-augmented RL and the capability--calibration trade-off.}
\label{app:related_overconfidence}
A line of recent work specifically targets the overconfidence introduced
by hindsight distillation. CaOPD~\cite{zhang2026illusion} attributes this
overconfidence to the information asymmetry between the privileged-context
teacher and the student and proposes \emph{explicit confidence-token
replacement} in the teacher distribution; Leng et al.~\cite{leng2024taming}
and Damani et al.~\cite{damani2025beyond} study calibration-augmented RL
recipes that add explicit confidence supervision to the loss. Across these
approaches, calibration improvements typically come at the cost of a
modest capability regression, and the calibration objective is a separate
term added to the base RL loss. PH is complementary along two dimensions.
First, the base method is left untouched: PH does not rewrite confidence
targets, does not add a separate calibration loss, and does not require a
calibration-specific reward model. Second, the per-rollout reweighting
connects calibration and capability through the surprise residual
decomposition: a gradient step that reduces the
base loss on miscalibrated rollouts and a gradient step that reduces the
miscalibration rate are both valid descent directions on. Empirically, this removes the
capability--calibration trade-off seen in the cited baselines: PH
preserves capability while substantially cutting OFR on single-turn and lifts both the trajectory-averaged and
best-checkpoint scores across all three multi-turn base methods.

\paragraph{Privileged information and prediction-error signals.}
\label{app:related_lupi}
Our framing of the prediction--reality gap connects directly to Vapnik's
Learning Under Privileged Information
framework~\cite{vapnik2009new,pechyony2010theory}. In LUPI, training
examples are accompanied by privileged information that is not available
at deployment, and the goal is a predictor that uses standard inputs
alone. Lopez-Paz et al.~\cite{lopez2015unifying} unify LUPI with
knowledge distillation, showing that distillation can be viewed as a
transfer of privileged information from teacher to student. Our agent-RL
setting instantiates LUPI with $(s_t, a_t)$ as the standard input, $y_t$
as the target, and $c_t$ as the privileged signal. Standard OPD-style
methods use $c_t$ to construct improved actions for the teacher; PH
instead uses the \emph{evaluation} gap that $c_t$ creates, formalized as
the non-negative privileged-information gap $I^{\star}(s_t,a_t)$ in
Proposition~\ref{prop:gap}. To our knowledge this is the first explicit
use of $I^{\star}$ as a sample-level training signal in agent RL: the
surprise indicator $\xi_t$ is the per-rollout realization of $I^{\star}$
from inside the agent's own model, and upweighting surprising rollouts
concentrates gradient budget exactly where action-time information is
insufficient.

A related but distinct line is intrinsic-motivation RL, in which
prediction errors between two predictors of the next state drive
\emph{exploration} of novel states. PH differs in three ways. First, the
prediction error is computed in the \emph{outcome / evaluation} space,
not the state space, and is used for training-signal prioritization
rather than for action selection. Second, the predictor is the same model
as the policy, parameterized by the same $\theta$ and queried via a
different prompt, rather than a separate frozen network. Third, the
surprise scalar is stop-gradient: it modulates the magnitude of the base
gradient on the existing on-policy rollouts but does not introduce a new
gradient component of its own. The two perspectives are complementary
--- one amplifies novelty in state space, the other amplifies prediction
errors in evaluation space --- and we view their combination as a natural
avenue for future work.

\paragraph{Self-correction, self-refinement, and reasoning chains.}
\label{app:related_selfcorrect}
A parallel line of work uses model self-evaluation to drive
\emph{inference-time} behavior: Reflexion~\cite{shinn2023reflexion-auq}
and Self-Refine~\cite{madaan2023self-auq} iterate on a draft response
using a self-critic. PH operates at training time rather than inference
time; the prospective prediction is consumed once, used to compute
$\xi_t$, and discarded after the gradient step. This is a deliberate
design choice: PH does not change deployment behavior, does not lengthen
the agent's effective context, and does not require the agent to reason
about its own confidence at inference. Inference-time self-correction is
therefore composable with PH (it operates on the policy distribution
after PH has shaped it), but the two mechanisms target different stages
of the pipeline. A second related line is hallucination characterization
(e.g.,~\cite{manakul2023selfcheckgpt,zhang2023sac3,zhao2026self}), which
studies the conditions under which LLMs produce unsupported content; we
view PH's reduction of overconfident-failure rate as operating in the
same regime, but at the agent-RL training-loop level rather than at the
per-output classification level.

\section{Theoretical Details and Proofs}
\label{app:theo}

\subsection{Proof of Proposition~\ref{prop:gap}}
\label{app:gap_proof}

Fix $(s_t, a_t)$ and apply the law of total variance to $y_t$ with conditioning random variable $c_t$:
\begin{align}
    \mathrm{Var}(y_t \mid s_t, a_t)
    &= \mathbb{E}\!\left[\mathrm{Var}(y_t \mid s_t, a_t, c_t) \,\big|\, s_t, a_t\right] \nonumber \\
    &\quad + \mathrm{Var}\!\left(\mathbb{E}[y_t \mid s_t, a_t, c_t] \,\big|\, s_t, a_t\right) \nonumber \\
    &= \mathbb{E}\!\left[\mathrm{Var}(y_t \mid s_t, a_t, c_t) \,\big|\, s_t, a_t\right] + I^{\star}(s_t, a_t),
\end{align}
where the second equality uses the definition of $I^{\star}$. Both terms are non-negative, and the second vanishes if and only if $\mathbb{E}[y_t \mid s_t, a_t, c_t]$ is almost surely a function of $(s_t, a_t)$, that is, $c_t$ provides no additional information about $y_t$ beyond $(s_t, a_t)$. \hfill$\square$

\paragraph{Connection to Learning Under Privileged Information}
\label{app:lupi}

The decomposition in Proposition~\ref{prop:gap} places the agent RL setting of \S\ref{sec:agent_rl} inside the Learning Under Privileged Information (LUPI) paradigm introduced by \cite{vapnik2009new}. In LUPI, training examples are accompanied by privileged information available only at training time, and the goal is a predictor that uses standard inputs alone. \cite{pechyony2010theory} formalize the statistical-learning theory of this setting. \cite{lopez2015unifying} unify LUPI with knowledge distillation, showing that distillation can be viewed as a transfer of privileged information from teacher to student.

Our setting instantiates LUPI with $(s_t, a_t)$ as the standard input, $y_t$ as the target, and $c_t$ as the privileged signal. The base methods of \S\ref{sec:retro_methods} use $c_t$ in different ways: GRPO uses $c_t$ only through the scalar reward $r_t = R(s_t, a_t, c_t)$; OPD-style methods condition the hindsight teacher distribution on $c_t$ and distill to a student that drops $c_t$ at inference. Both retain the blind spot of \S\ref{sec:blind_spot}: training never asks the policy to predict $y_t$ from $(s_t, a_t)$ before $c_t$ arrives, so the privileged-information gap $I^{\star}(s_t, a_t)$ is never measured per step. Our method (\S\ref{sec:method}) makes this gap an explicit training signal.

\paragraph{Equivalent information-theoretic statement.}
Equation~\eqref{eq:gap_decomp} can equivalently be expressed in entropy form. For binary $y_t$, let $H(y \mid \cdot) := -\mathbb{E}[\log p(y \mid \cdot)]$. Then
\begin{equation}
    H\!\left(y_t \mid s_t, a_t\right) - H\!\left(y_t \mid s_t, a_t, c_t\right) \;=\; I\!\left(y_t; c_t \,\big|\, s_t, a_t\right) \;\geq\; 0,
    \label{eq:mi_form}
\end{equation}
that is, the conditional mutual information between $y_t$ and $c_t$ given $(s_t, a_t)$ measures the same privileged-information increment as $I^{\star}$. The equality $I(y_t; c_t \mid s_t, a_t) = 0$ characterizes the degenerate case in which $I^{\star}(s_t, a_t) = 0$ for almost every $(s_t, a_t)$: no prospective predictor can be improved by access to $c_t$ at training time.

\subsection{PH as Calibration through Reweighting: Detailed Analysis}
\label{app:calibration}

This appendix provides the formal setup, the exact residual identity
and its consequences, a gradient-flow analysis of the PH update
direction, and a discussion of the assumptions' empirical validity.
We frame the results below as \emph{motivation} for the calibration
interpretation of PH outlined in \S\ref{sec:calibration_view}, not as
formal guarantees of monotone calibration descent. 

\subsubsection*{Setup and notation}
\label{app:cal:setup}

We work in a single-step task to keep notation light; the multi-turn
extension is recorded in \S\ref{app:cal:discussion}. Let
$\theta \in \Theta$ parameterize a shared autoregressive model that
defines:
\begin{itemize}[leftmargin=*]
    \item the policy $\pi_\theta(a \mid s)$, a conditional distribution
    over actions $a \in \mathcal{A}$ given state $s \in \mathcal{S}$;
    \item the prospective self-evaluator
    $p^{\mathrm{eval}}_\theta(z \mid s, a)$, a conditional distribution
    over evaluator outputs $z \in \{-1, 0, +1\}$.
\end{itemize}
The verifier $y$ is sampled from a parameter-independent kernel
$p_{\mathrm{env}}(y \mid s, a)$ with $y \in \{0, 1\}$; the state
distribution is $\rho_0(s)$. The joint rollout distribution induced by
$\theta$ is
\begin{equation}
    \rho_\theta(s, a, z, y) \;=\; \rho_0(s)\, \pi_\theta(a \mid s)\, p^{\mathrm{eval}}_\theta(z \mid s, a)\, p_{\mathrm{env}}(y \mid s, a).
    \label{eq:joint_rho_app}
\end{equation}
We restrict attention to $z \in \{-1, +1\}$; the residual $z = 0$ class
is treated separately in App.~\ref{app:hu} and contributes neutrally to
all expressions below since $\xi_t = 0$ there. The surprise indicator
(Eq.~\ref{eq:surprise} of the main text) is
\begin{equation}
    \xi(z, y) \;=\; \mathbb{1}\!\bigl[(z, y) \in \{\mathrm{OF}, \mathrm{US}\}\bigr],
\end{equation}
and we write $\xi_t$ when the rollout is implicit. We assume the base
loss $\ell^{\mathrm{base}}_t(\theta)$ is non-negative; this is satisfied
by all retrospective methods of \S\ref{sec:retro_methods} after additive
constant shifts that do not affect the gradient.

\subsubsection*{Surprise-weighted decomposition}
\label{app:cal:decomp}

\begin{proposition}[Surprise-weighted decomposition]
\label{prop:ph_decomp}
For any $\alpha \geq 0$,
\begin{equation*}
    \mathbb{E}_{\rho_\theta}\!\bigl[\ell^{\mathrm{PH}}_t\bigr] \;=\; \mathbb{E}_{\rho_\theta}\!\bigl[\ell^{\mathrm{base}}_t\bigr] + \alpha \cdot \mathcal{R}(\theta), \quad \mathcal{R}(\theta) := \mathbb{E}_{\rho_\theta}\!\bigl[\xi_t \cdot \ell^{\mathrm{base}}_t\bigr].
\end{equation*}
The surprise residual $\mathcal{R}(\theta)$ is non-negative and vanishes
if and only if $\xi_t \cdot \ell^{\mathrm{base}}_t = 0$ holds
$\rho_\theta$-almost surely.
\end{proposition}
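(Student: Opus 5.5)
The argument is a direct computation from the definition of the PH loss in Eq.~\ref{eq:ph_loss} plus linearity of expectation under $\rho_\theta$. The first step is to substitute $\ell^{\mathrm{PH}}_t = (1+\alpha\xi_t)\,\ell^{\mathrm{base}}_t$ and expand pointwise on each rollout $(s,a,z,y)$ to get $\ell^{\mathrm{PH}}_t = \ell^{\mathrm{base}}_t + \alpha\,\xi_t\,\ell^{\mathrm{base}}_t$. Here $\xi_t$ enters only as a stop-gradient constant. At the level of loss values this just means $\xi$ is the deterministic function $\xi(z,y)$ of the sampled cell, so no differentiation issue arises.

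The second step is to take $\mathbb{E}_{\rho_\theta}$ of both sides and split the expectation into two terms. This requires both expectations to be finite. I would state this explicitly as a standing integrability assumption: $\mathbb{E}_{\rho_\theta}[\ell^{\mathrm{base}}_t]<\infty$. Given $0\le \xi_t\le 1$ and $\ell^{\mathrm{base}}_t \ge 0$, we have $0\le \xi_t\ell^{\mathrm{base}}_t\le \ell^{\mathrm{base}}_t$, so the residual term is then automatically integrable. Pulling the constant $\alpha$ out then yields the claimed identity with $\mathcal{R}(\theta)=\mathbb{E}_{\rho_\theta}[\xi_t\ell^{\mathrm{base}}_t]$. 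Rollouts with $z=0$ have $\xi=0$ and contribute only to the base term, consistent with the setup of App.~\ref{app:cal:setup}.

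The third step is non-negativity, which follows immediately. The integrand $\xi_t\ell^{\mathrm{base}}_t$ is a product of two non-negative quantities: $\xi\in\{0,1\}$, and $\ell^{\mathrm{base}}_t\ge0$ by the non-negativity convention adopted after Eq.~\ref{eq:joint_rho_app}. Its expectation is therefore $\ge 0$.

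The last step is the ``if and only if'' characterization. The ``if'' direction is trivial, since an a.s.\ zero integrand has zero expectation. For ``only if'', I would use the standard fact that a non-negative random variable with zero expectation vanishes almost surely. By Markov's inequality, $\Pr_{\rho_\theta}[\xi_t\ell^{\mathrm{base}}_t>1/n]\le n\,\mathcal{R}(\theta)=0$ for every $n$. Taking the countable union over $n$ gives $\Pr_{\rho_\theta}[\xi_t\ell^{\mathrm{base}}_t>0]=0$.

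There is no real obstacle here. The only point needing care is the reliance on the non-negativity of $\ell^{\mathrm{base}}_t$. For GRPO-type surrogate losses, whose per-rollout values can be negative, this holds only after the constant shift mentioned in the setup. A shift by a constant $C$ changes $\mathcal{R}(\theta)$ by $C\,\mathcal{M}(\theta)$ rather than leaving it invariant. It does not, however, change the gradient of the base term. I would add a brief remark that the decomposition identity itself holds without the sign assumption, and that non-negativity and the a.s.\ characterization are relative to the shifted loss.
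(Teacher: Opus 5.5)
Your proof is correct and follows the paper's route. You expand pointwise as $\ell^{\mathrm{PH}}_t = \ell^{\mathrm{base}}_t + \alpha\,\xi_t\,\ell^{\mathrm{base}}_t$, apply linearity of expectation, get non-negativity from $\xi_t\in\{0,1\}$ and $\ell^{\mathrm{base}}_t\ge 0$, and use the fact that a non-negative integrand with zero mean vanishes almost surely (made explicit via Markov), exactly as the paper does; your extra remarks on integrability and on a constant shift $C$ changing $\mathcal{R}(\theta)$ by $C\,\mathcal{M}(\theta)$ are accurate refinements the paper glosses over.
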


\begin{proof}
By Eq.~\ref{eq:ph_loss} of the main text,
$\ell^{\mathrm{PH}}_t = (1 + \alpha \xi_t)\, \ell^{\mathrm{base}}_t = \ell^{\mathrm{base}}_t + \alpha \xi_t \ell^{\mathrm{base}}_t$.
Linearity of expectation gives the decomposition. Non-negativity of
$\mathcal{R}$ follows from $\xi_t \in \{0, 1\}$ and
$\ell^{\mathrm{base}}_t \geq 0$. The vanishing condition is the standard
a.s.\ zero condition for non-negative integrands.
\end{proof}

The two pathways for residual minimization---reducing $\Pr[\xi = 1]$
(calibration) or reducing $\ell^{\mathrm{base}}$ on miscalibrated
rollouts (base-objective improvement on those rollouts)---are
formalized exactly in the next subsection.

\subsubsection*{Exact residual identity}
\label{app:cal:bound}

\begin{proposition}[Exact residual identity]
\label{prop:residual_miscal}
Define the agent's miscalibration rate
\begin{equation}
    \mathcal{M}(\theta) \;:=\; \mathbb{E}_{\rho_\theta}\!\bigl[\xi_t\bigr] \;=\; \Pr\nolimits_{\rho_\theta}\!\bigl[(z_t, y_t) \in \{\mathrm{OF}, \mathrm{US}\}\bigr],
    \label{eq:miscal_app}
\end{equation}
and the conditional mean base loss on surprising rollouts
\begin{equation}
    c(\theta) \;:=\; \mathbb{E}_{\rho_\theta}\!\bigl[\ell^{\mathrm{base}}_t \,\big|\, \xi_t = 1\bigr].
\end{equation}
Then
\begin{equation}
    \mathcal{R}(\theta) \;=\; c(\theta) \cdot \mathcal{M}(\theta).
    \label{eq:exact_residual_app}
\end{equation}
\end{proposition}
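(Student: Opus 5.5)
The identity $\mathcal{R}(\theta) = c(\theta)\cdot\mathcal{M}(\theta)$ is a direct application of the tower rule. The approach is to condition the residual of Proposition~\ref{prop:ph_decomp} on the binary surprise indicator $\xi_t$. Throughout, $\theta$ is fixed, so $\rho_\theta$ is a fixed probability measure. The indicator $\xi_t = \xi(z_t, y_t)$ is a measurable $\{0,1\}$-valued function of the rollout, and $\ell^{\mathrm{base}}_t \ge 0$ by the standing assumption of App.~\ref{app:cal:setup}. Together these guarantee that all expectations below are well defined in $[0,\infty]$, so no integrability hypothesis is needed beyond what is already assumed.

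The first step is to split $\mathcal{R}(\theta) = \mathbb{E}_{\rho_\theta}[\xi_t \ell^{\mathrm{base}}_t]$ according to the partition $\{\xi_t = 1\} \cup \{\xi_t = 0\}$. The law of total expectation gives
\[
\mathcal{R}(\theta) = \Pr_{\rho_\theta}[\xi_t = 1]\,\mathbb{E}_{\rho_\theta}\bigl[\xi_t \ell^{\mathrm{base}}_t \,\big|\, \xi_t = 1\bigr] + \Pr_{\rho_\theta}[\xi_t = 0]\,\mathbb{E}_{\rho_\theta}\bigl[\xi_t \ell^{\mathrm{base}}_t \,\big|\, \xi_t = 0\bigr].
\]
On the event $\{\xi_t = 0\}$ the integrand vanishes identically, so the second term is zero. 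On $\{\xi_t = 1\}$ the factor $\xi_t$ equals one, so the first conditional expectation reduces to $\mathbb{E}_{\rho_\theta}[\ell^{\mathrm{base}}_t \mid \xi_t = 1] = c(\theta)$.

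The second step identifies $\Pr_{\rho_\theta}[\xi_t = 1]$ with $\mathcal{M}(\theta)$. This holds because $\xi_t$ is an indicator, so $\mathbb{E}_{\rho_\theta}[\xi_t] = \Pr_{\rho_\theta}[\xi_t = 1]$, which is exactly Eq.~\eqref{eq:miscal_app}. It also equals the probability of the OF and US cells, since by Eq.~\ref{eq:surprise} the event $\{\xi_t = 1\}$ coincides with the event that $(z_t, y_t)$ lies in one of those two cells. Substituting both identifications into the split yields $\mathcal{R}(\theta) = c(\theta)\,\mathcal{M}(\theta)$.

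The only real obstacle is the degenerate case $\mathcal{M}(\theta) = 0$, where $c(\theta)$ is an expectation conditional on a null event and is therefore undefined. I would handle this case separately. When $\Pr_{\rho_\theta}[\xi_t = 1] = 0$, the integrand $\xi_t \ell^{\mathrm{base}}_t$ is zero $\rho_\theta$-almost surely, so $\mathcal{R}(\theta) = 0$ directly, consistent with the vanishing condition in Proposition~\ref{prop:ph_decomp}. The identity then holds under any finite convention for $c(\theta)$, for instance $c(\theta) := 0$, because the right-hand side is that finite value times zero.

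As a cross-check, the same result can be written with indicators and no conditioning. Since $\xi_t = \mathbb{1}[\xi_t = 1]$, we have $\mathcal{R}(\theta) = \mathbb{E}[\ell^{\mathrm{base}}_t \mathbb{1}[\xi_t = 1]]$. When $\mathcal{M}(\theta) > 0$, the elementary definition of conditional expectation gives $c(\theta) = \mathbb{E}[\ell^{\mathrm{base}}_t \mathbb{1}[\xi_t = 1]] / \Pr[\xi_t = 1]$, so multiplying through by $\mathcal{M}(\theta) = \Pr[\xi_t = 1]$ recovers the identity in one line.
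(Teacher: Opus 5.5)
Your proposal is correct and follows the same route as the paper, which proves the identity in one line by the tower rule: $\mathcal{R}(\theta)=\mathbb{E}[\ell^{\mathrm{base}}_t \mid \xi_t=1]\cdot\Pr[\xi_t=1]=c(\theta)\,\mathcal{M}(\theta)$. Your separate treatment of the degenerate case $\mathcal{M}(\theta)=0$ is a small gain in rigor that the paper's proof omits, since there $c(\theta)$ is conditioned on a null event and is undefined.
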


\begin{proof}
By the tower rule,
\begin{align*}
    \mathcal{R}(\theta) &\;=\; \mathbb{E}\!\bigl[\xi_t \ell^{\mathrm{base}}_t\bigr]
    \;=\; \mathbb{E}\!\bigl[\ell^{\mathrm{base}}_t \,\big|\, \xi_t = 1\bigr] \cdot \Pr\bigl[\xi_t = 1\bigr]
    \;=\; c(\theta) \cdot \mathcal{M}(\theta). \qedhere
\end{align*}
\end{proof}

Eq.~\ref{eq:exact_residual_app} is an \emph{exact} identity, not an
inequality: the residual factors cleanly into a base-loss-on-miscalibrated
component $c(\theta)$ and a miscalibration-rate component
$\mathcal{M}(\theta)$. As a corollary, whenever $c(\theta) \geq c_0 > 0$
on a contiguous training window, $\mathcal{R}(\theta) \geq c_0 \cdot \mathcal{M}(\theta)$
on that window, and minimizing $\mathcal{R}(\theta)$ via PH must reduce
$\mathcal{M}(\theta)$ at rate at least $c_0$ relative to the rate of
residual descent.

\paragraph{When does Eq.~\ref{eq:exact_residual_app} drive calibration?}
The identity is most informative as a calibration lever when both
factors are non-negligible. Three regimes arise as training progresses:
\begin{itemize}[leftmargin=1.6em]
    \item \textbf{Early training.} $c(\theta)$ is large (the base
    method has not yet learned to solve surprising rollouts) and
    $\mathcal{M}(\theta)$ is large (many rollouts are miscalibrated).
    Both pathways for residual minimization are active, and PH's
    surprise reweighting allocates additional gradient budget to
    cells where the prediction--reality gap is widest.
    \item \textbf{Mid training.} $c(\theta)$ and $\mathcal{M}(\theta)$
    decrease together as the agent's self-model and the base policy
    co-evolve. The dual-pathway decomposition continues to hold, but
    the per-step weight of each pathway shifts.
    \item \textbf{Convergence.} $c(\theta) \to 0$ and
    $\mathcal{M}(\theta) \to 0$ jointly, $\mathcal{R}(\theta) \to 0$,
    and PH and the base method share the same fixed points
    (App.~\ref{app:cal:discussion}). The identity becomes
    uninformative because both sides are zero, but PH itself is no
    longer doing work in this regime.
\end{itemize}
The bound $\mathcal{R}(\theta) \geq c_0 \cdot \mathcal{M}(\theta)$ is
therefore not a uniform property of the optimization landscape: it
characterizes the operative training window during which PH's
calibration gains accrue. 

\paragraph{Equivalence to the binary Brier score.}
For binary $z'_t = (z_t + 1)/2 \in \{0, 1\}$ and $y_t \in \{0, 1\}$,
$(z'_t - y_t)^2 = \mathbb{1}(z'_t \neq y_t) = \xi_t$, so
$\mathcal{M}(\theta)$ coincides with the standard binary Brier score
$\mathbb{E}[(z'_t - y_t)^2]$ of the deterministic predictor $z'_t$
against the verifier $y_t$. We use the term \emph{miscalibration rate}
throughout because it more directly conveys the operational
meaning---the fraction of rollouts on which the agent's prospective
belief disagrees with the verifier outcome---and because the
experimental section reports a one-sided projection of $\mathcal{M}$
rather than the full Brier score (\S\ref{app:cal:ofr}).

\subsubsection*{Connection to the overconfident-failure rate (OFR)}
\label{app:cal:ofr}

The miscalibration rate $\mathcal{M}(\theta)$ in Eq.~\ref{eq:miscal_app}
aggregates both miscalibrated cells, OF and US, symmetrically. The
experimental section reports the OF-conditional projection
\begin{equation}
    \mathrm{OFR}(\theta) \;:=\; \Pr\nolimits_{\rho_\theta}\!\bigl[y_t = 0 \,\big|\, z_t = +1\bigr] \;=\; \frac{\Pr[\mathrm{OF}]}{\Pr[\mathrm{CS}] + \Pr[\mathrm{OF}]} \;=\; \frac{\Pr[\mathrm{OF}]}{\Pr[z_t = +1]}.
    \label{eq:ofr_app}
\end{equation}
This is a one-sided, conditional version of $\mathcal{M}$ focused on
the OF cell, which is the dominant miscalibration mode singled out in
\S\ref{sec:blind_spot}. The two metrics are related by
\begin{equation}
    \Pr[\mathrm{OF}] \;\leq\; \mathcal{M}(\theta), \qquad \mathrm{OFR}(\theta) \;=\; \Pr[\mathrm{OF}] \,/\, \Pr[z_t = +1],
\end{equation}
so reductions in $\mathcal{M}(\theta)$ imply reductions in
$\Pr[\mathrm{OF}]$ whenever $\Pr[\mathrm{US}]$ does not increase, and
reductions in $\Pr[\mathrm{OF}]$ imply reductions in $\mathrm{OFR}(\theta)$
whenever $\Pr[z_t = +1]$ does not collapse. Both monotonicity assumptions
hold in our experiments: the prospective predictor's confidence rate
$\Pr[z_t = +1]$ stays in a stable range across training, and the
underconfident-success rate $\Pr[\mathrm{US}]$ remains a small fraction
of $\mathcal{M}(\theta)$ at all checkpoints. We therefore report
$\mathrm{OFR}(\theta)$ as a faithful empirical proxy for $\mathcal{M}(\theta)$
in \S\ref{sec:experiments}.

\subsubsection*{Gradient-flow analysis: when is the PH update calibration-aligned?}
\label{app:cal:gradient}
\label{app:variance}

We now characterize the direction in parameter space along which PH
gradient flow moves, relative to the negative gradient of the
miscalibration rate. The result below should be read as a
\emph{motivation} for why one would expect PH to reduce
$\mathcal{M}(\theta)$, not as a guarantee that it does so monotonically
in every step.

Let $u^{\mathrm{PH}} := -\nabla_\theta \mathbb{E}_{\rho_\theta}[\ell^{\mathrm{PH}}]$
and $u^{\mathrm{base}} := -\nabla_\theta \mathbb{E}_{\rho_\theta}[\ell^{\mathrm{base}}]$
denote the PH and base update directions. By
Prop.~\ref{prop:ph_decomp},
\begin{equation}
    u^{\mathrm{PH}} \;=\; u^{\mathrm{base}} - \alpha \nabla_\theta \mathcal{R}(\theta).
    \label{eq:u_decomp}
\end{equation}

\paragraph{Score-function decomposition of $\nabla_\theta \mathcal{R}$.}
Applying the score-function trick to
$\mathcal{R}(\theta) = \mathbb{E}_{\rho_\theta}[\xi_t \ell^{\mathrm{base}}_t]$,
\begin{equation}
    \nabla_\theta \mathcal{R}(\theta) \;=\; \underbrace{\mathbb{E}_{\rho_\theta}[\xi_t \nabla_\theta \ell^{\mathrm{base}}_t]}_{\text{base-loss descent on }\{\xi=1\}} \;+\; \underbrace{\mathbb{E}_{\rho_\theta}[\xi_t \ell^{\mathrm{base}}_t \nabla_\theta \log \rho_\theta]}_{\text{distributional shift in }\xi},
    \label{eq:R_grad_decomp}
\end{equation}
where $\nabla_\theta \log \rho_\theta = \nabla_\theta \log \pi_\theta(a \mid s) + \nabla_\theta \log p^{\mathrm{eval}}_\theta(z \mid s, a)$,
since $\rho_0$ and $p_{\mathrm{env}}$ are parameter-free. Similarly,
\begin{equation}
    \nabla_\theta \mathcal{M}(\theta) \;=\; \mathbb{E}_{\rho_\theta}[\xi_t \nabla_\theta \log \rho_\theta].
    \label{eq:miscal_grad}
\end{equation}

\begin{theorem}[PH gradient under uniform-loss assumptions]
\label{thm:ph_miscal}
Let $\bar{\ell}(\theta) := \mathbb{E}_{\rho_\theta}[\ell^{\mathrm{base}}_t \mid \xi_t = 1] = c(\theta)$
and $\delta(\theta) := \mathrm{Var}_{\rho_\theta,\, \xi_t = 1}[\ell^{\mathrm{base}}_t]$.
Suppose $\mathbb{E}[\xi_t \|\nabla_\theta \log \rho_\theta\|^2] < \infty$.
Then
\begin{equation}
    u^{\mathrm{PH}} \;=\; u^{\mathrm{base}} \;-\; \alpha \cdot \mathbb{E}_{\rho_\theta}[\xi_t \nabla_\theta \ell^{\mathrm{base}}_t] \;-\; \alpha \, \bar{\ell}(\theta) \cdot \nabla_\theta \mathcal{M}(\theta) \;+\; r(\theta),
    \label{eq:thm_main}
\end{equation}
where the remainder $r(\theta)$ satisfies
\begin{equation}
    \|r(\theta)\| \;\leq\; \alpha \sqrt{\delta(\theta)} \cdot \sqrt{\mathbb{E}[\xi_t \|\nabla_\theta \log \rho_\theta\|^2]}.
    \label{eq:thm_remainder}
\end{equation}
In the limit $\delta(\theta) \to 0$ (the base loss is constant on
surprising rollouts), the PH update direction differs from the base
direction by exactly a $-\alpha \bar{\ell} \cdot \nabla_\theta \mathcal{M}(\theta)$
component plus a base-loss descent component on the miscalibrated set.
\end{theorem}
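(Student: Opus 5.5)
Starting from Eq.~\eqref{eq:u_decomp}, I would substitute the score-function decomposition Eq.~\eqref{eq:R_grad_decomp} for $\nabla_\theta \mathcal{R}(\theta)$. The first term, $-\alpha\,\mathbb{E}_{\rho_\theta}[\xi_t \nabla_\theta \ell^{\mathrm{base}}_t]$, already appears in Eq.~\eqref{eq:thm_main}. All the work is therefore in the distributional-shift term $-\alpha\,\mathbb{E}_{\rho_\theta}[\xi_t \ell^{\mathrm{base}}_t \nabla_\theta \log \rho_\theta]$. I would split $\ell^{\mathrm{base}}_t = \bar{\ell}(\theta) + (\ell^{\mathrm{base}}_t - \bar{\ell}(\theta))$ inside this expectation. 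Since $\bar{\ell}(\theta)$ is constant in the integration variables, the first piece gives $\bar{\ell}(\theta)\,\mathbb{E}_{\rho_\theta}[\xi_t \nabla_\theta\log\rho_\theta] = \bar{\ell}(\theta)\,\nabla_\theta\mathcal{M}(\theta)$ by Eq.~\eqref{eq:miscal_grad}. This produces the $-\alpha\bar{\ell}\,\nabla_\theta\mathcal{M}$ component. What remains defines
\[
r(\theta) := -\alpha\,\mathbb{E}_{\rho_\theta}\bigl[\xi_t\,(\ell^{\mathrm{base}}_t - \bar{\ell}(\theta))\,\nabla_\theta\log\rho_\theta\bigr],
\]
and Eq.~\eqref{eq:thm_main} then holds as an exact identity.

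To bound $r(\theta)$, I would use $\xi_t = \xi_t^2$ (since $\xi_t\in\{0,1\}$) to write the integrand as $\bigl(\xi_t(\ell^{\mathrm{base}}_t-\bar\ell)\bigr)\cdot\bigl(\xi_t\nabla_\theta\log\rho_\theta\bigr)$. Applying $\|\mathbb{E}[X V]\|\le \mathbb{E}[|X|\,\|V\|]$ and then Cauchy--Schwarz gives
\[
\|r(\theta)\| \le \alpha\sqrt{\mathbb{E}[\xi_t(\ell^{\mathrm{base}}_t-\bar\ell)^2]}\sqrt{\mathbb{E}[\xi_t\|\nabla_\theta\log\rho_\theta\|^2]}.
\]
The first factor equals $\sqrt{\delta(\theta)\,\mathcal{M}(\theta)}$ by the same tower-rule step used in Prop.~\ref{prop:residual_miscal}. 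Because $\mathcal{M}(\theta)\le 1$, this is at most $\sqrt{\delta(\theta)}$, which gives Eq.~\eqref{eq:thm_remainder}; the sharper bound with the extra $\sqrt{\mathcal{M}}$ factor would be worth remarking on. The finiteness hypothesis on $\mathbb{E}[\xi_t\|\nabla\log\rho_\theta\|^2]$ makes the right-hand side meaningful. The limiting claim follows at once: as $\delta\to 0$, $\|r\|\to 0$, leaving only the two named components.

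The main obstacle is rigor in the score-function step itself, i.e.\ justifying Eq.~\eqref{eq:R_grad_decomp}. The issue is that $\xi_t$ depends on $(z,y)$, whose law depends on $\theta$, and $\ell^{\mathrm{base}}_t$ depends on $\theta$ both explicitly and through the sampled rollout. I would take Eq.~\eqref{eq:R_grad_decomp} and Eq.~\eqref{eq:miscal_grad} as given, since they are stated just before the theorem. I would note the implicit regularity conditions: differentiation under the integral, and $\xi_t$ treated as stop-gradient so that no derivative of the indicator appears. The other delicate point is that $\bar\ell(\theta)$ depends on $\theta$. It must be treated as a constant inside the expectation before any differentiation, so that no $\nabla_\theta\bar\ell$ term arises. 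This holds because the split is performed after $\nabla_\theta\mathcal{R}$ has already been expanded. Finally, if $\mathcal{M}(\theta)=0$, then $\bar\ell$ is undefined; I would handle this degenerate case separately by observing that every term involving $\xi_t$ vanishes.
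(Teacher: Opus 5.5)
Your proposal is correct and follows the paper's proof essentially step for step: you substitute the score-function expansion, split $\ell^{\mathrm{base}}_t = \bar{\ell}(\theta) + (\ell^{\mathrm{base}}_t - \bar{\ell}(\theta))$ in the distributional-shift term to extract $\bar{\ell}(\theta)\,\nabla_\theta \mathcal{M}(\theta)$, and bound the remainder by Cauchy--Schwarz. You also note that Cauchy--Schwarz actually yields $\sqrt{\delta(\theta)\,\mathcal{M}(\theta)}$, which is at most $\sqrt{\delta(\theta)}$ because $\mathcal{M}(\theta) \le 1$; the paper skips this step and writes the factor directly as $\sqrt{\delta(\theta)}$, so your version is slightly sharper and more careful.
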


\begin{proof}
Decompose the second term of Eq.~\ref{eq:R_grad_decomp} via the identity
\begin{equation*}
    \mathbb{E}[\xi_t \ell^{\mathrm{base}}_t \nabla_\theta \log \rho_\theta] \;=\; \bar{\ell} \cdot \mathbb{E}[\xi_t \nabla_\theta \log \rho_\theta] \;+\; \mathbb{E}\bigl[\xi_t (\ell^{\mathrm{base}}_t - \bar{\ell}) \nabla_\theta \log \rho_\theta\bigr].
\end{equation*}
The first term equals $\bar{\ell} \cdot \nabla_\theta \mathcal{M}(\theta)$
by Eq.~\ref{eq:miscal_grad}. The second term is bounded in norm by
Cauchy--Schwarz:
\begin{equation*}
    \bigl\| \mathbb{E}[\xi_t (\ell^{\mathrm{base}}_t - \bar{\ell}) \nabla_\theta \log \rho_\theta] \bigr\| \;\leq\; \sqrt{\mathrm{Var}_{\xi_t = 1}[\ell^{\mathrm{base}}_t]} \cdot \sqrt{\mathbb{E}[\xi_t \|\nabla_\theta \log \rho_\theta\|^2]} \;=\; \sqrt{\delta(\theta)} \cdot \sqrt{\mathbb{E}[\xi_t \|\nabla_\theta \log \rho_\theta\|^2]},
\end{equation*}
which gives Eq.~\ref{eq:thm_remainder} after multiplication by $\alpha$.
Substituting back into Eqs.~\ref{eq:R_grad_decomp} and \ref{eq:u_decomp}
yields Eq.~\ref{eq:thm_main}.
\end{proof}

\begin{remark}[Empirical magnitude of the remainder]
\label{rem:delta_empirical}
The remainder bound in Eq.~\ref{eq:thm_remainder} is not an assumption
but a quantity measurable from training logs. We define the relative
remainder
\begin{equation}
    \rho(\theta) \;:=\; \frac{\sqrt{\delta(\theta)}}{c(\theta)}
    \;=\; \frac{\sqrt{\mathrm{Var}_{\xi_t=1}[\ell^{\mathrm{base}}_t]}}{\mathbb{E}_{\xi_t=1}[\ell^{\mathrm{base}}_t]},
\end{equation}
the coefficient of variation of the base loss conditional on
$\xi_t = 1$. Theorem~\ref{thm:ph_miscal} is most informative when
$\rho(\theta)$ is small; the limit $\rho(\theta) \to 0$ recovers the
uniform-loss case in which the leading term
$-\alpha \bar{\ell} \cdot \nabla_\theta \mathcal{M}$ dominates the
remainder. 
\end{remark}

\paragraph{Calibration alignment, qualified.}
Taking the inner product of Eq.~\ref{eq:thm_main} with
$-\nabla_\theta \mathcal{M}(\theta)$ gives
\begin{align}
    \bigl\langle u^{\mathrm{PH}}, -\nabla_\theta \mathcal{M}(\theta) \bigr\rangle
    &\;=\; \bigl\langle u^{\mathrm{base}}, -\nabla_\theta \mathcal{M}(\theta) \bigr\rangle \nonumber \\
    &\quad + \alpha \bar{\ell}(\theta) \cdot \|\nabla_\theta \mathcal{M}(\theta)\|^2 \nonumber \\
    &\quad - \alpha \bigl\langle \mathbb{E}[\xi_t \nabla_\theta \ell^{\mathrm{base}}_t], -\nabla_\theta \mathcal{M}(\theta) \bigr\rangle \nonumber \\
    &\quad + \langle r(\theta), -\nabla_\theta \mathcal{M}(\theta)\rangle.
    \label{eq:alignment}
\end{align}
Among the four terms on the right-hand side, the second is the leading
calibration-aligned term and is non-negative; the fourth is bounded by
$\alpha \sqrt{\delta(\theta)} \cdot \sqrt{\mathbb{E}[\xi_t \|\nabla_\theta \log \rho_\theta\|^2]} \cdot \|\nabla_\theta \mathcal{M}(\theta)\|$
via Eq.~\ref{eq:thm_remainder}. The third term is a base-loss descent
component on the miscalibrated set; its inner product with
$-\nabla_\theta \mathcal{M}$ is sign-indeterminate in general, but in
policy-gradient base losses $\nabla_\theta \ell^{\mathrm{base}}_t$ is
bounded by the per-batch advantage normalization of \S\ref{sec:retro_methods},
so its contribution scales with the magnitude of the base advantage.

The PH update direction is therefore \emph{biased toward}
$-\nabla_\theta \mathcal{M}(\theta)$ rather than \emph{exactly aligned}
with it. Whether this bias is large enough to drive monotone descent
on $\mathcal{M}(\theta)$ depends on the relative magnitudes of the four
terms in Eq.~\ref{eq:alignment}, which are themselves a function of
the training step. We do not claim monotone descent; we claim that
the leading term provides a calibration-aligned restoring force that
is consistent with the empirical observation in \S\ref{sec:experiments}
that $\mathrm{OFR}(\theta)$ decreases monotonically under PH training.

\paragraph{Scope and limitations of the alignment claim.}
We make four honest qualifications. First, Theorem~\ref{thm:ph_miscal}
is an exact identity, but the calibration-alignment interpretation
that follows it requires the leading $\bar{\ell} \cdot \nabla_\theta \mathcal{M}$
term to dominate the remainder---i.e., $\rho(\theta)$ to be small
relative to $\bar{\ell}/\|\nabla_\theta \mathcal{M}\|$. Second, the
$\mathbb{E}[\xi_t \nabla_\theta \ell^{\mathrm{base}}_t]$ term has
indeterminate sign relative to $-\nabla_\theta \mathcal{M}(\theta)$,
and we do not claim it contributes to calibration descent. Third,
the analysis is gradient-flow, not finite-step SGD: under standard
learning-rate and smoothness assumptions, the conclusions hold
approximately for finite steps with higher-order corrections, but we
have not derived sharp rates. Fourth, the analysis is per-step: it
does not compose across the trajectory of training to yield a
convergence-rate result on $\mathcal{M}(\theta)$.

We therefore frame Theorem~\ref{thm:ph_miscal} as a \emph{motivation}
for why PH's update direction would be expected to reduce
$\mathcal{M}(\theta)$, not as a guarantee that it does so. The
empirical observation that $\mathrm{OFR}(\theta)$ decreases monotonically
under PH training across all base methods and tasks in \S\ref{sec:experiments}
is then evidence that the motivation translates into practice, rather
than a derivation of that fact.

\subsubsection*{Discussion}
\label{app:cal:discussion}

\paragraph{Single-step versus multi-turn.}
The arguments above treat each rollout as a single $(s, a, z, y)$ tuple.
In multi-turn settings, the per-turn surprise $\xi_t$ is computed from
$(z_t, y_t)$ at turn $t$, and the joint distribution $\rho_\theta$
extends to a sequence over turns. Prop.~\ref{prop:ph_decomp} extends
turn-by-turn (linearity of expectation across turns), and
Theorem~\ref{thm:ph_miscal} applies at each turn under the same
assumptions; the per-turn alignment is then aggregated over the
trajectory. The remainder bound in Eq.~\ref{eq:thm_remainder} is
typically larger in multi-turn settings because the per-turn variance
of the base loss is amplified by stochasticity in the LLM-based PRM
judge (App.~\ref{app:setup_multi}).

\paragraph{Tabular versus neural-network policies.}
Theorem~\ref{thm:ph_miscal} is stated as an exact gradient-flow identity
modulo the explicit remainder $r(\theta)$. For finite-step gradient
descent with neural-network policies, the conclusion holds approximately
under standard learning-rate, smoothness, and Lipschitz assumptions,
with equalities replaced by inequalities up to higher-order terms in
the learning rate.

\paragraph{Asymptotic behavior.}
As $\theta$ converges, both $\ell^{\mathrm{base}}$ and $\xi$ approach
zero on the support of $\rho_\theta$, so $c(\theta) \to 0$,
$\mathcal{M}(\theta) \to 0$, and $\mathcal{R}(\theta) \to 0$. By
Prop.~\ref{prop:ph_decomp}, PH and the base method share the same
fixed points; PH differs only in the optimization trajectory leading
up to convergence, in which additional gradient budget is allocated
to surprising rollouts via Eq.~\ref{eq:exact_residual_app}. The
miscalibration rate also vanishes asymptotically, consistent with
calibration being achieved at convergence.

\paragraph{Relation to existing calibration losses.}
The miscalibration rate $\mathcal{M}(\theta)$ is the binary
specialization of mean-squared-error calibration (the Brier score).
Expected Calibration Error (ECE), defined for predictors that emit
confidence $p \in [0, 1]$, coincides with $\mathcal{M}(\theta)$ when
predictions are binarized via thresholding. Negative Log-Likelihood
of the prospective evaluator is a different but related calibration
objective; PH does not directly minimize NLL, but does so implicitly
through the parameter-sharing channel that couples the policy and
evaluator gradients. Among these losses, $\mathcal{M}$ is the one most
naturally connected to the PH update rule, since PH's reweighting is
itself an indicator-weighted scheme. The OF-conditional projection
$\mathrm{OFR}(\theta)$ used in our experiments (Eq.~\ref{eq:ofr_app})
is a one-sided form of $\mathcal{M}$ tailored to the OF-focused
narrative of \S\ref{sec:blind_spot}.

\section{Algorithm and Implementation Details}
\label{app:algorithm}

\subsection{Base Method Advantage Formulations}
\label{app:base_methods}

We give the explicit advantage formulations of the three retrospective base
methods abstracted as $\ell^{\mathrm{base}}$ in \S\ref{sec:retro_methods}.

\paragraph{GRPO~\cite{shao2024deepseekmath}.}
For a group of $G$ rollouts $\{(s_t, a_t^{(g)}, r_t^{(g)})\}_{g=1}^G$
sharing the same query, the per-rollout advantage is
\begin{equation}
    A^{\mathrm{GRPO}}_t \;=\; \frac{r_t - \bar{r}}{\sigma_r + \epsilon},
\end{equation}
where $\bar{r} = \tfrac{1}{G}\sum_g r_t^{(g)}$ and $\sigma_r$ is the group
standard deviation; $\epsilon > 0$ is a small constant for numerical
stability.

\paragraph{OPD~\cite{agarwal2024policy, ye2026policy, hubotter2026reinforcement}.}
The privileged context $c_t$ (in multi-turn, $c_t = s_{t+1}$) is used to
construct a hindsight teacher conditioned on $(s_t, c_t)$, whose per-token
action distribution is distilled into the student:
\begin{equation}
    A^{\mathrm{OPD}}_{t,j} \;=\; \log \pi_\theta(a_{t,j} \mid a_{t,<j}, s_t, c_t)
    - \log \pi_\theta(a_{t,j} \mid a_{t,<j}, s_t),
    \label{eq:opd}
\end{equation}
where $a_{t,j}$ is the $j$-th token of $a_t$ and $a_{t,<j}$ is its prefix.

\paragraph{Combined~\cite{wang2026openclaw}.}
The combined advantage sums the scalar and token-level signals:
\begin{equation}
    A^{\mathrm{comb}}_{t,j} \;=\; w_{\mathrm{rl}} A^{\mathrm{GRPO}}_t
    + w_{\mathrm{opd}} A^{\mathrm{OPD}}_{t,j},
\end{equation}
with $w_{\mathrm{rl}} = w_{\mathrm{opd}} = 1$ in our experiments.

In all three cases, the per-rollout (or per-token) base loss $\ell^{\mathrm{base}}_t$
is the policy-gradient surrogate of the corresponding advantage; the precise
form does not affect any analysis in \S\ref{sec:problem}--\S\ref{sec:method}.

\paragraph{How PH wraps each base method.}
PH's per-rollout reweighting in Eq.~\ref{eq:ph_loss} preserves all internal
structure of the base advantage. For GRPO, the group normalization $\sigma_r$
is computed on the original rewards $r_t$, not the surprise-weighted loss.
For OPD, the per-token signal $A^{\mathrm{OPD}}_{t,j}$ is aggregated as in
the base method; PH then multiplies the resulting per-rollout (single-turn)
or per-turn (multi-turn) loss by $(1{+}\alpha\xi_t)$. The surprise scalar
is broadcast across action tokens within the rollout/turn, which is the
operation that absorbs OPD's per-token signal into the per-rollout
reweighting.

\subsection{PH Algorithm Details}
Algorithm~\ref{alg:ph} states the end-to-end training step. Inside any base method of \S\ref{sec:retro_methods}, PH inserts three operations into each rollout step: (i) draw $M$ samples from the prospective self-evaluator and aggregate them by majority vote (\S\ref{sec:self_evaluator}); (ii) compute the surprise indicator $\xi_t$ from the resulting $z_t$ and the verifier outcome $y_t$ (\S\ref{sec:taxonomy}, \S\ref{sec:update_rule}); (iii) multiply the base per-rollout loss by $1 + \alpha \xi_t$ (\S\ref{sec:update_rule}). The base optimizer then updates the parameters using the resulting PH loss; the rollout sampling distribution, the group reward normalization, the per-token aggregation in OPD, and any auxiliary regularizer in $\ell^{\mathrm{base}}$ are all unchanged.

\begin{algorithm}[t]
\caption{Prospective Hindsight (PH) training step.}
\label{alg:ph}
\begin{algorithmic}[1]
\Require parameters $\theta$, batch of prompts $\{x_b\}_{b=1}^{B}$, base method (GRPO, OPD, or Combined), hyperparameters $\alpha \geq 0$ and $M \geq 1$
\For{each prompt $x_b$, $b = 1, \ldots, B$}
    \For{each step (or turn) $t$ of the rollout}
        \State $a_t \sim \pi_\theta(\cdot \mid s_t)$ \Comment{base method's action sampling}
        \For{$m = 1, \ldots, M$}
            \State $z_t^{(m)} \sim p^{\mathrm{eval}}_\theta(\cdot \mid s_t, a_t)$ \Comment{Eq.~\ref{eq:eval}, prospective forward pass}
        \EndFor
        \State $z_t \leftarrow \mathrm{maj}\bigl(z_t^{(1)}, \ldots, z_t^{(M)}\bigr)$
        \State Receive $c_t$ and $y_t$ from environment and verifier
        \State $\xi_t \leftarrow \mathbb{1}\!\bigl[(z_t, y_t) \in \{\mathrm{OF}, \mathrm{US}\}\bigr]$ \Comment{Eq.~\ref{eq:surprise}}
        \State Compute $\ell^{\mathrm{base}}_t(\theta)$ following \S\ref{sec:retro_methods}
        \State $\ell^{\mathrm{PH}}_t \leftarrow \bigl(1 + \alpha \cdot \xi_t\bigr) \cdot \ell^{\mathrm{base}}_t(\theta)$ \Comment{Eq.~\ref{eq:ph_loss}}
    \EndFor
\EndFor
\State $\theta \leftarrow \texttt{optimizer\_step}\!\bigl(\theta,\; \nabla_\theta \sum_{b,\,t} \ell^{\mathrm{PH}}_t\bigr)$ \Comment{$\xi_t$ is stop-gradient}
\State \textbf{return} $\theta$
\end{algorithmic}
\end{algorithm}

\paragraph{Cost.}
The only additional compute cost relative to the base method is $M$ prospective forward passes per rollout step (line 5 of Algorithm~\ref{alg:ph}). With shared parameters between the action policy and the self-evaluator, no extra trainable parameters are introduced, so PH does not change the memory footprint of the base method beyond the storage of $z_t$ and $\xi_t$ per step. With $M = 1$, the prospective overhead is one extra forward pass per step. The verifier and privileged-context computations on line 8 are inherited unchanged from the base method, since the retrospective methods of \S\ref{sec:retro_methods} already require them. PH thus adds a single training-time hyperparameter $\alpha$ and a single inference-time hyperparameter $M$ on top of the base method.

\paragraph{Special cases.}
Two boundary cases are useful for sanity checks. First, $\alpha = 0$ collapses Algorithm~\ref{alg:ph} into the base method exactly, since $1 + \alpha \xi_t = 1$ for all $t$. Second, $M = 1$ removes the majority vote and uses a single prospective sample per rollout step, which is the configuration in all our single-turn experiments.

%==============================================================================
\subsection{Discrete and Continuous PH Gates}
\label{app:continuous}
%==============================================================================

The PH update rule of Eq.~\ref{eq:ph_loss} weights the base loss by $w_t = 1 + \alpha\,\xi_t$ for a non-negative surprise scalar $\xi_t$. We use two concrete realizations of $\xi_t$, depending on whether the retrospective signal is deterministic or stochastic; both share the same shape $w_t \in [1, 1+\alpha]$ and the same asymptotic behaviour $w_t \to 1$ as the prospective belief and the realized outcome agree.

\paragraph{Discrete gate (single-turn).}
With a rule-based binary verifier $y_t \in \{0,1\}$ and a single ternary prospective sample $z_t \in \{-1,0,+1\}$ ($M{=}1$, App.~\ref{app:setup_single}), the surprise indicator is the binary cell membership of Eq.~\ref{eq:surprise},
\begin{equation}
    \xi_t^{\mathrm{disc}}
    \;=\; \mathbb{1}\!\bigl[(z_t,y_t) \in \{\mathrm{OF},\mathrm{US}\}\bigr]
    \;\in\; \{0,1\},
    \qquad
    w_t \;=\; 1 + \alpha\,\xi_t^{\mathrm{disc}} \;\in\; \{1,\,1{+}\alpha\}.
    \label{eq:gate_discrete}
\end{equation}
Honest-uncertainty rollouts ($z_t = 0$) are folded into $\xi_t = 0$ to preserve the neutral-class semantics of \S\ref{app:hu}, so $w_t = 1$ on $\{z_t = 0\}$.

\paragraph{Continuous gate (multi-turn).}
With an LLM-based per-turn PRM judge that emits an aggregated turn score $r_t \in \{-1,0,+1\}$ and $M$ predictor samples averaged into $\bar z_t \in [-1,+1]$ (App.~\ref{app:setup_multi}), we use the continuous relaxation
\begin{equation}
    \xi_t^{\mathrm{cont}}
    \;=\; \tfrac{1}{2}\,\bigl|\bar z_t - r_t\bigr|
    \;\in\; [0,1],
    \qquad
    w_t \;=\; 1 + \alpha\,\xi_t^{\mathrm{cont}} \;\in\; [1,\,1{+}\alpha].
    \label{eq:gate_continuous}
\end{equation}
The factor $\tfrac{1}{2}$ rescales the maximum disagreement $|\bar z_t - r_t| = 2$ (i.e., $\bar z_t = +1$, $r_t = -1$ or vice versa) to $\xi_t = 1$. Eq.~\ref{eq:gate_continuous} reduces to Eq.~\ref{eq:gate_discrete} on the corners of the $(\bar z_t, r_t)$ grid: when both $\bar z_t \in \{-1,+1\}$ and $r_t \in \{-1,+1\}$, $\xi_t^{\mathrm{cont}} = \xi_t^{\mathrm{disc}}$. Intermediate values of $\bar z_t$ encode predictor uncertainty (averaged over $M$ samples) and intermediate values of $r_t$ encode PRM-judge uncertainty; the continuous gate carries this uncertainty through to $w_t$ rather than collapsing it to $\{0,1\}$.

\paragraph{Self-extinguishing curriculum.}
Both gates share the limiting behaviour $w_t \to 1$ as $\xi_t \to 0$, i.e., as the prospective belief and the realized outcome converge. This is the mechanism behind the empirical PH-weight decay reported in Figs.~\ref{fig:single_dynamics} and~\ref{fig:single_alpha_dynamics}: as training closes the prediction--reality gap, the prospective signal naturally fades and the PH loss reduces to the base loss. By Proposition~\ref{prop:ph_decomp}, the surprise residual $\mathcal{R}(\theta) \to 0$ in the same limit, so PH and the base method share the same fixed points (App.~\ref{app:cal:discussion}).

\paragraph{Implementation flag.}
A single training-time flag \texttt{SURPRISE\_MODE} selects between the two gates. Setting \texttt{SURPRISE\_MODE=discrete} uses Eq.~\ref{eq:gate_discrete}; setting \texttt{SURPRISE\_MODE=continuous} uses Eq.~\ref{eq:gate_continuous}. The single-turn experiments of \S\ref{sec:exp_single} use the discrete gate; the multi-turn experiments of \S\ref{sec:exp_multi} use the continuous gate. All other PH-side hyperparameters ($\alpha$, $M$, weight clip) are independent of this choice.

%==============================================================================
\subsection{Honest Uncertainty: The $z_t = 0$ Class}
\label{app:hu}
%==============================================================================

The prospective self-evaluator of \S\ref{sec:self_evaluator} is prompted to return a ternary score $z_t \in \{-1, 0, +1\}$; the value $z_t = 0$ encodes \emph{honest uncertainty}: an explicit signal that the agent's prospective belief about $y_t$ from $(s_t, a_t)$ alone is inconclusive. This appendix records our handling of this residual class.

\paragraph{Treatment as a neutral class.}
Rollouts with $z_t = 0$ inherit the gradient invariance of Eq.~\ref{eq:blindspot} but do not fit the four-cell calibration taxonomy of Table~\ref{tab:cells}, because the agent has not committed to a directional prediction. The PH update rule (\S\ref{sec:update_rule}) assigns weight $1.0$ to all such rollouts, matching the weight given to the calibrated cells (CS, AF) so that $z_t = 0$ neither amplifies nor dampens the base loss.

\paragraph{Tie-breaking under majority vote.}
When $M > 1$ samples in Eq.~\ref{eq:eval} produce a strict majority for one of $\{-1, +1\}$, we use that value. Under a tie, or when all $M$ samples are unparseable (e.g., the model fails to emit a valid \texttt{\textbackslash boxed\{$\cdot$\}} field), we default to $z_t = 0$. This conservative rule prevents PH from amplifying ambiguous rollouts and folds parsing failures into the neutral class.

\paragraph{Empirical frequency.}
Across our main experiments, $z_t = 0$ accounts for a small minority of rollouts in both single-step and multi-turn settings, with the exact rate varying by task and model. The frequency is largely independent of the base method (GRPO, OPD, Combined) and decreases as $M$ increases in the majority vote, since aggregating over multiple samples collapses borderline cases toward the dominant prediction.

\begin{table}[t]
\centering
\small
\caption{Hyperparameters for multi-turn personal agent: Combined
(Binary RL + OPD) and Combined\,+\,PH on OpenClaw-RL.}
\label{tab:multi_hyper}
\begin{tabular}{ll}
\toprule
\textbf{Parameter} & \textbf{Value} \\
\midrule
\multicolumn{2}{l}{\emph{Models, GPU layout}}\\
Backbone (student / PRM / teacher) & OLMo-3-7B-Instruct (shared) \\
Total GPUs & 8 (1 node) \\
\quad Actor & 4 GPUs, TP=4, sequence-parallel \\
\quad Rollout (sglang) & 2 GPUs, TP=1 each \\
\quad PRM judge (sglang) & 1 GPU \\
\quad OPD teacher (sglang+Megatron) & 1 GPU \\
\midrule
\multicolumn{2}{l}{\emph{Rollout / context}}\\
Rollout batch size & $16$ \\
Samples per prompt & $1$ (multi-turn rollout = full session) \\
Rollout temperature & $0.6$ \\
Max response length / turn & $8192$ tokens \\
Max context length & $32768$ tokens \\
Max tokens per GPU & $32768$ \\
\midrule
\multicolumn{2}{l}{\emph{Conversation}}\\
Sessions per round & $2$ \\
Training rounds & $16$ \\
Max turns per session $T_{\max}$ & $48$ \\
Scenario schedule & alternate Student / Teacher each round \\
Training problems (offsets) & GSM8K $36{:}\infty$ \\
Eval problems (offsets) & $0{:}36$ student, $0{:}24$ teacher \\
\midrule
\multicolumn{2}{l}{\emph{PRM judge / OPD teacher}}\\
PRM majority vote $M_{\text{prm}}$ & $1$ \\
PRM temperature & $0.6$ \\
PRM max new tokens & $2048$ \\
EMA teacher update rate & $0.05$ \\
OPD teacher source & Megatron actor checkpoints \\
\midrule
\multicolumn{2}{l}{\emph{Combined advantage / PPO surrogate}}\\
Advantage estimator & GRPO \\
$w_{\text{rl}}$, $w_{\text{opd}}$ & $1.0$, $1.0$ \\
PPO clip $\varepsilon_{\text{lo}} / \varepsilon_{\text{hi}}$ & $0.2$ / $0.28$ \\
KL loss coefficient & $0.0$ \\
Entropy coefficient & $0.0$ \\
\midrule
\multicolumn{2}{l}{\emph{Prospective Hindsight}}\\
Surprise mode & continuous \\
Predictor sample size $M$ & $3$ (averaged into $\bar z_t$; PRM noise) \\
Predictor max new tokens & $256$ \\
Surprise amplification $\alpha$ & $1.0$ default; sweep $\{0.5,1.0,2.0\}$ \\
Surprise weight clip & off (gate already in $[1, 1{+}\alpha]$) \\
Predictor gradient & stop-gradient \\
\midrule
\multicolumn{2}{l}{\emph{Optimization}}\\
Optimizer & Adam (Megatron) \\
Learning rate & $1{\times}10^{-6}$ \\
LR decay style & constant \\
Weight decay & $0.1$ \\
Adam $\beta_1, \beta_2$ & $0.9, 0.98$ \\
Save interval (in weight-update steps) & $25$ \\
Train epochs (data passes) & $1$ \\
\bottomrule
\end{tabular}
\end{table}

%==============================================================================
\section{Additional Experimental Setup}
\label{app:experimental_setup}
%==============================================================================

This appendix expands \S\ref{sec:exp_setup} with all information needed to
reproduce our experiments. \S\ref{app:setup_single} covers the single-turn
verifiable tasks (Science Q\&A and Tool Use) built on top of the
SDPO~\cite{hubotter2026reinforcement} backbone, and
\S\ref{app:setup_multi} covers the multi-turn personal agent built on top
of the OpenClaw-RL~\cite{wang2026openclaw} framework.
\S\ref{app:compute} summarizes compute, software environment, and code
release. The two regimes differ along one important axis ---
\emph{single-turn uses a rule-based binary verifier with no learned reward
model, while multi-turn uses an OLMo-3-7B-Instruct PRM judge} --- which
explains the configuration differences below.

%------------------------------------------------------------------------------
\subsection{Single-Turn Verifiable Tasks}
\label{app:setup_single}
%------------------------------------------------------------------------------

\paragraph{Setting in one sentence.}
Each rollout is a single prompt~$x$ and a single model response~$a$, with a
rule-based verifier $R(x,a)\in\{0,1\}$ that returns binary correctness and
no auxiliary reward model; this is the simplest possible instantiation of
PH (single prediction, single outcome, $T{=}1$, deterministic
retrospective signal).

\paragraph{Datasets and verifiers.}
We use the Chemistry subset of \textsc{SciKnowEval}~\cite{feng2024sciknoweval}
following the released CaOPD~\cite{zhang2026illusion} preprocessing. Each
prompt asks the model to reason step-by-step and emit a single letter
$\hat{y}\in\{A,B,C,D\}$; the verifier is rule-based, $R(x,a){=}1$ iff the
parsed letter equals the gold answer. We also use the Tool Use split
derived from \textsc{ToolAlpaca}~\cite{tang2023toolalpaca} as preprocessed
in CaOPD; each prompt
provides API documentation in a ReAct-style template and the verifier
checks (a)~the action name matches the gold tool and (b)~the JSON
arguments match the gold schema under canonical normalization. Both
training sets contain $4$--$5$k prompts; we evaluate on the corresponding
test split. Together they probe both knowledge-intensive reasoning
(Science Q\&A) and structured agentic prediction (Tool Use), and were the two
domains used in CaOPD, which lets our calibration numbers be directly
compared against theirs.

\paragraph{Backbone OPD method (SDPO) and privileged-context construction.}
We build the single-turn experiments on top of
SDPO~\cite{hubotter2026reinforcement}, using the official codebase as
modified in CaOPD~\cite{zhang2026illusion}. SDPO is an on-policy
self-distillation method: the same model serves as both
student~$\pi_\theta(\cdot|x)$ and teacher~$\pi_\theta(\cdot|x,z)$, where $z$
is a privileged context not available at deployment. We use the SDPO
``correct-solution'' mode of CaOPD: for each prompt, after the rollout
phase, one verified-correct rollout~$a^\star$ is chosen on-the-fly and
inserted as~$z$, so the teacher is then asked to ``Correctly solve the
original question'' conditioned on $(x,z)$. The training loss is per-token
reverse KL between student and teacher on the teacher trajectory, with
rollout importance sampling correction and SDPO defaults (top-$K{=}100$
distillation, EMA teacher update rate $0.05$, IS clip $2$); thinking mode
is disabled (\texttt{enable\_thinking=false}) so that the trajectory we
distill is the answer trajectory.

\paragraph{Models, training framework, and hardware.}
Backbones are \textbf{OLMo-3-7B-Instruct}~\cite{olmo2025olmo} (primary,
both Science Q\&A and Tool Use) and \textbf{gpt-oss-20B}~\cite{agarwal2025gptoss}
(scale ablation). Both are used as student and EMA-teacher (the EMA
teacher is the same model with the privileged context~$z$).
We deliberately keep the verbalized-confidence template of CaOPD intact so
that PH plugs in cleanly: each rollout still ends with a parseable
\texttt{Confidence:~\{x\}} token, but PH ignores this string and uses only
the binary verifier outcome $R(x,a)$ as the retrospective signal. The
codebase forks CaOPD's SDPO branch, implemented on top of
\textbf{verl}~\cite{sheng2025hybridflow} with PyTorch FSDP2 for sharded
optimizer state and \textbf{vLLM}~\cite{kwon2023efficient} for batched
rollout generation; our PH plug-in adds (i)~a per-prompt prospective
prediction call, (ii)~a stop-gradient weight $w(\xi_t)$ on the per-rollout
reverse-KL contribution, and (iii)~lightweight surprise/OFR bookkeeping in
the training logger. Each run uses one node of \texttt{p5en.48xlarge}
($8\times$ NVIDIA H200), which is sufficient for both
7B and 20B-MoE training under FSDP2.

\paragraph{Prospective Hindsight plug-in.}
For each rollout, after the student emits its answer~$a$ but before the
verifier evaluates~$R(x,a)$, we issue an additional generation call to the
same student model with a different prompt that strips the
verbalized-confidence instruction and asks the model to predict whether
its previous answer is correct, incorrect, or uncertain. The exact
predictor template is given in App.~\ref{app:prompts}. Because the
verifier here is rule-based and deterministic, a single ternary prediction
$z_t\in\{-1,0,+1\}$ per rollout already gives a low-variance signal, so
we use $M{=}1$ predictor sample (no majority vote). Predictor generation
is capped at \texttt{max\_new\_tokens}$=128$ tokens at temperature~$0.6$,
top-$p{=}0.95$; un-parseable outputs are treated as $z_t{=}0$ (uncertain).
The surprise indicator and weight are
\[
\xi_t = \mathbf{1}\bigl[(z_t,y_t)\in\{\text{OF},\text{US}\}\bigr],
\qquad
w_t = 1+\alpha\xi_t,
\]
with $\alpha{=}1.0$ default and clip $w_t\in[0.5,2.0]$ as a safety net (the
unclipped weight is in $\{1,2\}$ already, so the clip is rarely active).
The weight is a stop-gradient scalar and multiplies the SDPO per-rollout
reverse-KL loss; no gradient flows back through the predictor.

\paragraph{Hyperparameters.}
Table~\ref{tab:single_hyper} reports the hyperparameters for all
single-turn runs. All values are taken either from the CaOPD defaults or
from a one-shot tuning on a held-out validation slice; no
hyperparameter is tuned per-method (the same configuration is used for
SDPO, SDPO\,+\,PH, and the $\alpha$-sweep ablation).

\begin{table}[t]
\centering
\small
\caption{Hyperparameters for single-turn SDPO and SDPO\,+\,PH on Science Q\&A
and Tool Use.}
\label{tab:single_hyper}
\begin{tabular}{ll}
\toprule
\textbf{Parameter} & \textbf{Value} \\
\midrule
\multicolumn{2}{l}{\emph{Models, data, and inference}}\\
Backbone models & OLMo-3-7B-Instruct; gpt-oss-20B (ablation) \\
Thinking mode & disabled (\texttt{enable\_thinking=false}) \\
Max prompt length & $2048$ tokens \\
Max response length & $8192$ tokens \\
Inference engine & vLLM \\
\midrule
\multicolumn{2}{l}{\emph{Rollout (training)}}\\
Question batch size & $32$ \\
Rollouts per question $K_{\text{train}}$ & $8$ \\
Rollout temperature & $1.0$ \\
\midrule
\multicolumn{2}{l}{\emph{Validation}}\\
Rollouts per question $K_{\text{val}}$ & $16$ (\texttt{val\_kwargs.n}, ``mean@16'') \\
Validation temperature & $0.6$ \\
Validation top-$p$ & $0.95$ \\
\midrule
\multicolumn{2}{l}{\emph{SDPO loss (CaOPD defaults)}}\\
Distillation top-$K$ & $100$ \\
Distillation divergence & token-level reverse KL \\
EMA teacher update rate & $0.05$ \\
Rollout IS clip & $2$ \\
\midrule
\multicolumn{2}{l}{\emph{Prospective Hindsight (this work)}}\\
Predictor max new tokens & $128$ \\
Predictor temperature / top-$p$ & $0.6$ / $0.95$ \\
Predictor sample size $M$ & $1$ (deterministic verifier; no majority vote) \\
Surprise amplification $\alpha$ & $1.0$ default; sweep
$\{0.0,0.25,0.5,1.0,2.0,5.0\}$ \\
Weight clip $[w_{\min},w_{\max}]$ & $[0.5,2.0]$ \\
Predictor gradient & stop-gradient \\
\midrule
\multicolumn{2}{l}{\emph{Optimization}}\\
Optimizer & AdamW \\
Learning rate & $1{\times}10^{-5}$ \\
LR warmup steps & $10$ \\
Weight decay & $0.1$ \\
Gradient clip norm & $1.0$ \\
PPO mini-batch size & $32$ \\
Total epochs & $3$ \\
\midrule
\multicolumn{2}{l}{\emph{Hardware}}\\
Hardware & $1\!\times\!$ \texttt{p5en.48xlarge} ($8\!\times$\,H200) \\
Sharding & FSDP2, bf16 \\
\bottomrule
\end{tabular}
\end{table}

\paragraph{Evaluation protocol and metrics.}
Capability is reported as \texttt{mean@16}: for each held-out test prompt
we sample $K_{\text{val}}{=}16$ rollouts at temperature~$0.6$,
top-$p{=}0.95$, evaluate the verifier on each, and report the mean
accuracy. From the same $K_{\text{val}}{=}16$ rollouts we compute, per
prompt and then averaged across the test set, (i)~the surprise rate
$\widehat{\mathcal{M}}=\Pr[\xi_t{=}1]$, (ii)~the overconfident-failure rate
$\Pr[y_t{=}0\mid z_t{=}{+}1]$, and (iii)~the honest-uncertainty rate
$\Pr[z_t{=}0]$ (App.~\ref{app:hu}, where we verify that the neutral
cell does not collapse under PH). All headline numbers are
mean$\pm$std over $3$ seeds, reported at the end of epoch~3; the $\alpha$-sweep additionally logs per-step
trajectories of mean@16 and OFR (App.~\ref{app:single_alpha}).

%------------------------------------------------------------------------------
\subsection{Multi-Turn Personal Agent}
\label{app:setup_multi}
%------------------------------------------------------------------------------

\paragraph{Setting in one sentence.}
A trained student model serves as an OpenAI-compatible chat agent inside
the OpenClaw runtime; an external LLM ``user'' (GPT-4.1) drives multi-turn
math-tutoring conversations grounded in GSM8K, an OLMo-3-7B-Instruct PRM
judges each main-line turn, and PH attaches a per-turn surprise signal to
the OpenClaw-RL training loop.

\paragraph{Task and agent design.}
We follow the personal-agent track of OpenClaw-RL~\cite{wang2026openclaw}
verbatim. 
% The reproduction script we use is
% \url{https://github.com/Gen-Verse/OpenClaw-RL/blob/main/openclaw-rl/oel/eval/gsm8k_personal_agent.py} on the GSM8K
% problem source~\cite{cobbe2021training-auq}. 
At each training round we
alternate between two scenarios: (i)~a \emph{Student} scenario in which
the simulated user is a ``lazy student'' who needs the agent to solve a
math problem in a non-AI-style tone (no bold, no bullet points); the
student writes the problem to \texttt{homework/i.txt}, asks the agent to
solve, critiques the style, and ends with the literal sentinel
\texttt{DONE} once satisfied; and (ii)~a \emph{Teacher} scenario in which
the simulated user is a strict math teacher who wants warm, specific,
prose-only grading comments. The agent has access to the OpenClaw
runtime's standard file-I/O tools but only text generation is exercised
in this benchmark. Each session is one (problem, scenario) pair with up
to $T_{\max}{=}48$ turns, ending whenever the simulator emits
\texttt{DONE}; turns inside a session are tagged \texttt{turn\_type=main}
(used for training) or \texttt{turn\_type=side} (eval-only). We split
GSM8K into disjoint training and evaluation pools (offsets $36{:}\infty$
training; $0{:}36$ student-eval, $0{:}24$ teacher-eval), with
$\texttt{sessions\_per\_round}{=}2$ and $16$ training rounds.

\paragraph{User simulators and conversation protocol.}
The simulators are GPT-4.1 calls (temperature~$0$) with the system prompts
taken verbatim from the OpenClaw-RL evaluation script; the full strings
are reproduced in App.~\ref{app:prompts}. The salient content of the
\emph{Student} prompt is: ``You are role-playing as a lazy student who
knows nothing\dots\ If the AI's answer has stuff like bold text, numbered
lists, \textbf{Final answer:}, or anything too AI-like, tell it to
rewrite\dots\ After the AI shows you a satisfactory version, say exactly:
\texttt{DONE}.'' The \emph{Teacher} prompt asks for warm, specific,
prose-only grading comments and ends similarly with \texttt{DONE}. The
first user message in each scenario is templated from the GSM8K problem
and (for the Teacher scenario) the gold answer.

\paragraph{Base method: Combined (Binary RL + OPD).}
The default base method we plug PH into is the \textbf{Combined} recipe
of OpenClaw-RL~\cite{wang2026openclaw}; we also evaluate Binary RL only
and OPD only as ablations. For each main-line turn the PRM judge produces
a scalar score $r_t\in\{-1,0,+1\}$ from the next-state signal (the
simulator's reply); GRPO advantages are computed within the trajectory
with mean and standard deviation across the same session, with
\texttt{disable\_rewards\_normalization} so that the per-turn scale of
$r_t$ is preserved. The OPD component uses the same model conditioned on
a textual hint extracted from the next-state signal; the teacher's
per-token log-probability gap with the student forms the token-level
directional advantage $A^{\text{OPD}}_{t,j}$ (EMA teacher tracking
rate~$0.05$). The combined per-turn advantage is
$A^{\text{comb}}_t = w_{\text{rl}} A^{\text{GRPO}}_t + w_{\text{opd}} A^{\text{OPD}}_t$
with $w_{\text{rl}}{=}w_{\text{opd}}{=}1$; the PPO surrogate uses
asymmetric clip $\varepsilon_{\text{lo}}{=}0.2$, $\varepsilon_{\text{hi}}{=}0.28$
from GRPO~\cite{shao2024deepseekmath} and \texttt{kl\_loss\_coef}$=0$.

\paragraph{PRM judge and OPD hindsight teacher.}
Both the PRM and the OPD teacher share weights with the student (full
self-distillation): they all run OLMo-3-7B-Instruct, served by separate
sglang engines (separate from the rollout engine) so they do not block
training. We use \texttt{prm\_num\_gpus=1}, \texttt{prm\_teacher\_num\_gpus=1},
\texttt{prm\_m=1} (PRM majority vote; we ablate \texttt{prm\_m=3}),
\texttt{prm\_temperature=0.6}, \texttt{prm\_max\_new\_tokens=2048}, and
the OPD teacher source is \texttt{megatron} so that the teacher
distribution is on-policy with the actor.

\paragraph{Training framework: Slime + Megatron-LM.}
We use the asynchronous 4-component training loop of
OpenClaw-RL~\cite{wang2026openclaw}, implemented on \texttt{slime} (RL
orchestrator) and Megatron-LM (training backend), with \texttt{sglang}
serving rollouts/PRM/teacher. The 8 GPUs of one
\texttt{p5en.48xlarge} node split as: 4 GPUs for the actor (TP=4,
sequence-parallel, bf16); 2 GPUs for rollout (one engine per GPU, TP=1);
1 GPU for the PRM judge; 1 GPU for the OPD teacher. PH inserts a per-turn
predictor call between the actor's response and the PRM's score; in the
asynchronous regime the predictor runs concurrently with PRM scoring
(neither reads the next-state signal), so PH adds essentially zero
wall-clock overhead in steady state. Each session writes a JSONL record
of (\texttt{session\_id}, \texttt{turn\_index}, \texttt{turn\_type},
\texttt{prm\_score}, \texttt{prospective\_score}, \texttt{surprise},
\texttt{weight\_version}) to disk, allowing offline reconstruction of all
calibration metrics.

\paragraph{Prospective Hindsight plug-in.}
For each main-line turn~$t$, after the agent emits its response $a_t$ but
before the PRM scores it, we issue an additional generation request to
the same actor with a prompt that asks the model to predict whether its
own response in the current dialog state will be judged
correct/incorrect/uncertain by an external judge. The predictor sees
$(s_{1:t}, a_t)$ but never the simulator's reply $s_{t+1}$. \emph{Because
the per-turn PRM signal is itself an LLM-based score and therefore
stochastic, we use $M{=}3$ predictor samples, parsed to $\{-1,0,+1\}$ and
averaged into a continuous $\bar z_t \in [-1,+1]$.} We then use the
continuous PH gate (App.~\ref{app:continuous}):
\[
\xi_t \;=\; \tfrac{1}{2}\bigl|\bar z_t - r_t\bigr|\;\in[0,1],
\qquad
w_t \;=\; 1+\alpha\xi_t,
\]
where $r_t\in\{-1,0,+1\}$ is the PRM-aggregated turn score. The
implementation flag is \texttt{SURPRISE\_MODE=continuous}; setting
\texttt{SURPRISE\_MODE=discrete} recovers the binary $\xi_t\in\{0,1\}$ used
in single-turn experiments. The weight applies to the per-turn combined
advantage $A^{\text{comb}}_t$ before the PPO surrogate (equivalently,
broadcast across all tokens of $a_t$ since the surprise is a turn-level
scalar) and is a stop-gradient constant. PH-specific knobs:
\texttt{PROSPECTIVE\_M=3}, \texttt{PROSPECTIVE\_MAX\_TOKENS=256},
\texttt{SURPRISE\_ALPHA=1.0} default, no weight clip
(\texttt{SURPRISE\_WEIGHT\_MAX} unset; the gate already gives
$w_t\in[1,1{+}\alpha]$).

\paragraph{Hyperparameters.}
Table~\ref{tab:multi_hyper} reports the hyperparameters for all multi-turn
runs. The same configuration is used for the Combined baseline,
Combined\,+\,PH, and the $\alpha$-sweep ablation; only the PH-specific
rows differ across rows of Table~\ref{tab:multi_results}.

\paragraph{Evaluation protocol and metrics.}
We use the personalization evaluator of
OpenClaw-RL~\cite{wang2026openclaw} (script
\texttt{personalization\_evaluator.py}). For each held-out evaluation
prompt we collect a single first response (turn~$1$, no follow-up) at
temperature~$0.3$, and score it with GPT-4.1 against a fixed preference
template that requests a boxed score from
$\{0,0.25,0.5,0.75,1\}$ at temperature~$0$ with $N_{\text{votes}}{=}3$
parallel votes whose successful boxed scores are averaged. The user-side
preference text is one of \texttt{PREFERENCE\_STUDENT} (no AI-style
formatting; full reasoning shown) or \texttt{PREFERENCE\_TEACHER} (warm,
specific, prose-only grading comments); the full strings are in
App.~\ref{app:prompts}. Capability is the mean evaluator score across
the eval pool, reported separately for Student ($N{=}36$) and Teacher
($N{=}24$) plus the macro-average. 
% As an external anchor: under the
% Combined recipe, OpenClaw-RL reports Base $\to$ 8 steps $\to$ 16 steps
% $=0.17 \to 0.76 \to 0.81$. 
We define multi-turn OFR as the per-turn
analogue $\Pr[r_t \le 0 \mid z_t {=} {+}1]$, computed from the per-turn
JSONL bookkeeping over the held-out eval pool, and additionally report
surprise rate $\widehat{\mathcal{M}}=\Pr[\xi_t > 0]$ and prospective
accuracy. Headline numbers are reported at the final checkpoint of round
16; dynamics plots evaluate every 4 weight updates.

\subsection{Reweighting-Baseline Protocols}
\label{app:baseline_protocols}

To isolate the contribution of PH's surprise-based selection from the
contribution of additional gradient mass, we compare PH against two reweighting
controls in the single-turn setting (Table~\ref{tab:single_results}). Both use
the same multiplier $(1{+}\alpha)$ as PH and are detached from the computation
graph (stop-gradient); only the rule for \emph{which} rollouts receive the
multiplier differs.

\paragraph{Random reweighting.}
At each training step, we compute the empirical fraction $p_t$ of surprising
rollouts the PH protocol produces on the same data. We then sample uniformly at
random a subset of size $\lfloor p_t \cdot N \rfloor$ from the current batch
and assign these rollouts weight $(1{+}\alpha)$; the remaining rollouts retain
weight $1$. By construction this matches PH's per-step total weight
$\sum_i (1+\alpha\cdot\xi_t^{(i)})$, so the only difference between Random and
PH is the \emph{location} of the upweighted mass: uniform vs.\ surprise-targeted.
Selection seeds are derived from \texttt{hash(rollout\_id, step)} so that
within an epoch each rollout receives a deterministic weight; we use
$\alpha{=}1.0$ throughout, matching the headline PH configuration.

\paragraph{Failure-only reweighting.}
For every rollout with verifier outcome $y_t{=}0$, we set the weight to
$(1{+}\alpha)$; rollouts with $y_t{=}1$ retain weight $1$. This corresponds to
upweighting cells $\mathrm{OF}\cup\mathrm{AF}$ in the calibration taxonomy
(Table~\ref{tab:cells}), as opposed to PH's $\mathrm{OF}\cup\mathrm{US}$. We
again use $\alpha{=}1.0$. Failure-only therefore differs from PH in exactly
two cells: it includes AF (calibrated incompetence, which PH leaves unweighted)
and excludes US (underconfident successes, which PH amplifies).

\paragraph{What each baseline controls for.}
Random isolates the effect of \emph{selection}: any gain of PH over Random must
come from \emph{which} rollouts are upweighted, not from how much weight is
added. Failure-only matches PH on the dominant single-turn cell (OF; see
Figure~\ref{fig:cell_composition}) but differs on the smaller cells, isolating
whether the prospective signal $z_t$ contributes information beyond the verifier
outcome $y_t$ alone. The empirical comparison is reported in
Table~\ref{tab:single_results} and discussed in \S\ref{sec:exp_single}; the
PH-cell composition over training is shown in Figure~\ref{fig:single_dynamics}.

%------------------------------------------------------------------------------
\subsection{Compute and Reproducibility}
\label{app:compute}
%------------------------------------------------------------------------------

\paragraph{Hardware and software.}
Both single-turn and multi-turn experiments run on a single node of
\texttt{p5en.48xlarge} ($8{\times}$ NVIDIA H200).
Multi-node is supported by the underlying frameworks (verl / Slime) but
not used in our reported runs. Software: \texttt{CUDA 12.x},
\texttt{PyTorch 2.5+}, \texttt{vLLM} for single-turn rollouts,
\texttt{sglang} for multi-turn rollouts/PRM/teacher, \texttt{Megatron-LM}
(multi-turn actor backend), \texttt{slime} (multi-turn async orchestrator),
\texttt{verl} (single-turn orchestrator).

\paragraph{Wall-clock and overhead from PH.}
\emph{Single-turn} (3 epochs, $32$ prompt batch, $K{=}8$ rollouts):
$\sim 8$ hours per run on Science Q\&A, $\sim 6$ hours on Tool Use; the PH
plug-in adds $\sim 6\%$ wall-clock overhead from the predictor calls
(amortized into the same vLLM batch).
\emph{Multi-turn} ($16$ rounds, $2$ sessions/round, $T_{\max}{=}48$):
$\sim 4$ hours per run; the PH plug-in adds essentially zero wall-clock
overhead because the predictor runs concurrently with the PRM in the
asynchronous loop.

% \paragraph{Seeds, reporting, and code release.}
% All headline numbers are run with $3$ seeds; we report mean$\pm$std in
% tables and shaded $95\%$ CI in figures. All training scripts, reproduction
% notebooks, prompt templates (App.~\ref{app:prompts}), and per-turn
% JSONL bookkeeping logs are released at \texttt{<URL anonymized for review>}
% alongside checkpoints matching the rows of Table~\ref{tab:single_hyper}
% and Table~\ref{tab:multi_hyper}.

\begin{table}[!h]
\centering
\small
\caption{%
  \textbf{PH weight strength on Science Q\&A: numerical summary across the
  full $\alpha$-sweep.}
  OFR ${=}\mathrm{OF}/(\mathrm{CS}{+}\mathrm{OF})$; Early/Late are
  first/last 5 training steps; Mean Weight and Verifier Pass Rate are
  reported as early\,$\to$\,late. \textbf{Bold}: best per column.
}
\label{tab:chemistry_alpha_dynamics}
\begin{tabular}{c| c c c c | c c}
\toprule
$\alpha$ & Early OFR & Late OFR & OFR Drop & Rel. Drop & Mean Weight & Verifier Pass Rate \\
\midrule
0.1  & 0.741 & 0.253 & 0.487 & 65.8\% & $1.07 \rightarrow 1.02$ & $24.8\% \rightarrow 71.4\%$ \\
0.25 & 0.739 & 0.250 & 0.489 & 66.2\% & $1.17 \rightarrow 1.06$ & $25.6\% \rightarrow 71.2\%$ \\
\textbf{0.5}  & \textbf{0.768} & \textbf{0.209} & \textbf{0.560} & \textbf{72.9\%} & $\mathbf{1.35 \rightarrow 1.09}$ & $\mathbf{22.9\% \rightarrow 73.2\%}$ \\
1.0  & 0.732 & 0.239 & 0.423 & 57.8\% & $1.67 \rightarrow 1.29$ & $26.0\% \rightarrow 67.5\%$ \\
2.0  & 0.744 & 0.249 & 0.495 & 66.5\% & $1.68 \rightarrow 1.22$ & $24.9\% \rightarrow 71.7\%$ \\
\bottomrule
\end{tabular}
\end{table}

%==============================================================================
\section{Additional Experiment Results and Analysis}
\label{app:AER}
%==============================================================================

This appendix expands the experiments of \S\ref{sec:exp_single}
(single-turn SDPO\,+\,PH on Science Q\&A and Tool Use) and
\S\ref{sec:exp_multi} (multi-turn RL\,/\,OPD\,/\,Combined + PH) with the
full $\alpha$-sweeps that the main tables abridge, finer-grained
training-time dynamics, and per-turn breakdowns. All configurations
match App.~\ref{app:experimental_setup} unless stated otherwise.

%------------------------------------------------------------------------------
\subsection{Single-Turn $\alpha$-Sweep: SDPO\,+\,PH on Science Q\&A}
\label{app:single_alpha}
%------------------------------------------------------------------------------

\paragraph{PH cell composition is qualitatively robust across $\alpha$.}
Figure~\ref{fig:single_alpha_composition} repeats the per-step
cell-composition plot of Figure~\ref{fig:single_dynamics} (which fixes
$\alpha{=}1.0$) for the additional weights
$\alpha\in\{0.1, 0.25, 0.5, 2.0\}$ on Science Q\&A. The four panels share a
common pattern: the overconfident-failure (OF) cell shrinks throughout
training while the confident-success (CS) cell grows, and the
underconfident-success (US) and aware-failure (AF) cells stay small.
The fact that this dynamics holds essentially unchanged across two orders
of magnitude of $\alpha$ tells us that PH is not a delicate
optimization-noise instrument: it is a structural reweighting whose
direction (compress OF, grow CS) is set by the data, with $\alpha$
controlling only the speed of that compression.

\paragraph{OFR decreases monotonically and the PH weight self-extinguishes.}
Figure~\ref{fig:single_alpha_dynamics} (left) plots OFR over training for
each $\alpha$ on Science Q\&A. All five trajectories descend monotonically;
$\alpha{=}0.5$ reaches one of the lowest late-stage OFR values,
consistent with the main-text observation in
\S\ref{sec:exp_single}. Figure~\ref{fig:single_alpha_dynamics} (right)
plots the corresponding mean PH weight $\E[w_t]$: it starts above $1$
(because $\xi_t > 0$ on many rollouts initially), decays toward $1$ as
$\xi_t\!\to\!0$ on the bulk of rollouts, and the larger the $\alpha$ the
higher the initial weight but also the steeper the relaxation. The
weight curriculum is therefore self-extinguishing --- exactly the
behaviour predicted by the continuous-relaxation analysis of
App.~\ref{app:continuous}.

\paragraph{Why $\alpha{=}0.5$ is the Science Q\&A sweet spot.}
Table~\ref{tab:chemistry_alpha_dynamics} numerically compares the five
$\alpha$ values along three axes: late-training OFR, the early-to-late
OFR drop, and the verifier pass rate at the final checkpoint.
$\alpha{=}0.5$ is best on all three: largest absolute OFR drop
($0.560$), largest relative drop ($72.9\%$), and highest final verifier
pass rate ($73.2\%$). The weight column shows why: $\alpha{=}0.5$'s
mean weight starts at $1.35$ (large enough to materially reweight
miscalibrated samples) and anneals to $1.09$ (small enough to not
perturb late-stage learning), giving the cleanest curriculum.
$\alpha{=}1.0$ and $\alpha{=}2.0$ start the curriculum from much higher
initial weights ($1.67, 1.68$) and do not anneal as cleanly, trading a
fraction of the calibration improvement for slightly noisier capability.
Conversely, $\alpha\in\{0.1, 0.25\}$ are nearly identity reweightings
throughout training (initial weight $1.07, 1.17$) and so leave the OFR
drop close to the small-$\alpha$ floor of $\approx 0.49$.

\begin{figure*}[!h]
  \centering
\includegraphics[width=0.49\linewidth, clip]{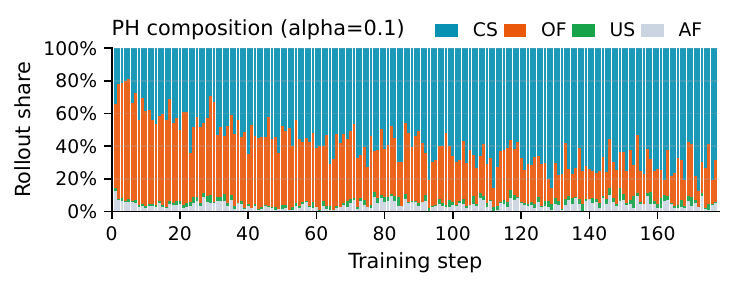}
\includegraphics[width=0.49\linewidth, clip]{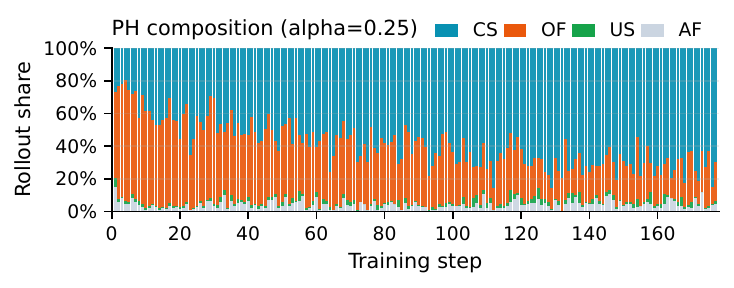}
\includegraphics[width=0.49\linewidth, clip]{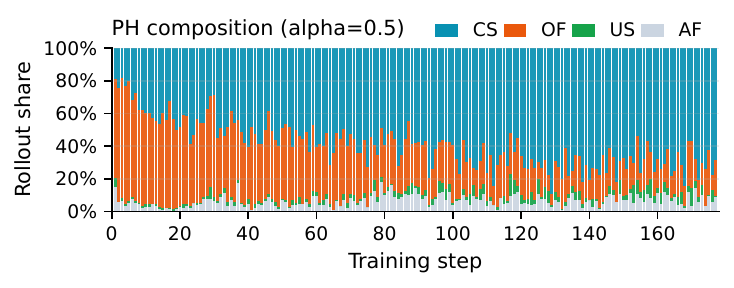}
\includegraphics[width=0.49\linewidth, clip]{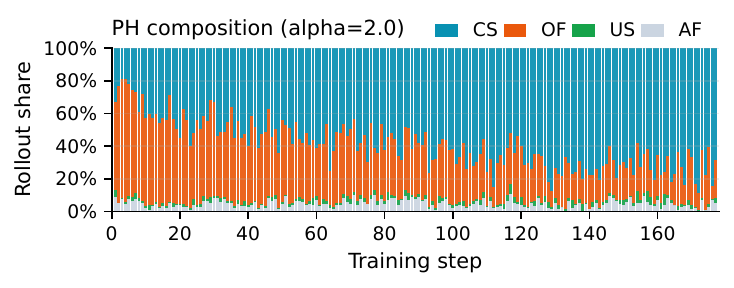}
  \caption{%
    \textbf{PH cell composition across $\alpha$ on Chemistry.}
    Per-step decomposition of training rollouts into CS / OF / US / AF
    for $\alpha\in\{0.1, 0.25, 0.5, 2.0\}$ (the $\alpha{=}1.0$ panel is
    in Figure~\ref{fig:single_dynamics}). The compress-OF / grow-CS
    pattern is qualitatively unchanged across two orders of magnitude
    of $\alpha$.
  }
  \label{fig:single_alpha_composition}
\end{figure*}

\begin{figure*}[!h]
  \centering
\includegraphics[width=0.99\linewidth, clip]{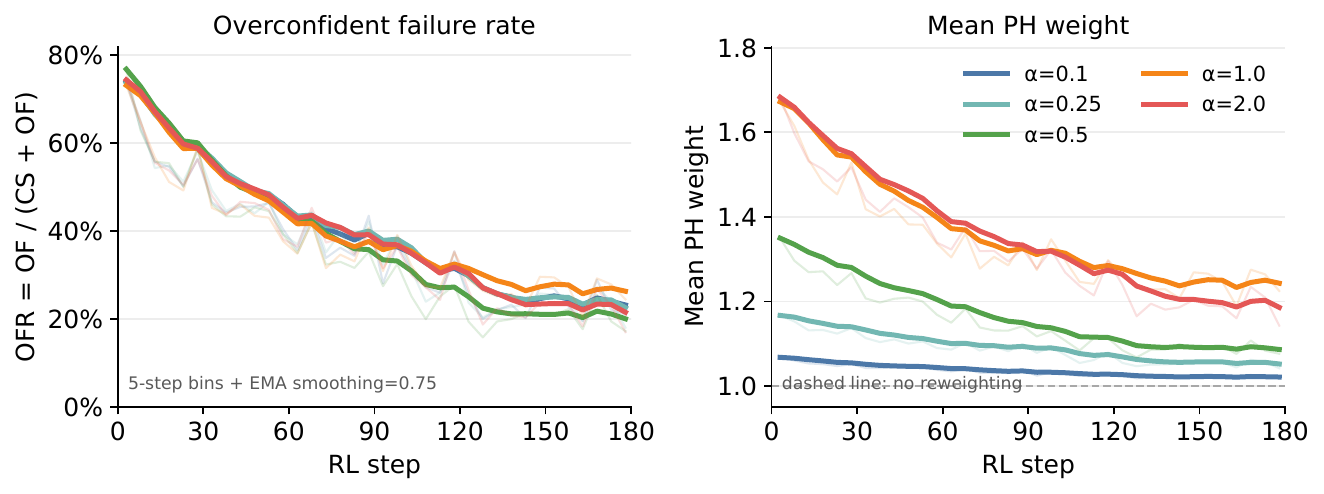}
  \caption{%
    \textbf{OFR and PH-weight dynamics across $\alpha$ on Science Q\&A.}
    Left: OFR over training for each $\alpha$; all five trajectories
    descend monotonically. Right: the mean PH weight starts above $1$
    and anneals toward $1$ as $\xi_t\!\to\!0$ (App.~\ref{app:continuous}).
    Faint lines: 5-step bins; solid lines: EMAs.
  }
  \label{fig:single_alpha_dynamics}
\end{figure*}

%------------------------------------------------------------------------------
\subsection{Multi-Turn $\alpha$-Sweep: Per-Round Capability and Per-Turn Calibration}
\label{app:AER:multi}
%------------------------------------------------------------------------------

\paragraph{$\alpha$ sweep across capability and per-turn calibration.}
Figure~\ref{fig:multi_alpha_sweep} plots the multi-turn $\alpha$-sweep on
the Combined base method under the GPT-4.1 simulator: teacher-side score
across training rounds (left), per-turn surprise rate across conversation
turns (middle), and per-turn OFR across conversation turns (right). The
capability trajectory (left) reproduces the headline of
Table~\ref{tab:multi_results}: $\alpha{=}1.0$ rises fastest and plateaus
highest, $\alpha{=}0.5$ tracks closely, and $\alpha{=}2.0$ exhibits the
highest round-to-round variance. The per-turn surprise (middle) and
per-turn OFR (right) reveal a structural pattern that the
trajectory-averaged metrics in the main text do not surface:
\emph{later turns have higher initial overconfidence}, consistent with
the long-horizon credit-assignment intuition that the agent's self-model
is least developed for late, context-saturated turns; and PH compresses
exactly this region most. At $\alpha{=}1.0$, late-turn OFR drops to
levels comparable to the early-turn OFR of the baseline, effectively
flattening the per-turn calibration profile.

\paragraph{When $\alpha{=}2.0$ overshoots in long horizons.}
Figure~\ref{fig:multi_alpha_sweep} also concretizes the $\alpha{=}2.0$
caveat from \S\ref{sec:exp_multi}: the right panel shows that
$\alpha{=}2.0$ achieves the lowest per-turn OFR overall, but the left
panel shows that its capability oscillates round-to-round and ends below
$\alpha{=}1.0$. The mechanism is the variance-amplification effect
quantified in App.~\ref{app:variance}: very large $\alpha$ multiplies the
per-rollout combined-advantage by up to $1+\alpha=3$ on miscalibrated
rollouts, and in long-horizon training those rollouts also tend to be
the longest token sequences (overconfident failures are correlated with
more aggressive generation), so the per-batch gradient norm spikes
relative to the baseline distribution. We did not observe this regime in
single-turn (Table~\ref{tab:single_results}) because the per-rollout
token count there is bounded and similar across all rollouts; the
multi-turn regime, with its variable-length sessions of up to
$T_{\max}{=}48$ turns (App.~\ref{app:setup_multi}), exposes it.

\paragraph{Practitioner recommendation.}
For both regimes, a single $\alpha{=}1.0$ default works well across
methods, datasets, and simulators. The per-domain optimum may shift to
$\alpha{=}0.5$ (Science Q\&A, Combined) or remain at $\alpha{=}1.0$ (Tool
Use, RL), but no $\alpha$ outside the $\{0.5, 1.0\}$ band ever beats the
in-band defaults on \emph{both} capability and calibration
simultaneously. We therefore recommend $\alpha{=}1.0$ as the
practitioner default and $\alpha{=}0.5$ as a slightly more conservative
alternative whenever capability stability is paramount.

% \section{Additional Experiment Results}
% \label{app:AER}

% SPDO 

\begin{figure*}[!h]
  \centering
\includegraphics[width=0.99\linewidth, clip]{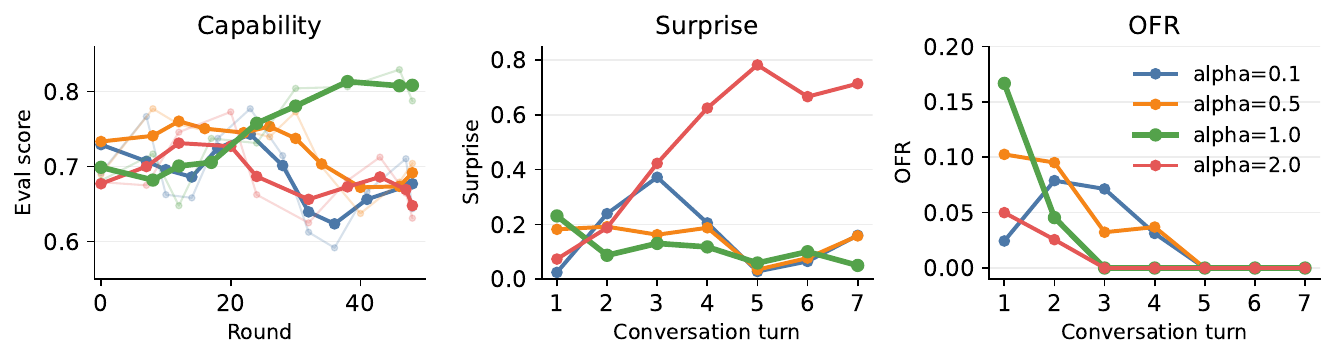}
  \caption{%
    \textbf{Multi-turn $\alpha$-sweep (Combined, GPT-4.1 simulator).}
    Left: teacher-side evaluation score across training rounds. Middle:
    teacher-side surprise rate across conversation turns. Right:
    teacher-side OFR across conversation turns. Curves: $\alpha\in\{0.1,
    0.5, 1.0, 2.0\}$. Later turns start more overconfident than early
    turns, and PH compresses this region most.
  }
  \label{fig:multi_alpha_sweep}
\end{figure*}

\begin{figure*}[!h]
  \centering
\includegraphics[width=0.99\linewidth, clip]{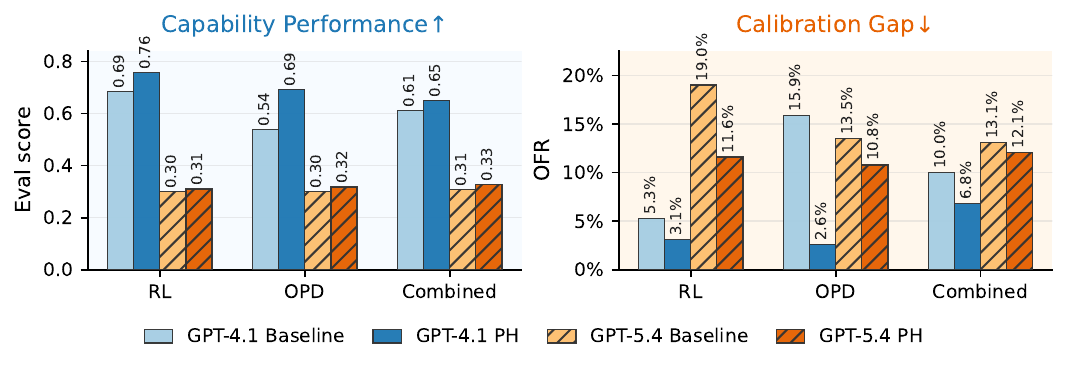}
% \caption{
% Capability and calibration of OLMo3 8k mixed personal-agent training.
% We compare RL, OPD, and Combined training under GPT-4.1 and GPT-5.4 simulators.
% Left: teacher evaluation performance, where higher scores indicate better capability.
% Right: teacher-side overconfidence rate, $\mathrm{OFR}=\mathrm{OF}/(\mathrm{CS}+\mathrm{OF})$, where lower values indicate better calibration.
% Baseline denotes the small-alpha PH setting; for GPT-4.1, this is extracted from $\alpha=0.1$.
% }
%   \label{fig:1}
  \caption{%
    \textbf{Capability and calibration across user simulators.}
    OLMo-3-7B-Instruct trained on the multi-turn personal-agent task
    with $8\text{k}$ response-length budget and mixed Student/Teacher
    rounds, under two simulators (GPT-4.1 and GPT-5.4). Left:
    teacher-side personalization-evaluator score (higher is better).
    Right: teacher-side OFR (lower is better).
  }
  \label{fig:multi_simulator}
\end{figure*}

%==============================================================================
\section{Prompt Templates}
\label{app:prompts}
%==============================================================================

This appendix lists the prompt templates used by PH's prospective self-evaluator (\S\ref{sec:self_evaluator}) and, in multi-turn settings, the retrospective process reward model (PRM) used as the verifier $R$ in Eq.~\ref{eq:reward}. All prompts elicit a ternary score parsed from a \texttt{\textbackslash boxed\{$\cdot$\}} field as $z_t$ or $r_t \in \{-1, 0, +1\}$; ties under majority voting are broken toward $0$.

\subsection{Single-Step Tasks}
\label{app:prompts_singlestep}

For single-step tasks (Science Q\&A, Tool-Use, etc.), the prospective self-evaluator uses a task-type-specific template that wraps a common ternary scoring instruction. The task type is routed automatically from the dataset identifier: math (GSM8K, MATH, AIME, etc.) $\to$ \texttt{math}; multiple-choice (SciKnowEval, GPQA, MMLU) $\to$ \texttt{mcq}; tool-use $\to$ \texttt{tool\_use}; code (LiveCodeBench, HumanEval+, etc.) $\to$ \texttt{code}; otherwise $\to$ \texttt{default}. Placeholders \texttt{\{problem\}} and \texttt{\{solution\}} are filled with the task instance and the agent's just-emitted action $a_t$.

The common ternary scoring instruction below is appended verbatim to every single-step template (Listing~\ref{lst:prompt_scoring}); each task type then wraps it with its own header (Listings~\ref{lst:prompt_math}--\ref{lst:prompt_default}).

\begin{lstlisting}[style=promptstyle, caption={Common ternary scoring instruction (single-step).}, label={lst:prompt_scoring}]
Return exactly one score and no explanation:
+1 if you are confident it is correct
-1 if you believe it is wrong
0 if you are unsure
Your entire response must be exactly one of: +1, 0, -1
\end{lstlisting}

\begin{lstlisting}[style=promptstyle, caption={Single-step prospective self-evaluator: Math template.}, label={lst:prompt_math}]
Review your solution to the following problem.

Problem:
{problem}

Your solution:
{solution}

Do you believe your final answer is correct?
[common scoring instruction]
\end{lstlisting}

\begin{lstlisting}[style=promptstyle, caption={Single-step prospective self-evaluator: Multiple-choice template.}, label={lst:prompt_mcq}]
Review the question and your chosen answer below.

Question:
{problem}

Your response:
{solution}

Do you believe you selected the correct answer?
[common scoring instruction]
\end{lstlisting}

\begin{lstlisting}[style=promptstyle, caption={Single-step prospective self-evaluator: Tool-use template.}, label={lst:prompt_tool}]
Review the task and your proposed solution below.

Task:
{problem}

Your solution:
{solution}

Do you believe you selected the correct actions and inputs to
complete this task?
[common scoring instruction]
\end{lstlisting}

\begin{lstlisting}[style=promptstyle, caption={Single-step prospective self-evaluator: Code template.}, label={lst:prompt_code}]
Review the problem and your code solution below.

Problem:
{problem}

Your solution:
{solution}

Do you believe your code produces the correct output for all
test cases?
[common scoring instruction]
\end{lstlisting}

\begin{lstlisting}[style=promptstyle, caption={Single-step prospective self-evaluator: Default template (fallback).}, label={lst:prompt_default}]
Review your solution to the following problem.

Problem:
{problem}

Your solution:
{solution}

Do you believe your solution is correct?
[common scoring instruction]
\end{lstlisting}

\subsection{Multi-Turn Tasks}
\label{app:prompts_multiturn}

In multi-turn settings, two distinct prompts are used: a \emph{prospective} prompt that the agent's policy answers immediately after emitting $a_t$ (without observing the next state $c_t = s_{t+1}$), and a \emph{retrospective} PRM prompt that scores the same response after $s_{t+1}$ is revealed. The retrospective PRM serves as the verifier $R$ in Eq.~\ref{eq:reward}; the prospective prompt produces $z_t$ as in Eq.~\ref{eq:eval}. Placeholders are filled at runtime: \texttt{\{response\_text\}} is the agent's action $a_t$, \texttt{\{last\_user\_message\}} is the most recent user turn from the conversation history, \texttt{\{next\_state\_text\}} is $s_{t+1}$, and \texttt{\{next\_state\_role\}} indicates whether the next state is a user reply or a tool return.

\paragraph{Prospective self-evaluator (general multi-turn).}
This is the default prospective prompt used by PH at every turn $t$. It receives only $(s_t, a_t)$ via \texttt{\{last\_user\_message\}} and \texttt{\{response\_text\}}; the next state $s_{t+1}$ is not exposed.

\begin{lstlisting}[style=promptstyle, caption={Multi-turn prospective self-evaluator (general).}, label={lst:prompt_multi_general}]
User request:
{last_user_message}

Assistant response:
{response_text}

Does this response correctly and completely address the user's
request?
Answer \boxed{1} for yes, \boxed{-1} for no, \boxed{0} for unsure.
Be brief. End your answer with \boxed{SCORE}.
\end{lstlisting}

\paragraph{Prospective self-evaluator (Personal Agent, style-aware).}
This variant is used in the Personal Agent setting where user satisfaction depends on stylistic conformance (natural prose, no AI-style formatting) in addition to factual correctness. The prompt is split into a system message that defines the user's style preferences and a user message that asks for the prediction.

\begin{lstlisting}[style=promptstyle, caption={Multi-turn prospective self-evaluator (Personal Agent, style-aware).}, label={lst:prompt_multi_personal}]
[SYSTEM]
You are predicting whether a user will be SATISFIED or
DISSATISFIED with an AI assistant's response, based on the
user's strong style preferences.

The user HATES AI-style formatting. Specifically, the user
will ask for a rewrite if the response contains ANY of these:
- Markdown bold (**text**), headers (## or ###), or horizontal
  rules
- Numbered lists (1. 2. 3.) or bullet points (- or *)
- Overly structured step-by-step layouts
- Phrases like "Here is", "Sure!", "Great question!", "Let me",
  "Certainly"
- Emojis or excessive exclamation marks

The user WANTS natural, human-like prose, as if a real person
(student or teacher) wrote it casually. Flowing sentences, no
formatting artifacts.

Scoring:
- \boxed{-1}: response has formatting issues or sounds AI-
  generated; user will request a rewrite
- \boxed{1}: response is natural, human-like prose with no
  formatting artifacts; user will be satisfied
- \boxed{0}: borderline / unsure

Be strict. Even one instance of bold text or a numbered list
means \boxed{-1}.

[USER]
User message:
{last_user_message}

Assistant response:
{response_text}

Check the response for AI-style formatting (bold, headers,
numbered lists, bullet points, AI phrases). Does it read like
natural human writing?
Answer with \boxed{SCORE}.
\end{lstlisting}

\paragraph{Retrospective PRM (verifier).}
The PRM observes the next state $s_{t+1}$ as evidence and assigns the outcome $r_t \in \{-1, 0, +1\}$.

\begin{lstlisting}[style=promptstyle, caption={Multi-turn retrospective PRM verifier (next-state grounded).}, label={lst:prompt_multi_prm}]
[SYSTEM]
You are a process reward model (PRM) evaluating an AI
assistant. You will see the assistant's output and the
subsequent next state. Your task: decide whether the
assistant's output successfully fulfilled the user's intent
at that step, using the next state as evidence.

Understanding the next state's role:
- role='user': a reply from the user.
- role='tool': the return value of a tool the assistant
  invoked. This content was NOT available before the
  assistant's action; it exists BECAUSE the assistant called
  the tool. A successful, non-error tool output means the
  assistant's action worked correctly and should be scored
  positively.

Scoring rules:
- \boxed{1}  (good):    next state shows the task progressed
                        as expected (user moves on, says
                        thanks, environment confirms success,
                        tool returns successful non-error
                        result).
- \boxed{-1} (bad):     next state signals the assistant's
                        output was wrong, incomplete, or
                        unwanted. Key negative signals:
                        * user asks the assistant to redo,
                          retry, or repeat the same action
                        * user requests a correction or
                          modification
                        * user rephrases or restates the same
                          request
                        * environment returns an error,
                          failure, or unexpected result.
- \boxed{0}  (neutral): next state is ambiguous.

A change request is negative feedback: it means the previous
output did not meet the user's need; do not treat it as a
neutral new instruction. Think step-by-step, then give your
final score inside \boxed{}.

[USER]
Assistant output:
{response_text}

Next state [role: {next_state_role}]:
{next_state_text}

First, classify the next state: is it (a) positive
progression, (b) a correction / redo / change request, or
(c) ambiguous? Then assign \boxed{1}, \boxed{-1}, or \boxed{0}.
\end{lstlisting}

% \subsection{Implementation Notes}
% \label{app:prompts_impl}

% \paragraph{Score parsing.} For all templates, the parser first attempts to match an exact one-token response (\texttt{+1}, \texttt{0}, \texttt{-1}), then a strict \texttt{\textbackslash boxed\{$\cdot$\}} field, then a relaxed \texttt{boxed\{$\cdot$\}} variant, then a labeled token (\texttt{score: $\cdot$}, \texttt{answer: $\cdot$}), and finally falls back to the last standalone score-like token in the trailing 200 characters of the response. If none of these match, the sample is marked invalid (\texttt{pred\_valid = False}) and excluded from the PH weight computation.

% \paragraph{Voting and tie-breaking.} For $M > 1$ samples, a strict majority (single most-frequent score) is used; under a tie or when all samples are invalid, the aggregated score defaults to $0$ (honest uncertainty), preserving a neutral weight.

% \paragraph{Generation parameters.} The prospective predictor uses temperature $0.6$ and a max generation budget of $256$ tokens by default; the retrospective PRM uses temperature $0.6$ and up to $4096$ tokens to accommodate chain-of-thought before the final \texttt{\textbackslash boxed\{$\cdot$\}}.

\newpage
\clearpage
\section{Limitations and Future Work}
\label{sec:limitations}

While Prospective Hindsight (PH) provides a robust and simple mechanism for self-calibration in agent reinforcement learning, we identify several limitations that offer avenues for future work:

\begin{itemize}[leftmargin=1.6em]
    \item \textbf{Variance-Amplification at High $\alpha$:} High values of the surprise amplification hyperparameter ($\alpha \ge 2.0$) can lead to gradient norm spikes, particularly in multi-turn, long-horizon tasks where overconfident failures are correlated with longer token sequences . While $\alpha=1.0$ is a stable default, extreme upweighting may destabilize training.
    
    \item \textbf{Theoretical Assumptions:} Our gradient-flow analysis, which shows the PH update is strictly calibration-aligned, relies on a ``uniform-base-loss'' assumption—specifically, that the base loss is approximately constant on surprising rollouts. In practice, this alignment is quantitative rather than absolute, as the per-rollout loss varies with specific log-probabilities.
    
    \item \textbf{Training-Time Compute Overhead:} PH requires $M$ additional forward passes per rollout step to elicit the prospective belief. While this adds negligible wall-clock time in asynchronous multi-turn settings (approx. 0\%), it introduces a small overhead (approx. 6\%) in single-turn synchronous training. 
    
    \item \textbf{Dependence on Self-Model Quality:} The effectiveness of the surprise signal depends on the policy's ability to produce meaningful ternary predictions. Although PH facilitates the co-evolution of the self-model and policy, the initial training phase may provide noisier signals if the base model is extremely poorly calibrated.
    
    \item \textbf{Scope of Empirical Evaluation:} Our experiments focused on verifiable tasks (Science Q\&A, Tool Use) and a math-tutoring personal agent. While we demonstrated cross-scale transfer from 7B to 20B models, the performance of PH in purely creative or subjective generation tasks where retrospective rewards are inherently ambiguous remains to be fully explored.
\end{itemize}

The limitations above suggest several directions where the Prospective
Hindsight principle can be extended or its current evidence base
strengthened.

% \paragraph{Direct comparison with calibration-augmented RL.}
% The most immediate empirical extension is a head-to-head benchmark
% against CaOPD, Leng et al.~\cite{leng2024taming}, and Damani et
% al.~\cite{damani2025beyond} on shared single-turn splits, with matched
% seeds and significance testing. Our single-turn pipeline already shares
% the CaOPD preprocessing, so this comparison is methodologically
% straightforward; we expect it would either substantiate the
% capability--calibration trade-off avoidance claim or sharpen our
% understanding of when prior calibration-augmented recipes incur the
% trade-off.
\begin{itemize}[leftmargin=1.6em]

\item \textbf{Independent characterization of the prospective evaluator.}
Decoupling the evaluator's quality from the training trajectory is
important both for methodological cleanliness and for understanding
when PH should be expected to work. Promising directions include:
(i) measuring evaluator accuracy on held-out prompts at fixed
checkpoints, (ii) comparing evaluator outputs against a stronger
external judge, (iii) calibrating the evaluator's token-level
confidence against its boxed score, and (iv) studying the regime in
which evaluator and policy share systematic blind spots and how PH
behaves under such failure.

\item \textbf{Scaling to longer-horizon agentic tasks.}
Our multi-turn experiments cap conversations at $T_{\max} = 48$ turns.
Genuinely long-horizon agents---software engineering trajectories
spanning hundreds of tool calls, multi-stage research assistants, or
extended pair-programming sessions---raise two questions PH does not
yet answer: how does the surprise signal interact with sparse, delayed
verifier feedback at long horizons, and does the per-turn calibration
cell composition (Fig.~\ref{fig:cell_composition}) shift further as
horizons grow? Combining PH with hierarchical credit-assignment
schemes that already address sparse-reward long-horizon RL is a
natural follow-up.

\item \textbf{Continuous-valued PRM scores and richer surprise signals.}
The current surprise indicator is binary (discrete gate) or a
half-distance in $[0, 1]$ (continuous gate). Recent process reward
models emit graded scores, confidence intervals, or per-criterion
breakdowns~\cite{wang2026openclaw}.  Extending PH to
exploit these richer retrospective signals---for example, by defining
$\xi_t$ as a function of the PRM's confidence interval on $r_t$ rather
than its point estimate---may further reduce the variance amplification
at large $\alpha$ that we observe in long horizons.

\item \textbf{Composition with inference-time self-correction.}
PH operates entirely at training time and leaves the agent's deployment
behavior unchanged. Inference-time self-correction methods such as
Reflexion~\cite{shinn2023reflexion-auq} and
Self-Refine~\cite{madaan2023self-auq} act on the policy distribution
\emph{after} PH has shaped it. Whether the two mechanisms compose
constructively---PH producing a better-calibrated self-model, then
inference-time self-correction exploiting that calibration to drive
re-tries---is an open question with practical implications for
deployed agentic systems.

\end{itemize}

% \paragraph{Beyond binary verifiers and rule-based supervision.}
% The empirical evaluations in this paper use either rule-based binary
% verifiers (single-turn) or LLM-based PRM judges (multi-turn). PH's
% surprise signal is well-defined for any retrospective evaluator that
% emits an outcome label, but its behavior under non-verifiable tasks
% (creative writing, open-ended analysis, dialogue without a clear
% success criterion) is unstudied. The conceptual question is whether
% the prediction--reality gap is still informative when "reality" itself
% is reward-model-defined and potentially miscalibrated, and the
% practical question is whether PH amplifies or corrects bias in such
% reward models.

\newpage
\clearpage
\section{Broader Impact}
\label{sec:broader_impact}

The development of Prospective Hindsight (PH) aims to bridge the gap between capability and calibration in language model agents. As LLMs are increasingly deployed in autonomous agentic roles—such as personal assistants, tool-use interfaces, and tutoring systems—the societal implications of their reliability become paramount.

\textbf{Positive Societal Impacts:}
\begin{itemize}[leftmargin=1.6em]
    \item \textbf{Enhanced Safety and Reliability:} By explicitly penalizing overconfident failures, PH encourages agents to recognize their own limitations. This reduction in "confidently wrong" behavior is critical in high-stakes domains where an agent should ideally invite human intervention rather than proceeding with a flawed trajectory.
    \item \textbf{Improved Human-AI Trust:} Calibration is a cornerstone of effective human-AI collaboration. Agents that can accurately communicate their uncertainty (through honest uncertainty markers) allow users to set appropriate expectations and intervene when necessary.
    \item \textbf{Resource Efficiency:} PH is a self-extinguishing curriculum that reduces the need for manually designed reward schedules or expensive, separate calibration-specific training phases. This can lower the computational and human-labor barriers to developing reliable AI systems.
\end{itemize}

\textbf{Potential Negative Impacts and Mitigations:}
\begin{itemize}[leftmargin=1.6em]
    \item \textbf{Risk of Deceptive Calibration:} While PH aligns the agent's prospective belief with verifier outcomes, there is a theoretical risk that an agent could learn to "appear" calibrated by avoiding difficult but necessary tasks to minimize surprise. We mitigate this by using PH as a plug-in on top of task-success objectives, ensuring capability is not sacrificed for calibration.
    \item \textbf{Over-reliance on Automated Feedback:} The method relies on the quality of retrospective verifiers or PRMs. If these evaluators contain biases or incorrect logic, PH may amplify these errors by aggressively upweighting rollouts that "surprise" the model based on flawed ground truth. Developers should ensure that the underlying verifiers are robust and subject to human auditing.
    \item \textbf{Dual-Use Concerns:} Improved agentic autonomy and tool-use capabilities, while beneficial for productivity, could be misused to automate malicious activities. However, PH’s primary contribution is the \textit{alignment} of internal belief with reality, which is a fundamentally defensive property aimed at reducing unintended failures rather than increasing raw harmful capabilities.
\end{itemize}

\end{document}